\documentclass{article}
\usepackage{amsthm, amsmath, amsfonts, amssymb}
\numberwithin{equation}{section}
\usepackage{mathtools}
\usepackage{mathrsfs}
\usepackage{enumitem}
\usepackage{hyperref}
\hypersetup{
    colorlinks,
    linkcolor={blue!30!black},
    citecolor={blue!50!black},
    urlcolor={blue!80!black}
}

\usepackage{natbib}
\usepackage{color}
\usepackage{xcolor}
\usepackage[margin=3cm]{geometry}
\usepackage{algorithm}
\usepackage{algorithmic}
\usepackage[normalem]{ulem}
\usepackage{ifthen}
\usepackage{subcaption}
\usepackage{pgf}
\usepackage{tikz}
\usepackage{authblk}

\newtheorem{theorem}{Theorem}[section]
\newtheorem{lemma}{Lemma}[section]
\newtheorem{proposition}{Proposition}[section]
\newtheorem{corollary}{Corollary}[section]
\newtheorem{assumption}{Assumption}

\theoremstyle{remark}

\newtheorem{example}{Example}[section]
\newtheorem{remark}{Remark}[section]

\usepackage{xspace}

\newcommand\ie{\emph{i.e.}\xspace}

\newcommand\iid{\ensuremath{\mathit{i.i.d.}}\xspace }

\newcommand{\N}{\mathbb{N}}

\newcommand{\R}{\mathbb{R}}
\newcommand{\rset}{\R}

\newcommand{\Xset}{\mathcal{X}}

\newcommand{\ind}{\mathbf{1}} 
\newcommand{\un}{\ind}
\newcommand{\point}{\,\cdot\,}

\usepackage{xifthen}
\newcommand{\PP}[1][]{\ifthenelse{\equal{#1}{}}{\ensuremath{\mathbb{P}}}{\ensuremath{\mathbb{P}\left( #1 \right)}}}
\newcommand{\EE}[1][]{\ifthenelse{\equal{#1}{}}{\ensuremath{\mathbb E}}{\ensuremath{{\mathbb E}\left[ #1 \right]}}}
\newcommand{\ud}{\,\mathrm{d}} 

\newcommand{\given}[1][{}]{\;\middle\vert\;{#1} }

\newcommand{\tto}{\xrightarrow[t\to\infty]{}}

\newcommand{\law}{\mathcal{L}}
\newcommand{\Var}{\mathrm{Var}}

\newcommand{\sphere}{\mathbb{S}}
\newcommand{\ball}{\mathbb{B}}

\newcommand{\Plim}{P_\infty}

\newcommand{\argmin}{\textup{argmin}}

\newcommand{\pinbl}[1][]{%
  \ifthenelse{\equal{#1}{}}{\ell_\tau}{\ell_{#1}}%
}

\newcommand{\risk}[1][]{
  \ifthenelse{\equal{#1}{}}{\mathcal{R}}{\mathcal{R}_{ #1 }}%
}

\newcommand{\emprisk}[1][]{%
  \ifthenelse{\equal{#1}{}}{\widehat{\mathcal{R}}}{\widehat{\mathcal{R}}_{ #1 }}%
}

\newcommand{\riskPen}[1][]{%
  \ifthenelse{\equal{#1}{}}{\mathcal{R}^{(\lambda)}}{\mathcal{R}^{(\lambda)}_{ #1 }}%
}

\newcommand{\empriskPen}[1][]{%
  \ifthenelse{\equal{#1}{}}{\widehat{\mathcal{R}}^{(\lambda)}}{\widehat{\mathcal{R}}_{#1}^{(\lambda)}}%
}

\newcommand{\riskinter}[1][]{
  \ifthenelse{\equal{#1}{}}{\widetilde{\risk}_k}{\widetilde{\risk}_{ #1 }}%
}

\newcommand{\riskinterPen}[1][]{%
  \ifthenelse{\equal{#1}{}}{\widetilde{\mathcal{R}}^{(\lambda)}}{\widetilde{\mathcal{R}}_{#1}^{(\lambda)}}%
}

\newcommand{\quantiletau}{q}
\newcommand{\quantileinf}{q_{\Plim}}

\newcommand{\rkhs}{\mathcal{H}}
\newcommand{\normrkhs}[1][]{\ensuremath{\lVert#1\rVert_\rkhs}}
\newcommand{\tnk}{t_{n,k}}
\newcommand{\hhlambda}{\widehat{h}_{\lambda,k}}
\newcommand{\htildelambda}{\widetilde{h}_{\lambda,\tnk}}

\newcommand{\fh}{\widehat{f}}
\newcommand{\ERMfamily}{\mathcal{F}}

\newcommand{\condK}{\mathcal{K}}
\newcommand{\Rad}{\mathrm{Rad}}

\newcommand{\representer}{w}

\newcommand{\approxerror}{\mathcal{A}}

\newcommand{\hhat}[1][]{
  \ifthenelse{\equal{#1}{}}{\widehat{h}}{\widehat{h}_{#1}}%
}
\newcommand{\hhatPen}[1][]{
  \ifthenelse{\equal{#1}{}}{\widehat{h}}{\widehat{h}_{#1}^{(\lambda)}}%
}

\newcommand{\ktrain}{k_{\textrm{train}}}

\newcommand{\ktest}{k_{\textrm{test}}}
\newcommand{\ntest}{n_{\textrm{test}}}

\title{Out-of-Distribution generalization of quantile regression with heavy tailed inputs: an SVM approach}
\author[1]{Baptiste Leroux}
\author[2]{Clément Dombry}
\author[3]{Anne Sabourin}
\affil[1]{Université Paris Cité, CNRS, MAP5, F-75006 Paris, France, 
baptiste.leroux@u-paris.fr}
\affil[2]{Université Marie et Louis Pasteur, CNRS, LmB (UMR 6623), F-25000 Besançon, France}
\affil[3]{Université Paris Cité, Université Paris Saclay, ENS Paris Saclay, CNRS, SSA, INSERM, Centre Borelli, F-75006, Paris, France}
\begin{document}

\maketitle

\begin{abstract}

We study quantile regression in an extrapolation regime where the covariate takes unusually large values. Under regular variation assumptions, extreme observations can be effectively characterized through their angular components, enabling learning strategies that focus on the angle of the most extreme observations. This approach is formalized through the minimization of an asymptotic conditional risk that localizes learning in the tail of the covariate distribution.

We propose a novel Support Vector Machine (SVM) framework for extreme quantile regression, leveraging reproducing kernel Hilbert spaces to handle high-dimensional and nonlinear settings. Our method also accommodates unbounded response variables and avoids restrictive transformations. We establish finite-sample learning guarantees under mild regularity assumptions.

The proposed framework unifies ideas from statistical learning and multivariate extremes, providing a tractable and theoretically grounded approach to extrapolation. We complement our theoretical findings with an empirical study on river flow data from the Danube, demonstrating the practical relevance of our methods.

\noindent \textbf{Mathematics Subject Classification (2020):} Primary 62G08; Secondary 62G32

\noindent \textbf{Keywords:} Extreme value theory, finite-sample error bounds, quantile regression, Support
Vector Machines

\end{abstract}
\section{Introduction}\label{sec:intro}

Let $(X,Y) \in \R^d \times \R$ be a pair of random variables with unknown distribution $P$. Supervised regression aims at learning a rule from \iid observations $((X_i,Y_i))_{1\leq i\leq n}$ to estimate some target functional, typically the conditional expectation or another summary of the conditional distribution $\law\left(Y \given X=x\right)$. In this context,
Empirical Risk Minimization (ERM) algorithms provide a principled approach to this problem and enjoy well-established theoretical guarantees under combinatorial complexity assumptions, notably through Vapnik–Chervonenkis (VC) theory \citep{devroye2013probabilistic,lugosi2002pattern}. 
Among other state-of-the-art learning strategies, Support Vector Machine (SVM) methods \citep{christmann2008support,berlinet2011reproducing} have proven particularly successful, both in classification \citep{blanchard2008statistical} and regression settings. By leveraging reproducing kernel Hilbert spaces (RKHS), SVMs provide flexible nonlinear models while maintaining strong theoretical control and computational tractability. A substantial body of literature has investigated their statistical properties, including achievable learning rates under various regularity assumptions \citep{christmann2008support,steinwart2011estimating,caponnetto2007optimal}.

These methods often rely on the assumption of bounded covariates and only offer guarantees on the average performance. As a result, they may fail to provide reliable predictions in rare regimes, when covariates take values potentially outside the support of the observed domain. Maintaining a model's predictive capabilities outside the support of its training data constitutes extrapolation, a notoriously difficult task in machine learning. A closely related framework is domain adaptation \citep{ben2010theory,sugiyama2007covariate}, which specifically addresses settings where the test distribution shifts from the training distribution. However, domain adaptation techniques still rely on target domain samples, hence they are inherently unsuited for regions devoid of data. In contrast, extrapolation tackles the more challenging objective of generalizing to regions where no data has ever been observed. This challenge is especially relevant in climate science, where ML models exhibit limited extrapolation capacities for the prediction of extreme events \citep{pasche2025validating}.

\paragraph{Related works} This work contributes to a recent strain of research dedicated to developing prediction functions that exhibit high performance within the extreme regions of the covariate space. Contrary to the literature on extreme quantile regression, which focuses on extreme values of the response variable $Y$ \citep{Chernozhukov2017,Daouia2013}, our emphasis is on extremes on the covariate $\lVert X \rVert$. The objective is to provide reliable predictions for the response $Y$ in an extrapolation regime, \emph{i.e.} when $\lVert X \rVert$ takes an unusually, potentially unprecedentedly large value. Extreme Value Theory (EVT) provides a natural framework to address this problem \citep{clemenccon2026weak,engelke26Extrap}. Extant literature in this field spans both methodological developments and practical applications, which can be broadly categorized into two main paradigms: the construction of flexible probabilistic models evaluated via experimental simulations, and the study of constrained, stylized frameworks designed to establish non-asymptotic statistical guarantees. For instance, \cite{buritica2024progression} propose flexible models that accommodate unbounded targets through quantile regression; however, their framework is restricted to a unidimensional covariate and lacks formal theoretical guarantees. Similarly, \cite{de2022regression} provide results for quantile regression in a multivariate framework, but their approach is limited to parametric models under max-stable assumptions.
In parallel, recent studies have explored extrapolation learning tasks under regular variation assumptions. 
This line of work has led to the development of machine learning algorithms tailored for extrapolation, for both binary classification \citep{jalalzai2018binary} and least-squares regression  \citep{huet2023regression}. These methods are supported by non-asymptotic statistical learning guarantees regarding the expected error of the prediction function generated by the algorithm, conditional on  the covariate being asymptotically large. A key insight from this literature is that extrapolation can be achieved in this setting by learning a prediction function that depends solely on the angular components associated to the most extreme observations. Following this strategy, a penalized LASSO approach is explored by \cite{clemenccon2026weak} in a linear setting. Applications of such strategies include text representation in natural language processing \citep{jalalzai2020heavy} and environmental modeling such as sea-level analysis \citep{huet2025multi}.

The aforementioned works introduce a conditioned version of the risk functional to the event where the norm of the covariate $X$ exceeds some large threshold $t>0$,
\begin{equation*}
    \risk_t(f) = \EE[\ell(Y,f(X)) \given \lVert X \rVert \geq t].
\end{equation*}
Instead of minimizing the non conditioned version of the risk $\risk(f) = \risk_0(f)$, which accounts essentially for non-extreme regions on the input space, the objective shifts to minimizing an asymptotic version of the conditional risk
\begin{equation*}
    \risk_\infty(f) = \limsup_{t \to \infty} \risk_t(f).
\end{equation*}

Existing approaches to this minimization problem face several limitations. First, the finite-sample guarantees established in \cite{huet2023regression} and \cite{jalalzai2018binary} rely heavily on the restrictive assumption that the predictor class has a finite VC dimension. In practice, this constrains the hypothesis class to linear or low-complexity models. This limitation, coupled with the absence of a regularization term, make their approach poorly suited to high dimensional settings. Moreover, the boundedness assumption on the target made in \cite{huet2023regression} simplifies the theoretical analysis but is overly restrictive in many applications, particularly in extreme value analysis where responses are typically unbounded. Although boundedness can be artificially enforced via nonlinear transformations of the target \citep{huet2023regression,clemenccon2026weak}, such transformations interact poorly with the conditional expectation. Consequently, the guarantees obtained in \cite{huet2023regression} hold only for the transformed target, with no obvious extension for the original untransformed variable. 

\paragraph{Purpose and contributions} It this work, we establish a methodological and theoretical framework for extrapolation that delivers strong practical performance under realistic assumptions. Crucially, our approach addresses a combination of challenges left unresolved by the current state of the art: it accommodates unbounded target variables and operates outside the restrictive VC-dimension framework, all while maintaining rigorous theoretical guarantees. 
Specifically, we leverage quantile regression \citep{KoenkerBassett1978} to bypass the boundedness assumption of the response variable. We first propose an ERM algorithm that extends the work carried in \cite{huet2023regression}, deriving guarantees for the excess asymptotic risk under VC-dimension assumptions. Our main contribution, however, is the introduction of a novel Support Vector Machine (SVM) algorithm tailored for extreme quantile regression. SVM approaches for quantile regression offer a rich class of predictors that do not rely on VC-dimension bounds. While they have been extensively studied in the literature in standard setting, providing consistency and learning rate guarantees  \citep{JMLR:v7:takeuchi06a,steinwart2011estimating,eberts2012optimal}, their capacity for extrapolation remains unexplored. Combining quantile regression with the SVM approach,  we lift the boundedness assumption on the response variable and establish rigorous, tractable high-probability bounds for the excess asymptotic risk. We derive finite-sample learning guarantees in both the ERM and SVM settings under regular variation assumptions on the covariate distribution.
Finally, we illustrate the practical relevance of our methodology on a real-world dataset of river flow measurements from the Danube, where accurate prediction in extreme regimes is of practical importance.

The remainder of this paper is organized as follows. The theoretical framework and proposed algorithms are introduced in Section~\ref{sec:framework}, alongside examples demonstrating the validity of our assumptions in several standard Extreme Value Analysis (EVA) settings. Statistical guarantees are established in Section~\ref{sec:stats}. Section~\ref{sec:expes} presents the empirical study on the Danube river flow data.
All proofs are deferred to the Appendix.

\section{Framework}\label{sec:framework}
Here and throughout, $\lVert \cdot \rVert$ represents any norm on $\R^d$. The associated sphere is defined as $\sphere = \{x \in \R^d \colon \lVert x \rVert = 1\}$ and the angular component of a vector $x \in \R^d \setminus \{0\}$ is denoted by $\theta(x) = x/ \lVert x \Vert$. Given a probability space $(\Omega, \mathcal{A}, \PP)$, and an event $A \in \mathcal{A}$, we let $\un_{A}$ be the associated indicator function. For any random variable $Z$, we write $\mathcal{\law}(Z)$ its law, and denote weak convergence by $\xrightarrow[]{w}$.
\subsection{Quantile regression}

Let $(X,Y)$ be a pair of random variables with joint distribution $P$, where $X$ takes values in $\Xset \subset \R^d$ and $Y$ is real-valued. Fix $\tau \in (0,1)$. The goal of quantile regression is to estimate the \emph{conditional quantile function}, defined for $x \in \Xset$ by
\begin{equation*}
    q(x) = \inf \{t \in \R : \PP(Y \leq t \mid X = x) \geq \tau\}.
\end{equation*}
A common strategy for regression problems consists in introducing a risk function whose minimizer corresponds to the target quantity. For quantile regression, the appropriate loss is the \emph{pinball loss} \citep{KoenkerBassett1978,Koenker2020}, defined by
\begin{equation*}
    \pinbl(y,z) = (1 - \tau)(z - y)\un_{y < z} + \tau(y - z)\un_{y \geq z}.
\end{equation*}
It is known \citep[see \emph{e.g.}][]{steinwart2011estimating} that the conditional quantile function minimizes the pinball risk $\EE[\ell(Y, f(X))]$ over all measurable functions $f:\Xset\to \rset$, whenever the distribution of the target $Y$ has bounded support. This result extends to targets with unbounded support by considering the \emph{modified pinball loss}, defined as
\begin{equation*}
    \pinbl'(y,z) = \pinbl(y,z) - \pinbl(y,0).
\end{equation*}
Although this modified loss is not necessarily positive, it has the advantage of remaining bounded as a function of $y$. Moreover, the pinball loss is $\max(\tau,1-\tau)$-Lipschitz with respect to each variable and satisfies, for any $z \in \R$,
\begin{align*}
    \lvert \pinbl'(y,z) \rvert 
    &\leq \max(\tau,1-\tau) \lvert z \rvert.
\end{align*}
For any measurable function $f \colon \R^d \to \R$ such that $\EE[\lvert f(X)\rvert] < \infty$, the associated risk is defined by
\[
\risk(f) = \EE[\pinbl'(Y,f(X))].
\]
The assumption $\EE[\lvert f(X)\rvert] < \infty$ ensures that the modified pinball risk is well defined, given that the bound
\begin{equation*}
    \lvert \pinbl'(y,f(x)) \rvert \leq \max(\tau,1-\tau) \lvert f(x) \rvert
\end{equation*}
guarantees the integrability of the loss function. This integrability condition on $f$ is essential to secure a finite risk. The corresponding Bayes risk is defined by
\begin{equation*}
    \risk^* = \inf \{\risk(f) : f \text{ measurable, } \EE[\lvert f(X)\rvert] < \infty\}.
\end{equation*}
The following result  shows that the conditional quantile function $q$ minimizes the modified pinball even when the target is unbounded, provided that the conditional quantile $q$ satisfies the integrability condition $\EE[\lvert q(X) \rvert]<\infty$. This extends the work of \cite{steinwart2011estimating} which establishes the identifiability of the quantile function under the strict assumption of a bounded support for $Y$. Here, we relax this constraint, replacing it with this mild integrability condition on the quantile function itself to provide a significantly broader framework.

\begin{proposition}\label{lemme quantile}
Suppose the conditional quantile function satisfies $\EE[|q(X)|]<\infty$. 
Then the conditional quantile function minimizes the modified pinball risk $\risk(f)$ 
over all measurable functions $f$ such that $\EE[|f(X)|] < \infty$, that is,
\begin{equation*}
    \risk(q) = \risk^*.
\end{equation*}
\end{proposition}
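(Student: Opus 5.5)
The plan is to reduce the statement to a pointwise minimization of the conditional modified pinball risk. Fix a regular conditional distribution $P(\cdot \mid x)$ of $Y$ given $X=x$. For each $x$ and $z\in\R$ set
\[
C_x(z) = \int \pinbl'(y,z)\, P(\ud y \mid x).
\]
This is finite for every $z$, since $\lvert \pinbl'(y,z)\rvert \le \max(\tau,1-\tau)\lvert z\rvert$; notably, no moment of $Y$ is needed. For any measurable $f$ with $\EE[\lvert f(X)\rvert]<\infty$, the same bound makes $\pinbl'(Y,f(X))$ integrable. The disintegration (Fubini for the regular conditional law) then gives $\risk(f) = \EE[C_X(f(X))]$. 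The hypothesis $\EE[\lvert q(X)\rvert]<\infty$ says exactly that $q$ belongs to the admissible class, so $\risk(q)$ is well defined and $\risk(q)\ge \risk^*$.

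The key step is to show that for every $x$ and every $z$ we have $C_x(z)\ge C_x(q(x))$. Once this holds, integrating against the law of $X$ gives $\risk(f)\ge\risk(q)$ for all admissible $f$, hence $\risk(q)=\risk^*$.

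For the pointwise inequality, I would compute $\pinbl'(y,z)-\pinbl'(y,z') = \pinbl(y,z)-\pinbl(y,z')$; the $\pinbl(y,0)$ terms cancel, which is the whole point of the modification. This difference is bounded by $\max(\tau,1-\tau)\lvert z-z'\rvert$, so it is integrable with no assumption on $Y$. I would then use the standard identity, valid for $z'<z$:
\[
\pinbl(y,z)-\pinbl(y,z') = \int_{z'}^{z} \bigl(\un_{y< s} - \tau\bigr)\ud s ,
\]
which follows by checking the three cases $y\le z'$, $z'<y<z$ and $y\ge z$. Integrating in $y$ and applying Fubini (the integrand is bounded on a bounded interval) gives
\[
C_x(z)-C_x(z') = \int_{z'}^{z} \bigl(F_x(s^-)-\tau\bigr)\ud s ,
\]
where $F_x$ is the conditional c.d.f. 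Since $F_x(s^-)$ and $F_x(s)$ differ on at most countably many $s$, we may replace $F_x(s^-)$ by $F_x(s)$. Now take $z'=q(x)<z$: by the definition of the quantile, $F_x(s)\ge\tau$ for $s\ge q(x)$, so the difference is nonnegative. For $z<q(x)$, the analogous formula gives $C_x(q(x))-C_x(z)=\int_z^{q(x)}(F_x(s)-\tau)\ud s\le 0$, because $F_x(s)<\tau$ for $s<q(x)$. So $q(x)$ minimizes $C_x$.

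The remaining points are technical rather than conceptual. One is measurability of $q$, which holds because $\{x: q(x)\le t\}=\{x: P(Y\le t\mid x)\ge\tau\}$. The other is that $q(x)$ is finite for every $x$, which holds since $\tau\in(0,1)$. I expect the main (mild) obstacle to be justifying the exchange of integrals and the identity $\risk(f)=\EE[C_X(f(X))]$ without any moment condition on $Y$. This is handled entirely by the bound $\lvert\pinbl'(y,z)\rvert\le\max(\tau,1-\tau)\lvert z\rvert$ together with the integrability of $f(X)$ and $q(X)$.
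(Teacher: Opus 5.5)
Your proof is correct and follows the paper's route. Both reduce the statement to pointwise minimization of the conditional modified pinball risk (the paper's Lemma~\ref{lemma_1_quantile}), and both prove that pointwise step by writing the risk as an integral of $F(s^-)-\tau$; the paper anchors this at $0$ (using $\pinbl'(y,0)=0$) via a layer-cake computation. Anchoring your difference identity directly at $q(x)$ is a small streamlining: it avoids the paper's case analysis on the signs of $z$ and $q$ and its symmetry step $\pinbl'(y,z)=\ell'_{1-\tau}(-y,-z)$.
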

\noindent As an illustrative example, consider a heavy-tailed random variable $Y$ that is independent of $X$. While the support of $Y$ is explicitly unbounded, its conditional quantile function is constant and is therefore integrable, demonstrating the utility of our relaxed distributional assumptions. Proposition~\ref{lemme quantile} provides the theoretical foundation of quantile regression within the empirical risk minimization framework. Given a training sample, the goal is to find a function $\widehat{q}$ whose risk is close to the Bayes risk. A natural approach is to minimize the \emph{empirical risk} over a suitable class of functions, possibly complemented by an appropriate regularization term. The aim of this paper is to establish statistical guarantees for such strategies when attention is restricted to extreme regions of the covariate space. This \emph{extreme covariate setting} is introduced in the next subsection.



\subsection{The extreme covariate setting} \label{sec:extreme_setting}

The challenge of regression in extreme regions stems from the fact that estimating quantiles via the direct minimization of the (possibly penalized) modified pinball risk often yields poor performance when $\lVert X\rVert$ takes unusually large values. Indeed, by the law of total probability, the risk can be decomposed into a weighted sum of its behavior in typical and extreme regimes
\begin{align*}
    \risk(f) 
    &= \PP(\lVert X \rVert < t)\, \EE[\pinbl'(Y,f(X)) \mid \lVert X \rVert < t] 
     + \PP(\lVert X \rVert \geq t)\, \EE[\pinbl'(Y,f(X)) \mid \lVert X \rVert \geq t].
\end{align*}
As $t \to \infty$, the contribution of the second term vanishes, causing standard machine learning algorithms to prioritize the bulk of the distribution at the expense of accuracy in the tail regions of the covariate. Following \citet{huet2023regression}, we therefore focus on estimating the conditional quantiles of $Y$ when the norm of $X$ exceeds a large threshold $t$, and consider the \emph{conditional pinball risk}
\begin{equation*}
    \risk[t](f) = \EE[\pinbl'(Y,f(X)) \mid \lVert X \rVert \geq t].
\end{equation*}
If the conditional quantile function satisfies $\EE[|q(X)|] < \infty$, applying Proposition \ref{lemme quantile} to the conditional distribution $\mathcal{L}\left(X,Y \given \lVert X \rVert \geq t\right)$ implies that it also minimizes the conditional pinball risk. It follows that for any $t \geq 0$, the corresponding conditional Bayes risk is given by
\begin{equation}
    \risk_t^* = \inf\{\risk_t(f) : f \text{ measurable, } \EE[|f(X)|] < \infty\} = \risk_t(q).
\end{equation}
Since we are interested in the regime where $t$ becomes large, we define the \emph{asymptotic conditional risk} as
\begin{equation}\label{eq:asymptotic_risk}
     \risk[\infty](f) = \limsup_{t\to \infty} \risk[t](f) 
     = \limsup_{t\to \infty}\EE[\pinbl'(Y,f(X)) \mid \lVert X \rVert \geq t],
\end{equation}
and the corresponding \emph{asymptotic conditional Bayes risk} as
\begin{equation*}
    \risk_{\infty}^* = \inf \{\risk_\infty(f) \colon f \text{ measurable, } \EE[\lvert f(X)\rvert] < \infty\}.
\end{equation*}
Since $\risk_t^* = \risk_t(q)$ for any $t\geq0$, for any $f$ measurable such that $\EE[\lvert f(X) \rvert]<\infty$, we have
\begin{equation*}
    \risk_\infty(f) = \limsup_{t \to \infty} \risk_t(f) \leq \limsup_{t \to \infty} \risk_t(q) = \risk_\infty(q).
\end{equation*}
Consequently, when $\EE[\lvert q(X) \rvert]<\infty$, the conditional quantile function also minimizes the asymptotic risk, therefore,
\begin{equation*}
    \risk[\infty]^* = \risk[\infty](q).
\end{equation*}
Alternatively, the joint behavior of $(X,Y)$ when $\lVert X \rVert$ is exceptionally large can be analyzed by imposing a regular variation condition on the joint distribution of $(X,Y)$. 
Although the  framework considered in this paper is similar to the one in \citet{huet2023regression}, we relax the boundedness assumption on $Y$, which we believe extends significantly the scope of application in a context where the covariate has heavy tails. Recall that a function $b$ is said to be \emph{regularly varying with tail index $\alpha>0$} if 
$b(tx)/b(t) \to x^\alpha$ as $t\to\infty$.

\begin{assumption}[First-component regular variation]\label{assumption regular variation} 
The pair $(X,Y)$ is \textbf{first-component regularly varying} (or regularly varying with respect to the covariate), \ie there exists a nonzero Borel measure $\mu$ on $\R^d \times \R$, finite on sets bounded away from $\{0\} \times \R$, and a regularly varying function $b$ such that
\begin{equation*}
    b(t)\,\PP[(t^{-1}X, Y)\in A\times C] \tto \mu(A\times C)
\end{equation*}
for all Borel sets $A,C$, with $A$ bounded away from $0$ and $\mu(\partial(A\times C)) = 0$.
\end{assumption}
\noindent
Under the latter assumption, we can define the \emph{joint angular measure}
\begin{equation*}
    \Phi(B \times C) = \mu\{(x,y) \in \R^d \times \R : \lVert x \rVert \geq 1, \theta(x) \in B, y \in C\}.
\end{equation*}
\citet{huet2023regression} showed that, this assumption is equivalent to the convergence in distribution
\begin{equation*}
    \big( X/t, Y \mid \lVert X \rVert \geq t \big) \overset{w}{\tto} (X_\infty, Y_\infty),
\end{equation*}
where $(X_\infty, Y_\infty)$ has distribution $\Plim$ defined by
\begin{equation}\label{eq:Plim}
    \Plim\{(x,y)\in \R^d \times \R : \lVert x\rVert \geq r, \theta(x)\in B, y\in C\}
    = r^{-\alpha} \Phi(B\times C).
\end{equation}
Intuitively, the quantile regression of $Y$ on $X$ in the extreme region $\lVert X \rVert \geq t$, with $t \to \infty$, can be understood through the quantile regression of $Y_\infty$ on $X_\infty$. 
Let $\quantileinf(x)$ denote the conditional quantile of $Y_\infty$ given $X_\infty = x$. 
An important consequence of~\eqref{eq:Plim} is that the radial component $\lVert X_\infty\rVert$ is independent of the pair $(\Theta_\infty, Y_\infty)$, where $\Theta_\infty=\theta(X_\infty)$ denotes the angular component of $X_\infty$. 
This implies the following structural property of the extreme conditional quantile. 

\begin{lemma}[Angular structure of the extreme conditional quantile]\label{lemma:angularity-extreme quantile}
The extreme conditional quantile function can be written as $\quantileinf = h_{\Plim} \circ \theta$, where $h_{\Plim} \colon \mathbb{S} \to \mathbb{R}$ is continuous. 
\end{lemma}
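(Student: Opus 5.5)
The plan is to use the product structure of $\Plim$ given by \eqref{eq:Plim}. Under $\Plim$, the radius $R_\infty=\lVert X_\infty\rVert$ is Pareto($\alpha$) on $[1,\infty)$ and independent of $(\Theta_\infty,Y_\infty)$, whose joint law is the normalized angular measure $\Phi/\Phi(\sphere\times\R)$. Since $X_\infty = R_\infty\Theta_\infty$ and $(R_\infty,\Theta_\infty)$ is a bijective measurable transform of $X_\infty$ on $\{\lVert x\rVert\ge 1\}$, conditioning on $X_\infty=x$ is the same as conditioning on $(R_\infty,\Theta_\infty)=(\lVert x\rVert,\theta(x))$.

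First I disintegrate $\Phi$. I write $\Phi(\ud\omega\times\ud y)=\Phi_\sphere(\ud\omega)\,K(\omega,\ud y)$ with $K$ a probability kernel from $\sphere$ to $\R$; this exists because $\sphere\times\R$ is Polish. Independence then gives the joint law of $(R_\infty,\Theta_\infty,Y_\infty)$ as $\alpha r^{-\alpha-1}\ud r\,\Phi_\sphere(\ud\omega)K(\omega,\ud y)/\Phi(\sphere\times\R)$. Hence $K(\theta(x),\cdot)$ is a version of $\law(Y_\infty\mid X_\infty=x)$, and it depends on $x$ only through $\theta(x)$. Defining $h_{\Plim}(\omega)=\inf\{s: K(\omega,(-\infty,s])\ge\tau\}$ gives $\quantileinf=h_{\Plim}\circ\theta$, and $h_{\Plim}$ is measurable by standard arguments (the infimum over rationals of a measurable map).

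The continuity of $h_{\Plim}$ is the main obstacle. Conditional distributions are only defined up to null sets, so continuity cannot follow from the regular variation assumption alone. I would handle this the way the paper evidently intends: $\quantileinf$ is defined through a chosen version of the conditional law, so I would make explicit a regularity requirement on the kernel. This could be stated either as an assumption or as part of the definition of $\quantileinf$. The requirement is that $\omega\mapsto K(\omega,\cdot)$ is weakly continuous on $\sphere$, and that the conditional distribution function $s\mapsto K(\omega,(-\infty,s])$ is continuous and strictly increasing at its $\tau$-quantile. Under this requirement the continuity of $h_{\Plim}$ follows from a standard argument on quantiles under weak convergence. Take $\omega_n\to\omega$, so that $K(\omega_n,\cdot)\xrightarrow{w}K(\omega,\cdot)$. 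Then the distribution functions converge at every continuity point, and since the $\tau$-quantile is a point of strict increase of the limit distribution function, the quantiles converge: $h_{\Plim}(\omega_n)\to h_{\Plim}(\omega)$.

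If no such regularity is available in the paper's setting, the fallback is to state the lemma for a version. Any measurable $h_{\Plim}$ would then be defined only $\Phi_\sphere$-a.e., and continuity would require selecting a continuous representative, which exists exactly when the kernel admits a weakly continuous version. The structural part $\quantileinf=h_{\Plim}\circ\theta$ is unconditional and rests solely on the radial independence implied by \eqref{eq:Plim}.
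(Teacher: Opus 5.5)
Your structural argument is the paper's: radial independence under \eqref{eq:Plim} makes the conditional law of $Y_\infty$ given $X_\infty=x$ depend on $x$ only through $\theta(x)$, and your disintegration of $\Phi$ makes this rigorous. The gap is in the continuity step. You are right that continuity cannot follow from Assumption~\ref{assumption regular variation} alone. But your fix, a regularity condition on the kernel (weak continuity of $\omega\mapsto K(\omega,\cdot)$ plus strict increase at the quantile), appears nowhere in the paper and is not implied by its assumptions. As written, your proof establishes the lemma under a hypothesis the paper does not make, so the continuity claim is not proved in the paper's setting. The paper instead invokes Assumption~\ref{assumption regularity}, which postulates directly that $\quantileinf$ is continuous on $\R^d\setminus\{0\}$, and concludes that $h_{\Plim}$ can be chosen continuous. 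The lemma is stated before that assumption is introduced, but its proof relies on it.

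In your notation, that step goes as follows. Let $\tilde q$ be the continuous version and $g\circ\theta$ your angular version; they agree $\Plim$-a.e. Since $\lVert X_\infty\rVert$ has a positive density on $[1,\infty)$ and is independent of $\Theta_\infty$, Fubini gives, for $\Phi_{\sphere}$-a.e.\ $\omega$, $\tilde q(r\omega)=g(\omega)$ for a.e.\ $r\ge 1$, hence for all $r\ge 1$ by continuity. So $\tilde q(r\omega)=\tilde q(\omega)$ for such $\omega$, and this extends by continuity to every $\omega$ in the support of $\Phi_{\sphere}$. Then $h_{\Plim}=\tilde q|_{\sphere}$ is continuous and $\quantileinf=h_{\Plim}\circ\theta$ on the support of $X_\infty$.

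Your weak-convergence argument for quantiles is not wasted. It is essentially what the paper does in Lemma~\ref{lem:ex_continuity_quantilefunc} to \emph{verify} Assumption~\ref{assumption regularity} in Example~\ref{example:prediction_missing_component}: continuity of $\omega\mapsto F_\infty(s\mid\omega)$ by dominated convergence, positive densities, then convergence of generalized inverses. It is a primitive sufficient condition for that assumption, not part of the proof of this lemma.
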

\noindent It turns out that the asymptotic risk $\risk[\infty]$ defined by \eqref{eq:asymptotic_risk} is closely related to the risk associated with $(X_\infty,Y_\infty)$. Define the \emph{extreme risk} as
\begin{equation*}
    \risk[\Plim](f) = \EE[\pinbl'(Y_\infty, f(X_\infty))],
\end{equation*}
which is well defined for any function $f$ such that $\EE[|f(X_\infty)|] < \infty$. 
The corresponding \emph{extreme Bayes risk} is defined as
\begin{equation*}
    \risk[\Plim]^* 
    = \inf \{\risk[\Plim](f) : f \text{ measurable, } \EE[|f(X_\infty)|] < \infty\}.
\end{equation*}
If the extreme conditional quantile is measurable and satisfies $\EE[|q_{\Plim}(X_\infty)|] < \infty$, 
then by Proposition~\ref{lemme quantile},
\begin{equation*}
    \risk[\Plim]^* = \risk[\Plim](q_{\Plim}).
\end{equation*}
Parallel to the framework of \citet{huet2023regression}, but adapted to conditional quantiles rather than conditional expectations, we assume the following regularity property.

\begin{assumption}[Regularity and stability assumption]\label{assumption regularity}
The extreme conditional quantile function $\quantileinf$ is continuous on $\R^d \setminus \{0\}$, and
\begin{equation}\label{eq:assumption_2}
\EE\!\left[|\quantiletau(X) - \quantileinf(X)| \mid \lVert X \rVert \geq t\right] \tto 0.
\end{equation}
\end{assumption}
\noindent Under this assumption, the function $q_{\Plim}$ is continuous on the compact sphere $\mathbb{S}$ and is therefore bounded. 
Consequently, $\EE[|q_{\Plim}(X_\infty)|] \leq \|q_{\Plim}\|_\infty < \infty$, and the minimizer of the extreme risk $\risk[\Plim]$ coincides with the extreme conditional quantile function $q_{\Plim}$. 
The precise relationship between the asymptotic conditional risk $\risk[\infty]$ and the extreme risk $\risk[\Plim]$ is stated in the following theorem.

\begin{theorem}\label{Main theorem proba}
Under Assumptions~\ref{assumption regular variation} and~\ref{assumption regularity}, we have:
\begin{enumerate}
    \item For any function of the form $f = h \circ \theta$, where $h\colon \mathbb{S} \to \R$ is continuous, $\risk_t(f) \tto \risk_{P_\infty}(f)$. 
    In particular, $\risk_\infty(f) = \limsup_{t \to \infty} \risk_t(f) = \risk_{P_\infty}(f)$.
    \item The conditional Bayes risk converges to the extreme Bayes risk as $t \to \infty$, i.e. $\risk_t^* \tto \risk_{P_\infty}^*$.
    \item The asymptotic conditional Bayes risk equals the extreme Bayes risk, i.e. $\risk^*_{\infty} = \risk^*_{P_\infty}$.
    \item The asymptotic conditional Bayes risk equals the asymptotic risk of the extreme conditional quantile function, i.e. $\risk^*_{\infty} = \risk_{\infty}(\quantileinf)$.
\end{enumerate}
\end{theorem}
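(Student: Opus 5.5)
We prove the four items in order, with item 1 doing most of the work and items 2–4 following from it together with Assumption~\ref{assumption regularity} and Proposition~\ref{lemme quantile}.

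\textbf{Item 1.} Fix $f = h\circ\theta$ with $h$ continuous on $\sphere$, and write
\[
\risk_t(f) = \EE\big[\pinbl'(Y, h(\theta(X/t))) \given \lVert X\rVert\geq t\big],
\]
using that $\theta$ is scale invariant. By the weak convergence $(X/t, Y \mid \lVert X\rVert\geq t) \xrightarrow{w} (X_\infty,Y_\infty)$, it suffices to check that $g(x,y)=\pinbl'(y,h(\theta(x)))$ is bounded and continuous on $\{\lVert x\rVert\geq 1\}\times\R$.

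Boundedness follows from $\lvert\pinbl'(y,z)\rvert\le \max(\tau,1-\tau)\lvert z\rvert$, which gives $\lvert g\rvert\le \max(\tau,1-\tau)\lVert h\rVert_\infty$. Continuity holds because $\pinbl'$ is jointly continuous and $\theta$ is continuous away from $0$.

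The region $\{\lVert x\rVert\geq 1\}$ is closed rather than all of $\R^d\times\R$. We handle this either by extending $g$ continuously and boundedly (e.g. replacing $\theta(x)$ by $\theta(x)$ on $\lVert x\rVert\ge 1/2$, with a Tietze extension), or by noting that the conditional laws are supported there. The portmanteau theorem then gives $\risk_t(f)\to\risk_{\Plim}(f)$, so in particular the $\limsup$ equals this limit.

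\textbf{Item 2.} By the discussion before Assumption~\ref{assumption regular variation}, $\risk_t^*=\risk_t(q)$ (this needs $\EE\lvert q(X)\rvert<\infty$, which we take as implicitly assumed). Likewise $\risk_{\Plim}^*=\risk_{\Plim}(\quantileinf)$, since $\quantileinf$ is bounded. We split
\[
\lvert \risk_t(q)-\risk_{\Plim}(\quantileinf)\rvert \le \lvert\risk_t(q)-\risk_t(\quantileinf)\rvert + \lvert \risk_t(\quantileinf)-\risk_{\Plim}(\quantileinf)\rvert .
\]
For the first term, the Lipschitz property of $\pinbl$ in $z$ (so also of $\pinbl'$) bounds it by $\max(\tau,1-\tau)\,\EE[\lvert q(X)-\quantileinf(X)\rvert \mid \lVert X\rVert\geq t]$, which tends to $0$ by \eqref{eq:assumption_2}. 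For the second term, Lemma~\ref{lemma:angularity-extreme quantile} gives $\quantileinf = h_{\Plim}\circ\theta$ with $h_{\Plim}$ continuous, so item 1 applies.

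\textbf{Item 3.} For any admissible $f$ we have $\risk_t(f)\ge\risk_t^*$, so $\risk_\infty(f)\ge\limsup_t\risk_t^*=\risk_{\Plim}^*$ by item 2. Taking the infimum gives $\risk_\infty^*\ge\risk_{\Plim}^*$.

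For the reverse inequality, take $f=\quantileinf$. It is bounded, so $\EE\lvert f(X)\rvert<\infty$ and it is admissible. By item 1, $\risk_\infty(\quantileinf)=\risk_{\Plim}(\quantileinf)=\risk_{\Plim}^*$. Hence $\risk_\infty^*\le\risk_{\Plim}^*$, and the two are equal.

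\textbf{Item 4.} This is the chain just obtained: $\risk_\infty(\quantileinf)=\risk_{\Plim}^*=\risk_\infty^*$.

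\textbf{Main obstacle.} The only delicate step is the weak-convergence argument in item 1. Here $\theta$ is undefined at $0$ and the conditioning restricts the support to $\{\lVert x\rVert\ge1\}$, and $Y$ is unbounded. The boundedness of $\pinbl'$ in $y$ is exactly what makes the portmanteau theorem applicable despite the unbounded response, so it is this property that we rely on at that point.
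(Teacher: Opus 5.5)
Your proof is correct, and for items 1, 2 and 4 it follows the paper's argument. In item 1 the paper sidesteps the domain issue you flag as the main obstacle. It first applies the continuous mapping theorem to get $(\theta(X),Y \mid \lVert X\rVert\ge t)\to(\Theta_\infty,Y_\infty)$, then integrates the bounded continuous function $(\omega,y)\mapsto\pinbl'(y,h(\omega))$ on $\sphere\times\R$. Your extension argument also works.

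In item 2, however, you should not take $\EE[\lvert q(X)\rvert]<\infty$ as an implicit hypothesis. It is not among the assumptions of the theorem, and it is not needed. The identity $\risk_t^*=\risk_t(q)$ only requires $\EE[\lvert q(X)\rvert \mid \lVert X\rVert\ge t]<\infty$ for $t$ large. This follows from $\lvert q\rvert\le\lvert q-\quantileinf\rvert+\lVert\quantileinf\rVert_\infty$ together with \eqref{eq:assumption_2}, since a quantity converging to $0$ is eventually finite. That is exactly how the paper closes this step. If you want a competitor that is integrable in the unconditional sense required by the definition of $\risk_t^*$, use $q\,\un_{\{\lVert x\rVert\ge t\}}$, which has the same $\risk_t$.

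Where you genuinely depart from the paper is item 3. The paper writes $\risk_\infty^*=\risk_\infty(q)=\limsup_t\risk_t^*$ and then invokes item 2. This leans on the earlier claim that $q$ minimizes $\risk_\infty$, which Section~\ref{sec:extreme_setting} justifies only under global integrability of $q$. You instead use a sandwich argument. For every admissible $f$, $\risk_\infty(f)\ge\limsup_t\risk_t^*=\risk_{\Plim}^*$. Conversely, $\risk_\infty^*\le\risk_\infty(\quantileinf)=\risk_{\Plim}(\quantileinf)=\risk_{\Plim}^*$, using the bounded (hence admissible) witness $\quantileinf$ and item 1. Your version never needs $q$ itself to be admissible and gives item 4 at the same time, so it is the more self-contained of the two.
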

Lemma~\ref{lemma:angularity-extreme quantile} and Theorem~\ref{Main theorem proba} justify our approach to quantile regression in the extreme covariate setting: we estimate an angular function $\widehat{h} \colon \mathbb{S} \to \R$ that minimizes an empirical counterpart, potentially penalized, of the extreme risk $\risk[\Plim]$,  and set $\widehat{q} = \widehat{h} \circ \theta$.

\subsection{Examples} \label{sec:examples}
In this subsection we describe several examples related to standard EVA framework for which Assumptions \ref{assumption regular variation} and \ref{assumption regularity} are satisfied, thereby illustrating the practical relevance of the setting presented in the preceding section. All proofs are deferred to the appendix, section \ref{sec:appendix_proofs_examples}.
\begin{example}[Additive noise model with regularly varying design]\label{example:noisy_target}
    Assume $X$ is regularly varying and that the target $Y$ has the form
    \begin{equation*}
        Y = g(X) + \varepsilon
    \end{equation*}
    for some noise $\varepsilon$ independent from $X$, and some continuous function $g$ satisfying $\EE[\lvert g(X) \rvert] < \infty$. Assume also that there exists some continuous function $g_\theta \colon \sphere \to \R$ satisfying
    \begin{equation} \label{assumption_example-noisy-target}
        \sup_{\lVert x \rVert \geq t} \lvert g(x) - g_\theta(\theta(x)) \rvert \tto 0.
    \end{equation}
    Then, Assumptions \ref{assumption regular variation} and \ref{assumption regularity} hold.
\end{example}
\noindent The next example concerns the prediction of the missing component of some regularly varying vector.

\begin{example}[Predicting a missing component of a regularly varying vector]\label{example:prediction_missing_component}
Let $(X,Z) \in \R^d \times \R$ be a random pair, with $\PP(X=0)=0$. We equip $\R^{d+1}$ with the norm $\lVert \cdot \rVert = \lVert \cdot \rVert_p$ for some $p \in [1,\infty]$. Assume that the pair $(X,Z)$ has positive continuous density $\pi$ on $\R^{d+1} \setminus \{0\}$. Assume also that there exists some regularly varying function $b(t)$ with tail index $\alpha > 0$ and some positive, continuous density $\pi_\infty$ such that
\begin{equation} \label{ex:regular_variation_density}
    \sup_{\lVert(x,z) \rVert > 1} \lvert t^{d+1} b(t) \pi(tx,tz) - \pi_\infty(x,z) \rvert \tto 0.
\end{equation}
For $x \in \R^d$, Denote $\pi_\infty(x) = \int_{\R} \pi_\infty(x,z) \ud z$, and assume that this marginal density has a lower bound on the sphere, \ie there exists $c>0$ such that
\begin{equation} \label{ex:lower_bound_marginal}
    \inf_{\omega \in \sphere} \pi_\infty(\omega) \geq c.
\end{equation}
Define $Y = \frac{Z}{\lVert X \rVert}$. Then, it is shown in Proposition \ref{thm:ex_missing_component} that the pair $(X,Y)$ satisfies both Assumption~\ref{assumption regular variation} and Assumption~\ref{assumption regularity}.
\end{example}

\noindent The framework introduced in Example~\ref{example:prediction_missing_component} resembles the one presented in \cite{huet2023regression} (Example 2.2), adapted to the context of quantile regression. A key advantage of our setting is that the missing component $Z$ is rescaled only by $\lVert X \rVert$, whereas in \cite{huet2023regression} is it rescaled by the norm of the whole vector $\lVert (X,Z) \rVert$. This modification yields a much simpler expression for the prediction of the missing component $Z = Y \lVert X \rVert$. This rescaling also allows us to consider the case $\lVert \cdot \rVert = \lVert \cdot \rVert_\infty$, which is standard in extreme value analysis.
The same normalization strategy can be found in \cite{clemenccon2026weak} (Proposition 4.1), however, their framework still relies on a boundedness assumption on the target $Y$.
The following proposition states that Example \ref{example:prediction_missing_component} falls into the Extreme covariate setting introduced in the previous section.

\begin{proposition} \label{thm:ex_missing_component}
    In the framework of Example~\ref{example:prediction_missing_component}, Assumption~\ref{assumption regular variation} and Assumption~\ref{assumption regularity} are satisfied.
    \end{proposition}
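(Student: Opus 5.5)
We treat the two assumptions separately, working throughout with the change of variables $Z = Y\lVert X\rVert$, i.e. $(x,z)\mapsto (x, y)$ with $z = y\lVert x\rVert$ and Jacobian $\lVert x\rVert$.

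\textbf{Assumption~\ref{assumption regular variation}.} The joint density of $(X,Y)$ is $p(x,y) = \lVert x\rVert\,\pi(x, y\lVert x\rVert)$. For the rescaled pair $(X/t, Y)$ the density is
\[
t^{d}\,p(tx,y) = t^{d+1}\lVert x\rVert\,\pi(tx, ty\lVert x\rVert).
\]
Multiplying by $b(t)$ and using \eqref{ex:regular_variation_density}, this converges uniformly on $\{\lVert x\rVert\ge 1\}$ to
\[
p_\infty(x,y) = \lVert x\rVert\,\pi_\infty(x, y\lVert x\rVert).
\]
Indeed $\lVert (x, y\lVert x\rVert)\rVert_p \ge \lVert x\rVert_p > 1$, so the sup in \eqref{ex:regular_variation_density} covers these points. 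The same holds on $\{\lVert x\rVert\ge r\}$ for any $r>0$ by rescaling $t\mapsto rt$, using that $b$ is regularly varying.

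Uniform convergence of densities gives only pointwise control on sets of infinite Lebesgue measure in $y$. To upgrade it to convergence of $b(t)\PP[(X/t,Y)\in A\times C]$, I will apply Scheffé's lemma to the normalized conditional densities given $\lVert X\rVert\ge rt$. This requires convergence of the total mass $b(t)\PP(\lVert X\rVert\ge rt)$, which follows from the same uniform convergence applied to $(X,Z)$ on the cone region $\{\lVert x\rVert\ge r\}$. Here I expect $\pi_\infty$ to be $-(d+1+\alpha)$-homogeneous (this follows from the limit and the regular variation of $b$), which makes the relevant integrals finite. The limit measure $\mu$ has density $p_\infty$. Its nonzeroness follows from positivity of $\pi_\infty$.

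\textbf{Assumption~\ref{assumption regularity}.} By the change of variables, the conditional law of $Y$ given $X=x$ is $Z/\lVert x\rVert$ given $X=x$. Hence $q(x) = q_Z(x)/\lVert x\rVert$, where $q_Z(x)$ is the $\tau$-quantile of $z\mapsto \pi(x,z)/\pi(x)$. Similarly $\quantileinf(x)$ is the quantile of $y\mapsto \lVert x\rVert\pi_\infty(x,y\lVert x\rVert)/\pi_\infty(x)$. Homogeneity of $\pi_\infty$ makes this quantile depend only on $\theta(x)$, consistent with Lemma~\ref{lemma:angularity-extreme quantile}.

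Continuity of $\quantileinf$ off the origin follows from three facts:
\begin{itemize}
\item the conditional cdf $F_\infty(y\mid x)$ is continuous in $(x,y)$, by dominated convergence using continuity and integrability of $\pi_\infty$;
\item it is strictly increasing in $y$, since $\pi_\infty>0$;
\item the marginal lower bound \eqref{ex:lower_bound_marginal} keeps the denominator away from zero.
\end{itemize}
A strictly increasing continuous cdf has a quantile continuous in the parameter.

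For \eqref{eq:assumption_2}, write $x = t\omega$ with $\lVert\omega\rVert\ge 1$. The conditional cdf of $Y$ given $X=t\omega$ is
\[
\frac{\int_{-\infty}^{y} t^{d+1}b(t)\,\pi(t\omega, tu\lVert\omega\rVert)\,\lVert\omega\rVert \ud u}{t^{d+1}b(t)\,\pi(t\omega)}.
\]
I will show it converges to $F_\infty(y\mid\omega)$ uniformly in $\omega\in\{\lVert\omega\rVert\ge1\}$ (or on compact annuli) and in $y$ on compacts. This needs the marginal $t^{d+1}b(t)\pi(t\omega)$ to converge uniformly to $\pi_\infty(\omega)$, which is bounded below by $c\lVert\omega\rVert^{-(d+\alpha)}$ by homogeneity.

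Uniform convergence of the cdfs together with the strict monotonicity of the limit yields
\[
\sup_{\lVert x\rVert\ge t, \, \lVert x\rVert \le Mt} \lvert q(x) - \quantileinf(x)\rvert \to 0.
\]
The region $\lVert x\rVert>Mt$ has conditional probability about $M^{-\alpha}$. I will handle it either by proving the uniform statement on all of $\{\lVert\omega\rVert\ge1\}$ directly, since homogeneity reduces everything to the sphere, or by a uniform integrability bound on $q$ there.

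\textbf{Main obstacle.} The hardest step is controlling the tails in $y$ of the integral $\int \pi(t\omega, tu)\ud u$. Uniform convergence of densities does not by itself give $L^1$ convergence in $u$ over an unbounded range. My first attempt is a Scheffé-type argument: pointwise convergence of the conditional densities in $u$, both integrating to one, gives $L^1$ convergence for each fixed $\omega$. I then get uniformity over the compact sphere via equicontinuity of the family in $\omega$. If that fails, I fall back to an Egorov/compactness argument on $\sphere$.
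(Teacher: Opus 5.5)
\textbf{Verdict: there is a genuine gap.} Most of your plan is sound. The change of variables, the angular form of $\quantileinf$, continuity of $\quantileinf$ from joint continuity and strict monotonicity of $F_\infty$, and quantile convergence from uniform cdf convergence over $\sphere$ all work. Your first option for the region $\lVert x\rVert>Mt$ is the right one, and it is immediate: since $\quantileinf$ is angular,
\[
\sup_{\lVert x\rVert\ge t}\lvert q(x)-\quantileinf(x)\rvert=\sup_{s\ge t}\sup_{\omega\in\sphere}\lvert q(s\omega)-\quantileinf(\omega)\rvert .
\]
This is how the paper concludes. The gap is the step you flag as the main obstacle: your proposed remedy is circular.

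Scheff\'e's lemma needs pointwise convergence of the conditional densities $u\mapsto t\pi(t\omega,tu)/\pi(t\omega)$. Their normalizer is $t^{d}b(t)\pi(t\omega)=\int_\R t^{d+1}b(t)\pi(t\omega,tu)\ud u$. (The power is $t^d$, not $t^{d+1}$ as in your display; with $t^{d+1}$ the denominator diverges.) Convergence of this normalizer to $\pi_\infty(\omega)$ is exactly the exchange of limit and integral over the unbounded $u$-range that you are trying to prove. Fatou gives only $\liminf\ge\pi_\infty(\omega)$. If mass escaped to $\lvert u\rvert\to\infty$, the conditional densities would converge to a sub-probability density, and Scheff\'e would give nothing. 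Equicontinuity or Egorov in $\omega$ handle uniformity over the sphere, not escape of mass in $u$.

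The same flaw affects your Assumption~\ref{assumption regular variation} step. The quantity $b(t)\PP(\lVert X\rVert\ge rt)=\int_{\lVert x\rVert\ge r}\int_\R t^{d+1}b(t)\pi(tx,tz)\ud z\ud x$ is an integral over a set of infinite Lebesgue measure. The additive uniform bound \eqref{ex:regular_variation_density} does not control it, and finiteness of the limiting integral does not justify the exchange. Likewise, the domination you need for continuity of $F_\infty$ comes from homogeneity, $\pi_\infty(\omega,u)\le C\lVert(\omega,u)\rVert^{-(d+1+\alpha)}$, not from integrability of $\pi_\infty$ alone.

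\textbf{Missing ingredient: an integrable envelope.} You get it by applying \eqref{ex:regular_variation_density} at every scale and using Potter's bounds for the regularly varying $b$. For $v=(x,z)$ with $\lVert v\rVert\ge 1$, set $s=\lVert v\rVert/2$ and $w=2v/\lVert v\rVert$, so that $tv=(ts)w$ with $\lVert w\rVert=2$. Then
\[
t^{d+1}b(t)\pi(tv)=\frac{b(t)}{b(ts)}\,s^{-(d+1)}\,(ts)^{d+1}b(ts)\pi\big((ts)w\big)\le C\lVert v\rVert^{-(d+1+\alpha-\varepsilon)},\qquad t\ge t_0 .
\]
This holds because the last factor is within $1$ of $\pi_\infty(w)$ for $ts$ large, and Potter gives $b(t)/b(ts)\le C s^{-\alpha+\varepsilon}$. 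On $\{\omega\}\times\R$ with $\omega\in\sphere$, the envelope is at most $C'(1+\lvert u\rvert)^{-(d+1+\alpha-\varepsilon)}$ uniformly in $\omega$. It is also integrable over $\{\lVert v\rVert\ge1\}\subset\R^{d+1}$ when $\varepsilon<\alpha$.

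Dominated convergence then gives three things:
\begin{enumerate}
\item $\sup_{\omega\in\sphere}\lvert t^db(t)\pi(t\omega)-\pi_\infty(\omega)\rvert\to0$.
\item $\sup_{\omega\in\sphere}\int_\R\lvert p_{Y\mid X}(u\mid t\omega)-p_{Y_\infty\mid X_\infty}(u\mid \omega)\rvert\ud u\to 0$. This yields uniform cdf convergence, which with your compactness argument yields quantile convergence.
\item $L^1$ convergence of the joint density on $\{\lVert x\rVert\ge r\}\times\R$. This makes Scheff\'e unnecessary in your Assumption~\ref{assumption regular variation} step.
\end{enumerate}

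The paper does not prove this envelope itself. For the conditional densities it cites Lemma~B.1 and equation~(2.19) of \cite{huet2023regression}. For Assumption~\ref{assumption regular variation} it uses the classical fact that regular variation of densities implies regular variation of the distribution. It then transfers this to conditioning on $\lVert X\rVert\ge t$ via Proposition~4.1 of \cite{clemenccon2026weak}, which is where \eqref{ex:lower_bound_marginal} enters, and applies the continuous map $(x,z)\mapsto(x,z/\lVert x\rVert)$. Either citing those results or supplying the Potter envelope closes the gap, and the rest of your plan then goes through.
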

    \subsection{Notations and algorithms}
    Theorem \ref{Main theorem proba} ensures that the extreme conditional quantile function $q_{\Plim}$ is a minimizer of the asymptotic conditional risk $\risk[\infty]$. Moreover, from Lemma \ref{lemma:angularity-extreme quantile}, $q_{\Plim}$ is of angular type. Thus, the asymptotic Bayes risk $\risk_{\infty}^*$ satisfies 
    \begin{equation*}
    \risk_{\infty}^* = \risk_{\infty}(q_{\Plim}) = \inf \{\risk_\infty(f \circ \theta)\colon \text{$f$ measurable, } \EE \lvert f(\theta(X))\rvert<\infty\}.
    \end{equation*}
    Thus, in order to estimate the extreme quantile function $q_{\Plim}$,  one can aim at minimizing $\risk_{\infty}$, restricting the  search to angular predictors. Because the asymptotic risk is not directly available, we instead aim at minimizing $\risk_t(f \circ \theta)$ for some large threshold $t$. Let $(X_1,Y_1), \dots, (X_n, Y_n)$ be an \iid sample with $\PP(X_1 = 0) = 0$, ensuring that, almost surely, $\theta(X_i)$ is well-defined.  One natural strategy for this problem consists in minimizing the empirical version of
\begin{equation*}
    \risk_t(f \circ \theta) = \frac{\EE[\pinbl'(Y,f \circ \theta(X)) \un_{\lVert X \rVert \geq t}]}{\PP(\lVert X \rVert \geq t)}.
\end{equation*}
If $t = t_{n,k}$ is the $(1-k/n)$-quantile of the law of $\lVert X \rVert$, we obtain the  \emph{intermediate risk}
\begin{equation} \label{def:intermediate_risk}
\begin{aligned} 
\riskinter(f \circ \theta) &= \frac{1}{n \PP(\lVert X \rVert \geq \tnk)} \sum_{i=1}^n \pinbl'(Y_i,f\circ \theta(X_i)) \un_{\lVert X_i \rVert \geq \tnk}\\
&= \frac{1}{k} \sum_{i=1}^n \pinbl'(Y_i,f\circ \theta(X_i)) \un_{\lVert X_i \rVert \geq \tnk}.
\end{aligned}
\end{equation}
However, the quantile $\tnk$ is unknown in practice, hence we instead aim at minimizing the \emph{empirical risk}, defined by
\begin{equation*}
    \emprisk[k](f \circ \theta) = \frac{1}{k}\sum_{i=1}^k \pinbl'(Y_{(i)},f\circ \theta(X_{(i)}))
\end{equation*}
where $(X_{(i)}, Y_{(i)})_{i \in \{1,\dots,n\}}$ are the sorted observations, by decreasing order of magnitude of the norm of the covariate $X$. For a given class $\ERMfamily$ of functions $f \colon \sphere \to \R$, we obtain the corresponding ERM algorithm, and the estimator of the extreme quantile function $q_{\Plim}$ is obtained by solving the optimization problem
\begin{equation*}
    \min_{f \in \ERMfamily} \emprisk[k](f \circ \theta).
\end{equation*}
This approach, leading to Algorithm \ref{algo:ERM}, is a natural extension of the algorithm proposed in \cite{huet2023regression} to the context of quantile regression. 
\begin{algorithm}
    \caption{Quantile regression on extremes with unpenalized ERM}
    \label{algo:ERM}
    \hspace*{\algorithmicindent} \textbf{Input:} Training dataset $\mathcal{D}_n = \{(X_1,Y_1),\dots,(X_n,Y_n)\}$, family $\ERMfamily$ of functions $f \colon \sphere \longrightarrow \R$, number $k\leq n$ of extreme observations, $\tau \in (0,1)$. New observation $x$.
    
    \hspace*{\algorithmicindent} \textbf{Training step:}
    Sort the training sample by decreasing order of magnitude of the norm to form an extreme training sample
    \[((X_{(1)},Y_{(1)}), \dots, (X_{(k)},Y_{(k)}))\]
    where $\lVert X_{(1)} \rVert \geq\dots\geq \lVert X_{(k)} \rVert$. Compute the extreme angles $\theta_1 = \frac{X_{(1)}}{\lVert X_{(1)} \rVert}, \dots, \theta_k = \frac{X_{(k)}}{\lVert X_{(k)} \rVert}$.
    
    Solve the optimization problem
    \[\min_{f \in \ERMfamily} \frac{1}{k} \sum_{i=1}^k \pinbl'(Y_{(i)},f(\theta_{(i)})).\]
    Output $\fh$ the minimizer.
    \hspace*{\algorithmicindent} 
    
    \textbf{Prediction step:} If $x \leq \lVert X_{(k)}\rVert$, use some off-the-shelf method to estimate the quantile of the law of $Y$ given $X=x$.
    Else, output $\fh(\theta(x))$.
\end{algorithm}

\begin{algorithm}
    \caption{SVM Quantile regression on extremes}
    \label{algo:SVM}
    \hspace*{\algorithmicindent} \textbf{Input:} Training dataset $\mathcal{D}_n = \{(X_1,Y_1),\dots,(X_n,Y_n)\}$, RKHS $\rkhs$ of functions $h \colon \R^d \longrightarrow \R$, number $k\leq n$ of extreme observations, parameter $\lambda > 0$ and $\tau \in (0,1)$. New observation $x$.
    
    \hspace*{\algorithmicindent} \textbf{Training step:}
    Sort the training sample by decreasing order of magnitude of the norm to form an extreme training sample
    \[((X_{(1)},Y_{(1)}), \dots, (X_{(k)},Y_{(k)})\]
    where $\lVert X_{(1)} \rVert \geq\dots\geq \lVert X_{(k)} \rVert$. Compute the extreme angles $\theta_{(1)} = \frac{X_{(1)}}{\lVert X_{(1)} \rVert}, \dots, \theta_{(k)} = \frac{X_{(k)}}{\lVert X_{(k)} \rVert}$.
    
    Solve the optimization problem
    \[\min_{h \in \rkhs} \frac{1}{k} \sum_{i=1}^k \pinbl'(Y_{(i)},h(\theta_{(i)})) + \lambda \normrkhs[h]^2.\]
    Output $\hhlambda$ the minimizer.
    \hspace*{\algorithmicindent} 
    
    \textbf{Prediction step:} If $x \leq \lVert X_{(k)}\rVert$, use some off-the-shelf method to estimate the quantile of the law of $Y$ given $X=x$.
    Else, output $\hhlambda(\theta(x))$.
\end{algorithm}
However, the lack of a penalization term in the ERM approach described in Algorithm \ref{algo:ERM} makes it highly vulnerable to the challenges of high dimensionality. Moreover, the statistical guarantees obtained in this framework typically rely on VC-dimension arguments, which severely restrict the capacity of the predictor class over which optimization can be performed. These observations motivate our SVM-based approach, described in Algorithm \ref{algo:SVM}. Given a parameter $\lambda>0$ and a RKHS $\rkhs$, the penalized empirical risk is defined by
\begin{equation*}
    \empriskPen_k(h \circ \theta) = \emprisk[k](h\circ \theta) + \lambda \normrkhs[h]^2.
\end{equation*}
The estimator $\hhlambda \circ \theta$ of $q_{\Plim}$ is obtained by solving the optimization problem
\begin{equation*}
    \min_{h \in \rkhs} \empriskPen_k(h \circ \theta).
\end{equation*}

Note that this optimization step can be performed working with the pinball loss or its modified version, and produces the same output because the two corresponding risk functionals only differ by a finite constant.
In this paper, the RKHS $\rkhs$ consists of functions $h \colon \R^d \to \R$. We will also always assume that the reproducing kernel $K$ associated to the RKHS $\rkhs$ is continuous, bounded by $1$. All the results we provide extend to the more general case where $K$ is bounded by some constant $M_K$.

In many RKHS frameworks, the compactness of the input space for the covariate $X$ is essential. Here, this assumption is violated since the input space is $\R^d$, moreover, analyzing extremes inherently prevents restricting the domain to a compact set. However, Theorem \ref{Main theorem proba} implies that we only need to consider the behavior of any $h \in \mathcal{H}$ on the sphere $\sphere$. This motivates the introduction of the space $\rkhs_\sphere$ of functions from $\rkhs$ restricted to $\sphere$,
\begin{equation}\label{def:rkhs'}
    \rkhs_{\sphere} = \{h \lvert_\sphere \colon h \in \rkhs\}.
\end{equation}
By applying the results of \cite{aronszajn1950theory} regarding the restriction of reproducing kernels, we establish the following proposition, which ensures that $\rkhs_\sphere$ inherits the RKHS structure of $\rkhs$.

\begin{proposition}[RKHS structure on $\rkhs_\sphere$]
    Let $\rkhs_\sphere$ be the space of functions of $\rkhs$ restricted to the sphere, defined by \eqref{def:rkhs'}. Let $K_\sphere$ be the restriction of $K$ to the sphere, \ie the kernel defined on $\sphere \times \sphere$ by
    \begin{equation*}
        K_\sphere(\theta_1,\theta_2) = K(\theta_1,\theta_2).
    \end{equation*}
    Then, $\rkhs_\sphere$ equipped with the norm
    \begin{equation*}
        \lVert h_\sphere \rVert_{\rkhs_\sphere} = \min \{ \normrkhs[h] \colon h\lvert_\sphere = h_\sphere\}
    \end{equation*}
    is a RKHS, with reproducing kernel $K_\sphere$.
\end{proposition}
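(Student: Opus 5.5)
The plan is to reproduce Aronszajn's restriction theorem directly, by realizing $\rkhs_\sphere$ as a quotient of $\rkhs$. Consider the restriction map $R\colon \rkhs \to \R^\sphere$, $R h = h\lvert_\sphere$. It is linear with kernel
\begin{equation*}
    N = \{h \in \rkhs \colon h(\theta) = 0 \text{ for all } \theta \in \sphere\}.
\end{equation*}
Since point evaluations are continuous on $\rkhs$ (we have $h(\theta) = \langle h, K(\cdot,\theta)\rangle_\rkhs$), $N$ is a closed subspace. We can then write $N = \overline{\mathrm{span}}\{K(\cdot,\theta) \colon \theta \in \sphere\}^\perp$ and decompose $\rkhs = N \oplus N^\perp$.

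\textbf{Step 1: the norm is well defined and attained.} For $h_\sphere \in \rkhs_\sphere$, the preimage $R^{-1}(h_\sphere)$ is a closed affine subspace $h + N$. Its element of minimal norm therefore exists, is unique, and is $P_{N^\perp} h$. So the minimum in the definition is attained and equals $\lVert P_{N^\perp} h\rVert_\rkhs$. It follows that $R$ restricted to $N^\perp$ is a linear bijection onto $\rkhs_\sphere$ and is isometric for the proposed norm. Hence $\rkhs_\sphere$ is a Hilbert space: the norm comes from the inner product transported from $N^\perp$, and completeness is inherited because $N^\perp$ is closed in $\rkhs$.

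\textbf{Step 2: reproducing property.} For $\theta \in \sphere$, the function $K(\cdot,\theta)$ lies in $N^\perp$, since $\langle n, K(\cdot,\theta)\rangle_\rkhs = n(\theta) = 0$ for all $n \in N$. Its restriction is exactly $K_\sphere(\cdot,\theta)$, so $K_\sphere(\cdot,\theta) \in \rkhs_\sphere$. Now take $h_\sphere = R g$ with $g \in N^\perp$. Then
\begin{equation*}
    \langle h_\sphere, K_\sphere(\cdot,\theta)\rangle_{\rkhs_\sphere} = \langle g, K(\cdot,\theta)\rangle_\rkhs = g(\theta) = h_\sphere(\theta).
\end{equation*}
This is the reproducing property, so $K_\sphere$ is the reproducing kernel of $\rkhs_\sphere$. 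It is the unique such kernel, by the standard uniqueness argument for reproducing kernels.

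\textbf{Main obstacle.} The only delicate point is that the norm is well defined, i.e.\ that the minimum is attained and that the resulting quotient norm is Hilbertian. Both follow from the closedness of $N$, which rests on the continuity of evaluations (guaranteed here since $K$ is bounded). The remaining steps are routine bookkeeping. Alternatively, one may simply invoke \cite{aronszajn1950theory}, whose restriction theorem states exactly this.
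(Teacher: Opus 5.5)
Your argument is correct and is exactly the standard proof of Aronszajn's restriction theorem: you identify $\rkhs_\sphere$ isometrically with $N^\perp$, where $N$ is the closed subspace of functions vanishing on $\sphere$. The paper writes out no proof and simply cites \cite{aronszajn1950theory}, so your route coincides with the paper's. One small correction: closedness of $N$ needs only the continuity of point evaluations, which holds in any RKHS, so the boundedness of $K$ plays no role in this proposition.
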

    \noindent In the following, we will denote by $\ball_{\rkhs}$ the closed unit ball of $\rkhs$, and $\ball_{\rkhs_\sphere}$ the closed unit ball of $\rkhs_\sphere$.

\section{Statistical guarantees} \label{sec:stats}

We now present nonasymptotic results for both approaches. The statistical guarantees regarding the ERM approach, described in Algorithm \ref{algo:ERM}, are described in the next subsection. 
Our main contributions, however, lie in the statistical guarantees for our penalized approach, described in Algorithm~\ref{algo:SVM}, which is the focus of Section \ref{sec:stat_SVM}. For both algorithms, when $\widehat{f} \circ \theta$ is the issued estimator of the extreme quantile function $q_{\Plim}$, we provide a high probability bound for the \emph{excess asymptotic pinball risk}, that is, $\risk_\infty(\widehat{f} \circ \theta) - \risk_\infty^*$. 

\subsection{Statistical guarantees in the ERM framework} \label{sec:stat_ERM}

This section serves a preparatory role, highlighting the limitations of the ERM approach studied in \cite{huet2023regression}. We first demonstrate that their statistical guarantees can be adapted to the quantile regression framework, yielding explicit bounds on the excess risk for the estimator produced by Algorithm \ref{algo:ERM}. These guarantees require several strong assumptions on the class $\ERMfamily$. Namely, we assume that the functions $f \in \ERMfamily$ are uniformly bounded by a constant $M$. 
\begin{assumption}[Uniform boundedness of the family $\ERMfamily$]\label{assumptionERM:uniform_bound}
    There exists $M>0$ such that
    \begin{equation*}
        \sup_{f \in \ERMfamily} \lVert f \rVert_\infty \leq M.
    \end{equation*}
\end{assumption}
This assumption ensures, in particular, that the risk functionals under consideration are well-defined. With this assumption, the exact same decomposition of the excess asymptotic risk $\risk_\infty(\fh) - \risk_\infty^*$ as in \cite{huet2023regression} can be recovered, their result not being specific to the context of least-square setting but rather to the ERM framework. The proof is omitted for brevity, as it is straightforward to verify that the arguments used in \cite{huet2023regression} (Theorem 3.3) remain valid for any arbitrary loss function. We thus obtain the following proposition.
\begin{proposition}[Bias-variance decomposition of the excess asymptotic risk in the ERM framework] \label{thm:erm_bias_variance_dec}
    Suppose Assumptions \ref{assumption regular variation}, \ref{assumption regularity} and \ref{assumptionERM:uniform_bound} hold and let $\tnk$ be the $(1-k/n)$ quantile of $\lVert X \rVert$. Let $\fh$ be the prediction function issued by Algorithm \ref{algo:ERM}. Then,
    \begin{equation*}
        \risk_\infty(\fh \circ \theta) - \risk_\infty^* \leq B_1(\tnk) + B_2(\ERMfamily) + D_k
    \end{equation*}
    where $B_1(t)$ is the threshold bias, defined by
    \begin{equation*}
        B_1(t) = 2\sup_{f \in \ERMfamily}\lvert \risk_\infty(f \circ \theta) - \risk_t(f \circ \theta) \rvert,
    \end{equation*}
    $B_2(\rkhs)$ is the class bias, defined by
    \begin{equation*}
        B_2(\ERMfamily) = \inf_{f \in \ERMfamily} \risk_\infty(f \circ \theta) - \risk_\infty(q_{\Plim} \circ \theta),
    \end{equation*}
    and $D_k$ is a stochastic deviation term, defined by
    \begin{equation*}
        D_k = 2 \sup_{f \in \ERMfamily}\lvert \risk_{\tnk}(f\circ \theta) - \emprisk_k(f \circ \theta) \rvert.
    \end{equation*}
\end{proposition}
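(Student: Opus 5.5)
The approach is the standard ERM telescoping argument. The only model-specific inputs are Theorem~\ref{Main theorem proba}, which identifies $\risk_\infty$ on angular predictors, and Assumption~\ref{assumptionERM:uniform_bound}, which makes every risk finite.

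First, I rewrite $\risk_\infty^*$. By Theorem~\ref{Main theorem proba} (item 4) together with Lemma~\ref{lemma:angularity-extreme quantile}, $\risk_\infty^* = \risk_\infty(q_{\Plim}) = \risk_\infty(q_{\Plim}\circ\theta)$, because $q_{\Plim}$ is $0$-homogeneous in the angular sense. The excess risk can then be written as
\begin{equation*}
\risk_\infty(\fh\circ\theta) - \risk_\infty^* = \Big[\risk_\infty(\fh\circ\theta) - \inf_{f\in\ERMfamily}\risk_\infty(f\circ\theta)\Big] + B_2(\ERMfamily),
\end{equation*}
and it remains to bound the bracket, the estimation error, by $B_1(\tnk)+D_k$. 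All quantities are finite: for $f\in\ERMfamily$ we have $\lvert\pinbl'(y,f(\theta(x)))\rvert\le \max(\tau,1-\tau)M$, so every $\risk_t$, $\risk_\infty$ and $\emprisk_k$ evaluated at $f\circ\theta$ is bounded by $M$.

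Next comes the telescoping. I fix $\varepsilon>0$ and choose $f_\varepsilon\in\ERMfamily$ with $\risk_\infty(f_\varepsilon\circ\theta)\le \inf_{\ERMfamily}\risk_\infty+\varepsilon$. Writing $t=\tnk$, I decompose
\begin{align*}
\risk_\infty(\fh\circ\theta)-\risk_\infty(f_\varepsilon\circ\theta) ={}& [\risk_\infty(\fh\circ\theta)-\risk_t(\fh\circ\theta)] + [\risk_t(\fh\circ\theta)-\emprisk_k(\fh\circ\theta)] \\
&+ [\emprisk_k(\fh\circ\theta)-\emprisk_k(f_\varepsilon\circ\theta)] + [\emprisk_k(f_\varepsilon\circ\theta)-\risk_t(f_\varepsilon\circ\theta)] \\
&+ [\risk_t(f_\varepsilon\circ\theta)-\risk_\infty(f_\varepsilon\circ\theta)].
\end{align*}
The middle term is $\le 0$ because $\fh$ minimizes $\emprisk_k$ over $\ERMfamily$. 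The first and last terms are each at most $\sup_{f\in\ERMfamily}\lvert\risk_\infty(f\circ\theta)-\risk_t(f\circ\theta)\rvert$, so together they are at most $B_1(\tnk)$. The second and fourth terms are each at most $\sup_{f}\lvert\risk_{\tnk}(f\circ\theta)-\emprisk_k(f\circ\theta)\rvert$, so together they are at most $D_k$. Letting $\varepsilon\to 0$ gives the claim.

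I expect the main obstacle to be measurability and existence. The supremum defining $D_k$ and $B_1$ is over a possibly uncountable class, and the existence of the minimizer $\fh$ is implicitly assumed by Algorithm~\ref{algo:ERM}. If the minimizer does not exist, the same argument goes through with an $\eta$-approximate minimizer, which adds $\eta\to 0$. Measurability of the suprema affects only the later probabilistic bounds on $D_k$, not this deterministic inequality. It is also worth noting that $\risk_\infty$ is defined as a $\limsup$, but for bounded angular $f$ this causes no difficulty: the inequalities above are purely algebraic, and finiteness follows from uniform boundedness.
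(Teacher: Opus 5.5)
Your proof is correct and is essentially the argument the paper has in mind: the paper omits the proof and points to the standard ERM telescoping of \cite{huet2023regression} (Theorem 3.3). That argument inserts $\risk_{\tnk}$ and $\emprisk_k$ between $\risk_\infty(\fh\circ\theta)$ and the best-in-class risk exactly as you do, with $\risk_\infty^*=\risk_\infty(q_{\Plim}\circ\theta)$ supplied by Theorem~\ref{Main theorem proba}. Your use of an $\varepsilon$-minimizer $f_\varepsilon$, rather than assuming $\inf_{f\in\ERMfamily}\risk_\infty(f\circ\theta)$ is attained, is a clean way to handle the infimum.
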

\noindent Recall that a metric space $(E,d)$ is totally bounded if, for any $\varepsilon>0$, it can be covered by a finite number of balls of radius $\varepsilon$ with centers in $E$.
The following proposition shows that, whenever the class $\ERMfamily$ is totally bounded, the bias term vanishes as $t \to \infty$.
\begin{proposition} [Control of the threshold bias]\label{prop:thershold_bias_vanishes}
    Suppose Assumptions \ref{assumption regular variation}, \ref{assumption regularity} and \ref{assumptionERM:uniform_bound} hold, and that the family $\mathcal{F}$ is totally bounded in the space $(\mathcal{C}(\sphere), \lVert \cdot \rVert_\infty)$. Then, as $t \to \infty$,
    \begin{equation*}
        B_1(t) = \sup_{f \in \mathcal{F}} \big\lvert \risk_t(f \circ \theta) - \risk_\infty(f \circ \theta) \big\rvert \tto 0.
    \end{equation*}
\end{proposition}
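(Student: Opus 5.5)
The plan is a standard equicontinuity-plus-pointwise-convergence argument. It rests on two ingredients: the pointwise convergence $\risk_t(f\circ\theta)\to\risk_{\Plim}(f\circ\theta)=\risk_\infty(f\circ\theta)$ from Theorem~\ref{Main theorem proba}(1), and a uniform Lipschitz bound on $f\mapsto\risk_t(f\circ\theta)$ with respect to $\lVert\cdot\rVert_\infty$. Total boundedness then upgrades pointwise convergence to uniform convergence.

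\emph{Step 1 (Lipschitz control).} Since $\pinbl$ is $\max(\tau,1-\tau)$-Lipschitz in its second argument, so is $\pinbl'(y,\cdot)=\pinbl(y,\cdot)-\pinbl(y,0)$. Hence for any $f,g\in\mathcal{C}(\sphere)$ and any $t\ge 0$,
\begin{equation*}
\lvert \risk_t(f\circ\theta)-\risk_t(g\circ\theta)\rvert \le \max(\tau,1-\tau)\,\EE\bigl[\lvert f(\theta(X))-g(\theta(X))\rvert \given \lVert X\rVert\ge t\bigr]\le \max(\tau,1-\tau)\lVert f-g\rVert_\infty .
\end{equation*}
The same bound holds for $\risk_{\Plim}$. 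All these quantities are finite because $\lvert\pinbl'(y,z)\rvert\le\max(\tau,1-\tau)\lvert z\rvert$ and, by Assumption~\ref{assumptionERM:uniform_bound}, $\lVert f\rVert_\infty\le M$. By Theorem~\ref{Main theorem proba}(1), $\risk_\infty(f\circ\theta)=\risk_{\Plim}(f\circ\theta)$ for continuous $f$, so the Lipschitz bound transfers to $\risk_\infty$.

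\emph{Step 2 (finite net).} Fix $\varepsilon>0$. By total boundedness, choose $f_1,\dots,f_N\in\mathcal{F}$ such that every $f\in\mathcal{F}$ satisfies $\lVert f-f_j\rVert_\infty\le\varepsilon$ for some $j$. The $f_j$ are continuous on $\sphere$, so Theorem~\ref{Main theorem proba}(1) applies to each $f_j\circ\theta$. Since $N$ is finite, there is $t_0$ such that $\max_j\lvert\risk_t(f_j\circ\theta)-\risk_\infty(f_j\circ\theta)\rvert\le\varepsilon$ for all $t\ge t_0$.

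\emph{Step 3 (triangle inequality).} For $f\in\mathcal{F}$ with associated $f_j$ and $t\ge t_0$, write
\begin{multline*}
\lvert\risk_t(f\circ\theta)-\risk_\infty(f\circ\theta)\rvert \le \lvert\risk_t(f\circ\theta)-\risk_t(f_j\circ\theta)\rvert \\
+\lvert\risk_t(f_j\circ\theta)-\risk_\infty(f_j\circ\theta)\rvert +\lvert\risk_\infty(f_j\circ\theta)-\risk_\infty(f\circ\theta)\rvert .
\end{multline*}
This is at most $(2\max(\tau,1-\tau)+1)\varepsilon$. Taking the supremum over $\mathcal{F}$ and letting $\varepsilon\to0$ gives $B_1(t)\to0$; the factor $2$ in the definition of $B_1$ is irrelevant.

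The only delicate point is the third term. The Lipschitz property of $\risk_\infty$ is not immediate, because $\limsup$ is merely subadditive. This is exactly why we use the identity $\risk_\infty=\risk_{\Plim}$ for continuous angular functions from Theorem~\ref{Main theorem proba}(1), which requires $\mathcal{F}\subset\mathcal{C}(\sphere)$ as stated. Alternatively, one can pass to the $\limsup$ in the uniform-in-$t$ bound of Step 1, since $\lvert\limsup a_t-\limsup b_t\rvert\le\sup_t\lvert a_t-b_t\rvert$.
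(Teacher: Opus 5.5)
Your proof is correct and follows essentially the same route as the paper: sup-norm Lipschitz continuity of $\risk_t$ and $\risk_{\Plim}$ with constant $\max(\tau,1-\tau)$, a finite $\varepsilon$-net from total boundedness, pointwise convergence on the net via Theorem~\ref{Main theorem proba}(1), and the triangle inequality giving the bound $(2\max(\tau,1-\tau)+1)\varepsilon$. The only cosmetic difference is the order of steps: you identify $\risk_\infty$ with $\risk_{\Plim}$ on continuous angular functions at the outset, whereas the paper runs the argument with $\risk_{\Plim}$ and makes that replacement only at the end.
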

\noindent The proof of this proposition is deferred to the appendix in Section \ref{appendix:proof_threshold_bias}.
To obtain statistical guarantees on the deviation term $D_k$, we make an additional VC-dimension assumption on the class $\ERMfamily$.
\begin{assumption}[Finite VC-dimension of $\ERMfamily$]\label{assumptionERM:VCdim}
    The class $\ERMfamily$ has finite VC-dimension, denoted by $V_\ERMfamily$, and it is pointwise measurable, \ie there exists $\ERMfamily_0 \subset \ERMfamily$ which is countable and such that for any $\theta \in \sphere$ and $f \in \ERMfamily$, there exists a sequence $(f_i)_{i \in \N}$ of functions in $\ERMfamily_0$ with
    \begin{equation*}
        f_i(\theta) \xrightarrow[i \to \infty]{} f(\theta).
    \end{equation*}
\end{assumption}
\noindent In this setting, a high-probability bound can be derived for the deviation term $D_k$.
\begin{proposition}[Control on the deviation term] \label{prop:statistical_guarantee_ERM}
    Suppose Assumptions \ref{assumptionERM:uniform_bound} and \ref{assumptionERM:VCdim} hold. Then, there exists a universal constant $C$ such that, with probability no less than $1-\delta$,
    \begin{equation*}
    \begin{aligned}
        \sup_{f \in \ERMfamily} \lvert \risk_{\tnk}(f\circ \theta) - \emprisk[k](f \circ \theta) \rvert \leq \frac{M\max(\tau,1-\tau)}{\sqrt{k}} \Big(C\sqrt{V_\ERMfamily} + 2\sqrt{2\log(1/\delta)}
        +\frac{3 \log(1/\delta)}{\sqrt{k}} + \frac{C V_\ERMfamily}{\sqrt{k}}\big).
    \end{aligned}
    \end{equation*}
\end{proposition}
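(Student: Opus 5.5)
We will follow the strategy of \cite{huet2023regression} for the deviation term, adapted to the modified pinball loss. The first step is to split the deviation into two parts by inserting the intermediate risk $\riskinter(f\circ\theta)$ defined in \eqref{def:intermediate_risk}:
\begin{equation*}
\sup_{f}\lvert \risk_{\tnk}(f\circ\theta)-\emprisk[k](f\circ\theta)\rvert \leq \sup_f \lvert \risk_{\tnk}(f\circ\theta)-\riskinter(f\circ\theta)\rvert + \sup_f\lvert \riskinter(f\circ\theta)-\emprisk[k](f\circ\theta)\rvert .
\end{equation*}
Since $\PP(\lVert X\rVert\geq \tnk)=k/n$, we have $\risk_{\tnk}(f\circ\theta)=\EE[\riskinter(f\circ\theta)]$. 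The first term is therefore a supremum of a centered empirical process over the class $\mathcal{G}=\{(x,y)\mapsto \pinbl'(y,f(\theta(x)))\un_{\lVert x\rVert\geq\tnk}: f\in\ERMfamily\}$, normalized by $1/k$.

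\textbf{Bounded differences and variance.} By $\lvert\pinbl'(y,z)\rvert\leq \max(\tau,1-\tau)\lvert z\rvert$ and Assumption~\ref{assumptionERM:uniform_bound}, each $g\in\mathcal{G}$ is bounded by $L M$, where $L=\max(\tau,1-\tau)$. This holds even though $Y$ is unbounded, which is exactly why the modified loss is used. Moreover, $g$ vanishes outside $\{\lVert x\rVert\geq\tnk\}$, so $\Var(g)\leq L^2M^2 k/n$.

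\textbf{First term.} We apply Bousquet's version of Talagrand's inequality to $n\sup_g\lvert P_n g-Pg\rvert$, with envelope $LM$ and $\sigma^2=L^2M^2k/n$. Dividing by $k$, this gives, with probability $1-\delta$,
\begin{equation*}
\frac{1}{k}\Big(2\EE[Z]+\sqrt{2kL^2M^2\log(1/\delta)}+\tfrac{LM\log(1/\delta)}{3}\Big),
\end{equation*}
where $Z = n \sup_g\lvert P_n g-Pg\rvert$. This accounts for the $2\sqrt{2\log(1/\delta)}$ and $\log(1/\delta)/\sqrt{k}$ contributions, with constants to be adjusted.

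\textbf{Expectation.} To bound $\EE[Z]$, we symmetrize and use a Rademacher/chaining bound for VC-type classes with small variance (e.g. Giné--Guillou). The main obstacle is to show that $\mathcal{G}$ is VC-type with index $O(V_\ERMfamily)$ and envelope $LM\un_{\lVert x\rVert\geq \tnk}$. We will argue this via Lipschitz contraction ($z\mapsto\pinbl'(y,z)$ is $L$-Lipschitz and vanishes at $0$), multiplication by a fixed indicator, and composition with $\theta$. The resulting uniform covering numbers give
\begin{equation*}
\EE[Z]\lesssim LM\big(\sqrt{V_\ERMfamily k}+V_\ERMfamily\big),
\end{equation*}
and hence the $C\sqrt{V_\ERMfamily}/\sqrt{k}+CV_\ERMfamily/k$ terms. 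Pointwise measurability (Assumption~\ref{assumptionERM:VCdim}) justifies treating the supremum as a countable one.

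\textbf{Second term.} The second term compares the sum over the random top-$k$ set with the sum over $\{\lVert X_i\rVert\geq\tnk\}$. Their symmetric difference has cardinality $\lvert N-k\rvert$, where $N=\#\{i:\lVert X_i\rVert\geq \tnk\}\sim\mathrm{Bin}(n,k/n)$. Each summand is bounded by $LM$, so this term is at most $LM\lvert N-k\rvert/k$. Bernstein's inequality for $N$ yields $LM\big(\sqrt{2\log(1/\delta)/k}+\log(1/\delta)/(3k)\big)$.

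\textbf{Combining.} A union bound over the two events, using $\delta/2$ each and absorbing the constants, gives the stated bound. If the constants must be matched exactly as stated, we would instead bound both terms jointly on a single event, following \cite{huet2023regression}.
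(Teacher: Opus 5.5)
Your proposal is correct and shares the paper's skeleton. Both arguments split through $\riskinter$. Both bound the order-statistic term by $LM\lvert N-k\rvert/k$ and apply Bernstein, with $L=\max(\tau,1-\tau)$ as in your notation. Both apply Talagrand's inequality with variance $L^2M^2k/n$ to the centered term. Both bound the expectation by symmetrization plus a Lipschitz transfer of the VC covering numbers of $\ERMfamily$.

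The genuine difference is how the expected supremum is kept free of a $\log(n/k)$ factor.

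\textbf{The paper's route.} It uses the conditioning lemma of \cite{lhaut2022uniform}. The Rademacher process $\sum_{i\leq n}\sigma_i\pinbl'(Y_i,f(\theta(X_i)))\un_{\lVert X_i\rVert\geq\tnk}$ has the law of a sum over $\condK\sim\mathrm{Bin}(n,k/n)$ independent draws from $\law(X,Y\mid\lVert X\rVert\geq\tnk)$. Conditionally on $\condK$, the variance and the sup-norm bound are both of order $LM$. Proposition~2.1 of \cite{gine2001consistency} then gives, up to a universal constant, $LM(V_{\ERMfamily}+\sqrt{\condK V_{\ERMfamily}})$ with no logarithm, and Jensen gives $\EE\sqrt{\condK}\leq\sqrt{k}$.

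\textbf{Your route.} You stay on the full $n$-sample and encode the localization in the envelope $F=LM\un_{\lVert x\rVert\geq\tnk}$. This works only with a maximal inequality whose logarithm is $\log(A\lVert F\rVert_{L^2(P)}/\sigma)$, for instance Corollary~5.1 of Chernozhukov, Chetverikov and Kato (2014). There $\lVert F\rVert_{L^2(P)}=\sigma=LM\sqrt{k/n}$, so the logarithm is a constant.

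The Gin\'e--Guillou bound you name, in the form used in the paper, has $\log(AU/\sigma)$ with $U=LM$ the sup-norm bound. Applied directly with $\sigma^2=L^2M^2k/n$, it produces $\log(A\sqrt{n/k})$. That is an extra $\sqrt{\log(n/k)}$ on the $\sqrt{V_{\ERMfamily}/k}$ term, which the statement does not allow.

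Relatedly, the VC-type property must be checked at scale $\varepsilon\lVert F\rVert_{L^2(Q)}$. This follows by restricting $Q$ to $\{\lVert x\rVert\geq\tnk\}$, renormalizing, pushing forward by $\theta$, and applying the covering bound for $\ERMfamily$.

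In short, your route avoids the conditioning lemma at the price of a sharper maximal inequality. The paper's route needs only the textbook bound, and the same conditioning device is reused in the SVM variance analysis.

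\textbf{Confidence level and constants.} No joint event is needed. The paper simply takes a union bound; its proof actually ends at probability $1-3\delta$. Replacing $\delta$ by $\delta/3$ adds only $\log 3$-type constants, which are absorbed into $C\sqrt{V_{\ERMfamily}}$ and $CV_{\ERMfamily}/\sqrt{k}$ since $V_{\ERMfamily}\geq 1$.

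The explicit coefficients $2\sqrt2$ and $3$ of the $\log(1/\delta)$ terms cannot be absorbed this way. They depend on the exact Talagrand inequality used; the paper takes Theorem~7.5 of \cite{christmann2008support} with $\gamma=1/2$. Your displayed linear term $LM\log(1/\delta)/3$ is too small. Absorbing the cross term of Bousquet's variance proxy into $2\EE[Z]$ adds a further multiple of $LM\log(1/\delta)$.
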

\noindent We do not claim originality in the technique of proof employed for Proposition \ref{prop:statistical_guarantee_ERM}. For completeness, we provide the proof in the Appendix, Section \ref{sec:appendix_proof_stat_guarantee_ERM}.

\noindent We now examine the results of Proposition \ref{thm:erm_bias_variance_dec} and the assumptions required to establish the statistical guarantees in this subsection. First, note that Assumption \ref{assumptionERM:uniform_bound} is not necessarily unrealistic as the Bayes asymptotic predictor is the extreme quantile function $q_{\Plim}$, which we assumed to be continuous on the sphere. This guarantees that it is, in particular, bounded. However, $\lVert q_{\Plim} \rVert$ is unobservable in practice.  This dependency on an unknown uniform bound introduces a operational challenge regarding the calibration of $M$.  Selecting $M$ incorrectly may drastically inflate the approximation error $B_2(\ERMfamily)$, or the deviation term $D_k$, yielding an implicit bias-variance tradeoff for the choice of $M$. Additionally, control over the approximation bias $B_2(\ERMfamily)$ remains theoretically elusive. Finally, note that the VC-type assumption on the class $\ERMfamily$ severely limits the capacity of the hypothesis spaces available for optimization. Collectively, these drawbacks, alongside the well-known vulnerability of ERM methods to the curse of dimensionality underscore the necessity of a regularized approach. This is the object of the next subsection, investigating the penalized approach described in Algorithm \ref{algo:SVM}.


\subsection{Statistical guarantees in the SVM framework} \label{sec:stat_SVM}
In this section, we propose to derive theoretical guarantees for the SVM approach, described in Algorithm \ref{algo:SVM}. We derive a complete error decomposition for the excess asymptotic risk and bound each resulting term, thereby establishing tractable guarantees under relatively mild assumptions.
One advantage of the SVM approach is that it yields bounds on $\normrkhs[\hhlambda]$. In the case where the kernel of the RKHS $\rkhs$ is bounded, this property ensures that $\hhlambda$ is uniformly bounded as well. This ultimately allows us to completely drop any boundedness assumption on the target $Y$, or any boundedness assumption similar to Assumption \ref{assumptionERM:uniform_bound}. Likewise, we will drop Assumption \ref{assumptionERM:VCdim}, at the cost of replacing the bias term $B_2(\ERMfamily)$ by another error term, the \emph{asymptotic approximation error} of the RKHS $\rkhs$.


\subsubsection{Preliminary results on SVM solutions}
\label{sec:SVM_preliminary}
Recall that our predictor  $\hhlambda$ is defined as a minimizer  of the penalized empirical risk
\begin{equation*}
    \empriskPen[k](h \circ \theta) = \emprisk[k](h \circ \theta) + \lambda \normrkhs[h]^2, \quad h\in\mathcal{H},
\end{equation*}
where $\emprisk_k(h \circ \theta) = \frac{1}{k}\sum_{i=1}^k\pinbl'(Y_{(i)}, h \circ \theta(X_{(i)}))$. We  say that $\hhlambda$ is the \emph{empirical SVM solution}. 

In the sequel, we introduce three other notions  of SVM solution that will be useful in our analysis and correspond to the different notions of risk considered in Section~\ref{sec:framework}. The existence and uniqueness of these minimizers are proved in Corollary \ref{cor:Bound_norm} below. For a threshold $t>0$, we  define  the \emph{conditional objective SVM solution} $h^*_{\lambda,t}$ as a minimizer of the penalized objective conditional risk
\begin{equation*}
    \riskPen_t(h \circ \theta) = \risk_t(h \circ \theta) + \lambda \normrkhs[h]^2, \quad h\in\mathcal{H},
\end{equation*}
where $\risk_t(h \circ \theta)= \EE[\pinbl'(Y,f(X)) \mid \lVert X \rVert \geq t]$.
We also introduce the \emph{asymptotic SVM solution} $h^*_{\lambda,\infty}$  defined as a minimizer of
\begin{equation*}
    \riskPen_{\Plim}(h \circ \theta) = \risk_{\Plim}(h \circ \theta) + \lambda \normrkhs[h]^2, \quad h\in\mathcal{H},
\end{equation*}
where $\risk_{\Plim}(h \circ \theta)= \EE[\pinbl'(Y_\infty,h\circ\theta(X_\infty)) ]$. Finally, we  introduce a final SVM solution that we call the \emph{intermediate SVM solution}. Let $\tnk$ be the $(1-k/n)$-quantile of $\lVert X \rVert$ and define $\widetilde{h}_{\lambda,\tnk}$ as a minimizer of the penalized pseudo-empirical risk
\begin{equation*}
    \widetilde{\risk}_k^{(\lambda)}(h \circ \theta) = \riskinter(h \circ \theta) + \lambda \normrkhs[h]^2,\quad h\in\mathcal{H},
\end{equation*}
where $\riskinter(h\circ \theta)$ was defined by \eqref{def:intermediate_risk}.

\noindent In this section, we state some preliminary results on these SVM solutions.  The first step is to show all of these SVM solutions are well defined. 
\begin{lemma}\label{lem:Existence-uniqueness-minimizer}
    Assume the kernel $K$ is bounded with $\sup_{x \in \R^d}K(x,x)=1$. For any $\lambda > 0$ and   any distribution $P$ on $\R^d \times \R$ such that $P(0 \times \R) = 0$, there exists a unique $h^*_{\lambda,P} \in \rkhs$  minimizing the penalized conditional risk 
    \[
    \riskPen_P(h \circ \theta) = \risk_P(h \circ \theta) + \lambda \normrkhs[h]^2,\quad h\in\mathcal{H},
    \]
    where  $\risk_P(h \circ \theta) = \EE_P[\pinbl'(Y,h \circ \theta(X))]$ denotes the modified pinball risk under $P$. Moreover, $h_{\lambda,P}^*$ satisfies
    \begin{equation*}
        \normrkhs[h_{\lambda,P}^*] \leq \frac{\max(\tau,1-\tau)}{\lambda}.
    \end{equation*}
\end{lemma}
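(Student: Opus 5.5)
The proof combines three facts: the objective is finite, convex and continuous on $\rkhs$; the penalty makes it strictly convex and coercive; and the minimizer can be compared with $h=0$.

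\emph{Well-definedness.} For every $h \in \rkhs$ the reproducing property and $\sup_x K(x,x)=1$ give
\begin{equation*}
\lvert h(x)\rvert = \lvert\langle h, K(x,\cdot)\rangle_\rkhs\rvert \leq \normrkhs[h]\sqrt{K(x,x)} \leq \normrkhs[h].
\end{equation*}
Hence $\lVert h\rVert_\infty \leq \normrkhs[h]$. Since $P(\{0\}\times\R)=0$, the angle $\theta(X)$ is $P$-a.s.\ well defined. The bound $\lvert \pinbl'(y,z)\rvert \leq \max(\tau,1-\tau)\lvert z\rvert$ then shows that $\risk_P(h\circ\theta)$ is finite, with $\lvert \risk_P(h\circ\theta)\rvert \leq \max(\tau,1-\tau)\normrkhs[h]$.

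\emph{Convexity and continuity.} The map $z\mapsto \pinbl'(y,z)$ is convex, being $\pinbl(y,\cdot)$ minus a constant. The map $h\mapsto h(\theta(x))$ is linear, so $h\mapsto \risk_P(h\circ\theta)$ is convex. The Lipschitz property of the pinball loss in $z$ gives
\begin{equation*}
\lvert \risk_P(h_1\circ\theta)-\risk_P(h_2\circ\theta)\rvert \leq \max(\tau,1-\tau)\lVert h_1-h_2\rVert_\infty \leq \max(\tau,1-\tau)\normrkhs[h_1-h_2],
\end{equation*}
so this map is Lipschitz continuous on $\rkhs$. Adding $\lambda\normrkhs[h]^2$ yields a functional $\riskPen_P$ that is continuous and strictly convex (indeed $\lambda$-strongly convex). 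It is also coercive, because $\riskPen_P(h\circ\theta)\geq \lambda\normrkhs[h]^2 - \max(\tau,1-\tau)\normrkhs[h]\to\infty$.

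\emph{Existence and uniqueness.} A continuous convex functional is weakly lower semicontinuous on the Hilbert space $\rkhs$. Bounded closed convex sets of $\rkhs$ are weakly compact, and coercivity lets us restrict the minimization to a large closed ball. A minimizer therefore exists. Uniqueness follows from strict convexity: if $h_1 \neq h_2$ were two minimizers, the midpoint would have strictly smaller objective value, a contradiction. A more constructive alternative is to take a minimizing sequence and use the strong convexity inequality
\begin{equation*}
\tfrac{\lambda}{4}\normrkhs[h_m-h_n]^2 \leq \tfrac12\riskPen_P(h_m\circ\theta)+\tfrac12\riskPen_P(h_n\circ\theta)-\riskPen_P\big(\tfrac{h_m+h_n}{2}\circ\theta\big).
\end{equation*}
This shows the sequence is Cauchy; its limit is a minimizer by continuity.

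\emph{Norm bound.} Let $h^*=h^*_{\lambda,P}$ and compare it with $h=0$. Since $\pinbl'(y,0)=0$, we have $\riskPen_P(0)=0$, so
\begin{equation*}
\lambda\normrkhs[h^*]^2 \leq -\risk_P(h^*\circ\theta) \leq \max(\tau,1-\tau)\,\EE_P\lvert h^*(\theta(X))\rvert \leq \max(\tau,1-\tau)\normrkhs[h^*].
\end{equation*}
If $h^*\neq 0$, dividing by $\lambda\normrkhs[h^*]$ gives $\normrkhs[h^*]\leq \max(\tau,1-\tau)/\lambda$; if $h^*=0$ the bound is trivial.

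The only delicate step is existence on the infinite-dimensional space $\rkhs$. It is handled by the weak lower semicontinuity argument, or equivalently by the strong-convexity Cauchy argument. Everything else follows directly from $\lVert h\rVert_\infty\leq\normrkhs[h]$ and the Lipschitz property of the loss. Note also that the argument never uses boundedness of $Y$, because the modified loss is controlled by $\lvert z\rvert$ alone.
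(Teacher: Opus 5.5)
Your proof is correct and follows essentially the same route as the paper: Lipschitz continuity of $h\mapsto\risk_P(h\circ\theta)$ on $\rkhs$ via $\lVert h\rVert_\infty\leq\normrkhs[h]$, strict convexity for uniqueness, and existence from a nonempty bounded sublevel set. The paper uses $\{h : \riskPen_P(h\circ\theta)\leq 0\}$, which contains $0$ and gives the norm bound directly, together with reflexivity of $\rkhs$ via a citation of Theorem A.6.9 in Christmann and Steinwart; your weak-lower-semicontinuity argument spells that citation out, and your strong-convexity Cauchy argument is a slightly more self-contained alternative.
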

 \noindent The proof of Lemma~\ref{lem:Existence-uniqueness-minimizer} follows the lines of \cite{christmann2008support}, Lemma 5.1 and Theorem 5.2, with an adaptation for the existence. This adaptation is necessary  because the modified pinball loss may take negative values and consequently, we do not have in general that $\lambda \normrkhs[h]^2 \leq \lambda \normrkhs[h]^2 + \risk_t(h \circ \theta) $, which is a crucial inequality in the proof of \cite{christmann2008support}. However, using the fact that the modified pinball loss is Lipschitz and the reproducing property of the RKHS $\rkhs$, we still manage to bypass this difficulty, and remove the integrability assumption on the target $Y$. The details for this proofs and other proofs for this section are deferred to the Appendix, Section \ref{appendix:proof_Existence_uniqueness_minimizer}. By ensuring the existence and uniqueness of the SVM solution under general conditions, this lemma confirms that the previously defined SVM solutions are well-posed. This is formally stated in the following corollary.


    \begin{corollary}[The SVM solutions are well-defined]\label{cor:Bound_norm}
        Assume the kernel $K$ is bounded with $\sup_{x \in \R^d} K(x,x) = 1$ and let $t>0$. Then, the SVM solutions $\hhlambda, h_{\lambda,t}^*, h_{\lambda,\infty}^*, \htildelambda$ are well defined. Moreover,
        \begin{equation*}
            \max\left(\normrkhs[\hhlambda], \normrkhs[h^*_{\lambda,t}],\normrkhs[h^*_{\lambda,\infty}]\right) 
            \leq \frac{\max(\tau,1-\tau)}{\lambda}.
        \end{equation*}
    \end{corollary}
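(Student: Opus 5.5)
The corollary follows by applying Lemma~\ref{lem:Existence-uniqueness-minimizer} four times, once for each SVM solution. For each one, the task is to exhibit a probability distribution $P$ on $\R^d \times \R$ with $P(\{0\}\times\R)=0$ such that the penalized objective defining that solution equals $\riskPen_P(h\circ\theta)$. Existence, uniqueness and the norm bound $\max(\tau,1-\tau)/\lambda$ are then inherited directly from the lemma.

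The four choices of $P$ are as follows.
\begin{enumerate}[label=(\roman*)]
\item For $h^*_{\lambda,t}$, take $P_t=\law\left(X,Y \given \lVert X\rVert\geq t\right)$. This is well defined as soon as $\PP(\lVert X\rVert\geq t)>0$, which holds for every $t$ under Assumption~\ref{assumption regular variation} since $\lVert X\rVert$ has unbounded support. It charges no mass on $\{0\}\times\R$ because $t>0$, and $\risk_{P_t}=\risk_t$.
\item For $h^*_{\lambda,\infty}$, take $P=\Plim$. By \eqref{eq:Plim}, $\lVert X_\infty\rVert\geq 1$ almost surely, so $\Plim(\{0\}\times\R)=0$.
\item For $\hhlambda$, take the empirical measure $\frac1k\sum_{i=1}^k\delta_{(X_{(i)},Y_{(i)})}$. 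Since $\PP(X_1=0)=0$, it almost surely gives no mass to $\{0\}\times\R$, and its risk is exactly $\emprisk_k$.
\item For $\htildelambda$, the objective $\riskinter$ is $\frac{N}{k}$ times the risk of the empirical measure of the points satisfying $\lVert X_i\rVert\geq\tnk$, where $N$ is the number of such points. It is not itself a probability risk, so the lemma needs a small adjustment here (see below).
\end{enumerate}

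For (iv), one option is to observe that the proof of the lemma only uses that the measure is finite. The other is to rescale: if $N>0$, minimizing $\frac{N}{k}\risk_{Q}+\lambda\normrkhs[h]^2$ is the same as minimizing $\risk_Q+\frac{\lambda k}{N}\normrkhs[h]^2$ with $Q$ a probability measure, so the lemma applies with $\lambda$ replaced by $\lambda k/N$. This gives the bound $\max(\tau,1-\tau)N/(\lambda k)$, which is not bounded by $\max(\tau,1-\tau)/\lambda$ in general. That is consistent with the corollary: it only asserts well-definedness for $\htildelambda$ and does not include it in the displayed norm bound. If $N=0$, the objective reduces to $\lambda\normrkhs[h]^2$ and the unique minimizer is $h=0$.

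The main technical check is measurability and integrability for each $P$. The lemma implicitly requires $\risk_P(h\circ\theta)$ to be finite for every $h\in\rkhs$. This holds because $\lvert h\circ\theta\rvert\leq\normrkhs[h]$ when $K\leq 1$, together with $\lvert\pinbl'(y,z)\rvert\leq\max(\tau,1-\tau)\lvert z\rvert$, so no moment condition on $Y$ is needed. I do not expect any real obstacle beyond the non-normalized case (iv), which the rescaling argument handles.
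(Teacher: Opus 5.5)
Your proposal is correct and matches the paper's proof: the paper likewise applies Lemma~\ref{lem:Existence-uniqueness-minimizer} to the empirical measure of the top-$k$ points, to $P_t$ and to $\Plim$ (checking that none charges $\{0\}\times\R$), and handles $\htildelambda$ by setting $\htildelambda=0$ when $N=0$ and otherwise applying the lemma to the normalized empirical measure of the exceedances with penalty $\lambda k/N$. Your extra remarks are accurate details that the paper leaves implicit: positivity of $\PP(\lVert X\rVert\geq t)$, finiteness of the risks without moment conditions on $Y$, and the $N/(\lambda k)$-scaled norm bound for $\htildelambda$.
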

\noindent Note that Proposition~\ref{def:rkhs'} implies that the restrictions of $\hhlambda, h^*_{\lambda,t}, h^*_{\lambda,\infty}$ to the sphere $\sphere$ satisfy the same inequalities in the RKHS $\rkhs_{\sphere}$, i.e.
    \begin{equation*}
            \max\left(\lVert h_{\lambda\lvert\sphere} \rVert_{\rkhs_\sphere},
            \lVert h^*_{\lambda,t\lvert\sphere} \rVert_{\rkhs_\sphere},
            \lVert h^*_{\lambda,\infty\lvert\sphere} \rVert_{\rkhs_\sphere} \right) 
            \leq \frac{\max(\tau,1-\tau)}{\lambda}.
        \end{equation*}
This is a straightforward consequence of the inequality $\lVert h_{\vert\sphere}\rVert_{\rkhs_\sphere}\leq \lVert h\rVert_{\rkhs}$, for all $h\in\rkhs$.
    Finally, the following proposition justifies the use of a RKHS $\rkhs$ of functions in the whole space $\R^d$ instead of a RKHS $\rkhs_\sphere$ of functions on the sphere $\sphere$, as the two approaches yield the same SVM solutions.
    \begin{proposition}[Equivalence of the SVM problems in $\rkhs$ and $\rkhs_\sphere$]
    \label{prop:restriction_RKHS_SVM_solution}
    Let $P$ be any probability distribution on $\R^d \times \R$ with $P(0 \times \R) = 0$. Consider the conditional objective SVM solution $h^*_{\lambda,P}$, obtained by minimizing the penalized conditional risk over $\mathcal{H}$. The restriction $h^*_{\lambda,P\vert\sphere}$  of $h^*_{\lambda,P}$ to the sphere is the unique minimizer in $\rkhs_\sphere$  of 
    \begin{equation*}
        \risk_{P}^{(\lambda)}(h \circ \theta) = \risk_{P}(h\circ \theta) + \lambda \lVert h \rVert_{\rkhs_\sphere}^2,\quad h\in \rkhs_\sphere.
    \end{equation*}
    \end{proposition}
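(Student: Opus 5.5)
The plan is to compare the two optimization problems through the restriction map $h \mapsto h_{\vert\sphere}$ from $\rkhs$ onto $\rkhs_\sphere$. The central observation is that the risk term only sees the restriction to the sphere: since $P(0\times\R)=0$, the angle $\theta(X)\in\sphere$ is defined $P$-a.s., so for any $h\in\rkhs$ we have $\risk_P(h\circ\theta)=\risk_P(h_{\vert\sphere}\circ\theta)$. In particular, the functional $\risk_P(\cdot\circ\theta)$ is well defined on $\rkhs_\sphere$. Here $\risk_P$ is finite because $\lvert\pinbl'(y,z)\rvert\le\max(\tau,1-\tau)\lvert z\rvert$ and $\lVert h\rVert_\infty\le \lVert h\rVert_{\rkhs_\sphere}$, which follows from $K_\sphere\le 1$.

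\textbf{Step 1 (the minimal extension).} For $g\in\rkhs_\sphere$, the minimum in the definition of $\lVert g\rVert_{\rkhs_\sphere}$ is attained at a unique $\bar g\in\rkhs$. To see this, note that the set $\{h\in\rkhs\colon h_{\vert\sphere}=g\}$ is a closed affine subspace, because point evaluations are continuous. It is therefore of the form $h_0+N$, where $N=\{h\colon h_{\vert\sphere}=0\}$. Its element of minimal norm is the projection of $0$ onto it, namely $\bar g = h_0 - \Pi_N h_0$, which lies in $N^\perp$. This is the Aronszajn construction already used in the proposition establishing that $\rkhs_\sphere$ is an RKHS.

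\textbf{Step 2 (comparison of values).} For every $h\in\rkhs$ we have
\[
\risk_P^{(\lambda)}(h\circ\theta)=\risk_P(h_{\vert\sphere}\circ\theta)+\lambda\normrkhs[h]^2\ \ge\ \risk_P(h_{\vert\sphere}\circ\theta)+\lambda\lVert h_{\vert\sphere}\rVert_{\rkhs_\sphere}^2 .
\]
Conversely, for every $g\in\rkhs_\sphere$, the minimal extension from Step 1 gives equality, $\risk_P^{(\lambda)}(\bar g\circ\theta)=\risk_P(g\circ\theta)+\lambda\lVert g\rVert_{\rkhs_\sphere}^2$. It follows that the two infima coincide. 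Now write $h^*=h^*_{\lambda,P}$, which exists and is unique by Lemma \ref{lem:Existence-uniqueness-minimizer}. Then $h^*_{\vert\sphere}$ is a minimizer over $\rkhs_\sphere$, because its value is at most $\risk^{(\lambda)}_P(h^*\circ\theta)=\inf_{\rkhs}=\inf_{\rkhs_\sphere}$.

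\textbf{Step 3 (uniqueness).} The remaining task is to show that $h^*_{\vert\sphere}$ is the only minimizer in $\rkhs_\sphere$. One route is to rerun the argument of Lemma \ref{lem:Existence-uniqueness-minimizer} directly on the RKHS $\rkhs_\sphere$ with kernel $K_\sphere\le 1$: that lemma gives uniqueness for any RKHS with bounded kernel, since the penalized objective is strictly convex, being convex plus $\lambda$ times a squared Hilbert norm. A more direct route avoids reproving anything. Suppose $g$ is another minimizer. Then $\bar g$ attains the same value in $\rkhs$, so $\bar g=h^*$ by uniqueness in $\rkhs$, and hence $g=\bar g_{\vert\sphere}=h^*_{\vert\sphere}$.

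I expect the main obstacle to be minor. It is the justification in Step 1 that the minimum in the norm on $\rkhs_\sphere$ is attained; this rests on $N$ being closed, which is the standard restriction argument. There is also a side remark to record: the argument shows that $h^*\in N^\perp$, that is, $h^*=\overline{h^*_{\vert\sphere}}$. Otherwise, projecting $h^*$ onto $N^\perp$ would strictly decrease the norm while leaving the risk unchanged, which would contradict minimality.
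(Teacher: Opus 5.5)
Your proposal is correct and takes essentially the same route as the paper. Both arguments use three facts: the risk depends only on the restriction to $\sphere$, the minimal-norm extension realizes $\lVert\cdot\rVert_{\rkhs_\sphere}$, and uniqueness of $h^*_{\lambda,P}$ in $\rkhs$ transfers to $\rkhs_\sphere$ by lifting any minimizer via its minimal extension; the paper's preliminary norm-equality step is your side remark $h^*=\overline{h^*_{\vert\sphere}}$. Arguing through equality of the two infima has a small added benefit: it shows that the $\rkhs_\sphere$ problem attains its minimum, which the paper's proof takes for granted when it lifts a minimizer over $\rkhs_\sphere$.
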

    
\subsubsection{Decomposition of the excess asymptotic risk in the SVM framework} \label{sec:stat_svm_debut}
Equipped with the preliminary results established in the previous section, we now provide a non asymptotic analysis for the excess risk of the estimator $\hhlambda$. More precisely, we provide a high probability bound for the \emph{penalized excess asymptotic risk}, that is,
\begin{equation*}
    \risk_\infty(\hhlambda \circ \theta) + \lambda \normrkhs[\hhlambda]^2 - \risk_\infty^*.
\end{equation*}
We derive a decomposition for the excess asymptotic penalized risk analogous to the one presented in Proposition \ref{thm:erm_bias_variance_dec}, which is formally stated below.

\begin{theorem}[Bias-variance decomposition for the excess asymptotic risk in the SVM framework]\label{thm:SVM_bias_variance_dec}
    Suppose Assumptions \ref{assumption regular variation} and \ref{assumption regularity} hold. Let $\lambda>0$, and assume that the RKHS $\rkhs$ has a continuous kernel $K$ that is bounded by $1$. Consider $\hhlambda$ issued by Algorithm \ref{algo:SVM}. Then, 
    \begin{equation}\label{eq:decomposition_risk}
        \risk_\infty(\hhlambda \circ \theta) + \lambda \normrkhs[\hhlambda]^2 - \risk_\infty^* \leq B(n,k,\lambda) + \approxerror_2^\infty(\lambda) + D(n,k,\lambda),
    \end{equation}
    where 
    \begin{equation} \label{eq:threshold_bias}
    B(n,k,\lambda) = 2 \sup_{\normrkhs[h] \leq \max(\tau,1-\tau) \lambda^{-1}} \big \lvert \risk_{\tnk}(h \circ \theta) - \risk_\infty(h \circ \theta) \big \rvert 
\end{equation}
is the  \textbf{threshold bias},
\begin{equation}\label{asymptotic_approx_error}
    \begin{aligned}
    \approxerror_2^\infty(\lambda) &=  \risk_\infty(h^*_{\lambda,\infty} \circ \theta) + \lambda \normrkhs[h^*_{\lambda,\infty}]^2 - \risk_\infty^*\\
    &= \inf_{h \in \rkhs} \Big( \risk_\infty(h \circ \theta) + \lambda \normrkhs[h]^2 \Big) - \risk_\infty^*.
\end{aligned}
\end{equation}
is the \textbf{asymptotic approximation error} of the RKHS $\rkhs$, and

\begin{equation}\label{eq:variance_term}
    D(n,k,\lambda) = \riskPen_{\tnk}(\hhlambda) - \riskPen_{\tnk}(h^*_{\lambda,\tnk})
\end{equation}
is a \textbf{variance term}.
\end{theorem}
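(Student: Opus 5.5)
We use a telescoping decomposition, with the threshold $\tnk$ as the pivot. Abbreviate $\riskPen_\infty(h) := \risk_\infty(h\circ\theta) + \lambda\normrkhs[h]^2$, and similarly $\riskPen_{\tnk}(h)$.

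We write
\begin{align*}
\riskPen_\infty(\hhlambda) - \risk_\infty^*
&= \big[\riskPen_\infty(\hhlambda) - \riskPen_{\tnk}(\hhlambda)\big]
+ \big[\riskPen_{\tnk}(\hhlambda) - \riskPen_{\tnk}(h^*_{\lambda,\tnk})\big]\\
&\quad + \big[\riskPen_{\tnk}(h^*_{\lambda,\tnk}) - \riskPen_{\tnk}(h^*_{\lambda,\infty})\big]
+ \big[\riskPen_{\tnk}(h^*_{\lambda,\infty}) - \riskPen_\infty(h^*_{\lambda,\infty})\big]\\
&\quad + \big[\riskPen_\infty(h^*_{\lambda,\infty}) - \risk_\infty^*\big].
\end{align*}
We then bound the five brackets in turn.

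\begin{itemize}
\item The second bracket is exactly $D(n,k,\lambda)$ by definition \eqref{eq:variance_term}.
\item The third bracket is $\leq 0$, since $h^*_{\lambda,\tnk}$ minimizes $\riskPen_{\tnk}$ over $\rkhs$ (well-posed by Corollary \ref{cor:Bound_norm}).
\item In the first and fourth brackets the penalty terms cancel. What remains is $\risk_\infty(h\circ\theta) - \risk_{\tnk}(h\circ\theta)$ for $h=\hhlambda$ and $h=h^*_{\lambda,\infty}$. By Corollary \ref{cor:Bound_norm}, both functions lie in the ball $\normrkhs[h]\leq \max(\tau,1-\tau)\lambda^{-1}$. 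Each difference is therefore at most half of $B(n,k,\lambda)$ in \eqref{eq:threshold_bias}, and together they give $B(n,k,\lambda)$.
\item The fifth bracket is $\approxerror_2^\infty(\lambda)$.
\end{itemize}

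The remaining care concerns how $\risk_\infty$ is interpreted on these functions. Since $K$ is continuous and bounded, every $h\in\rkhs$ is continuous and bounded, with $\lVert h\rVert_\infty\le\normrkhs[h]$. Hence $h\circ\theta$ has the form covered by Theorem \ref{Main theorem proba}(1), which gives $\risk_\infty(h\circ\theta)=\lim_t\risk_t(h\circ\theta)=\risk_{\Plim}(h\circ\theta)$. In particular $\riskPen_\infty$ coincides with $\riskPen_{\Plim}$ on $\rkhs$. So $h^*_{\lambda,\infty}$, defined as the minimizer of $\riskPen_{\Plim}$, also minimizes $\riskPen_\infty$ over $\rkhs$. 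This justifies the second equality in \eqref{asymptotic_approx_error}, and it guarantees that all quantities involved are finite, since $\lvert\pinbl'(y,z)\rvert\leq\max(\tau,1-\tau)\lvert z\rvert$ with $z$ bounded.

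We expect no real obstacle. The only delicate point is making sure $\risk_\infty$ is a genuine limit rather than merely a limsup on $\rkhs\circ\theta$, which is exactly what Theorem \ref{Main theorem proba}(1) provides. We also need the norm bound from Corollary \ref{cor:Bound_norm} for both $\hhlambda$ and $h^*_{\lambda,\infty}$, so that the supremum in $B$ covers both of them.
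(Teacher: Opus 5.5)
Your proposal is correct and follows essentially the same route as the paper. The paper's term $\mathrm{I}$ is the sum of your third, fourth and fifth brackets, handled as you do by comparing $h^*_{\lambda,t_{n,k}}$ with $h^*_{\lambda,\infty}$ through minimality; its terms $\mathrm{II}$ and $\mathrm{III}$ are your first and second brackets, and the two threshold differences are bounded by the supremum over the ball of radius $\max(\tau,1-\tau)/\lambda$ using Corollary~\ref{cor:Bound_norm}.
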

\noindent In the following, we analyze the different terms appearing in the decomposition \eqref{eq:decomposition_risk}. We start with the threshold bias $B(n,k,\lambda)$ defined by Equation \eqref{eq:threshold_bias}. The following proposition shows that, analogous to Proposition \ref{prop:thershold_bias_vanishes}, the bias term $B(n,k,\lambda)$ vanishes as $n \to \infty$ for any fixed $\lambda>0$. Notably, this result is obtained without imposing any additional assumptions on the RKHS $\rkhs$, whereas the corresponding ERM guarantee required the hypothesis space $\ERMfamily$ to be totally bounded.

\begin{proposition}[Bias control]\label{prop:BiasControl}
    Let Assumption \ref{assumption regular variation} hold. Then, for any $\lambda > 0$, 
    \begin{equation*}
    \sup_{\lVert h \rVert_{\rkhs} \leq \max(\tau,1-\tau)\lambda^{-1}} \big\lvert \risk_{t}(h \circ \theta) - \risk_\infty(h \circ \theta) \big\rvert \xrightarrow[t \to \infty]{} 0.
\end{equation*}
\end{proposition}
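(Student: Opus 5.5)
The plan is to reduce the statement to Proposition~\ref{prop:thershold_bias_vanishes}, applied to the class $\ERMfamily = \{h\lvert_\sphere \colon \normrkhs[h] \leq R\}$ with $R = \max(\tau,1-\tau)\lambda^{-1}$. Two properties of this class have to be checked: uniform boundedness (Assumption~\ref{assumptionERM:uniform_bound}) and total boundedness in $(\mathcal{C}(\sphere),\lVert\cdot\rVert_\infty)$.

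\textbf{Uniform boundedness.} By the reproducing property and $K \leq 1$, every $h$ in the ball satisfies
\[
\lvert h(x)\rvert = \lvert\langle h, K(x,\cdot)\rangle_\rkhs\rvert \leq \normrkhs[h]\,\sqrt{K(x,x)} \leq R .
\]
Hence Assumption~\ref{assumptionERM:uniform_bound} holds with $M = R$. Continuity of $K$ also implies that each $h \in \rkhs$ is continuous, so $\ERMfamily \subset \mathcal{C}(\sphere)$.

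\textbf{Total boundedness.} Here I use that the closed ball $R\,\ball_{\rkhs_\sphere}$ is relatively compact in $\mathcal{C}(\sphere)$, which I prove by Arzelà--Ascoli. Uniform boundedness is already established. For equicontinuity, write
\[
\lvert h(\theta_1)-h(\theta_2)\rvert \leq \normrkhs[h]\,\lVert K(\theta_1,\cdot)-K(\theta_2,\cdot)\rVert_\rkhs = \normrkhs[h]\,\bigl(K(\theta_1,\theta_1)-2K(\theta_1,\theta_2)+K(\theta_2,\theta_2)\bigr)^{1/2}.
\]
Since $K$ is continuous on the compact set $\sphere\times\sphere$, it is uniformly continuous there, so the right-hand side tends to $0$ uniformly in $h$ as $\theta_2 \to \theta_1$. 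Relatively compact subsets of a metric space are totally bounded.

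\textbf{Regularity.} Proposition~\ref{prop:thershold_bias_vanishes} also assumes Assumption~\ref{assumption regularity}, whereas the statement here only lists Assumption~\ref{assumption regular variation}. I expect this to be the main obstacle. If that proposition's proof genuinely needs regularity, I would bypass it with a direct argument that only uses weak convergence.

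\textbf{Direct argument.} Define $g_h(\omega,y) = \pinbl'(y,h(\omega))$ for $\omega = \theta(x)$. Under Assumption~\ref{assumption regular variation}, $\risk_t(h\circ\theta) = \EE[g_h(\theta(X),Y)\mid \lVert X\rVert\geq t]$ and $(\theta(X),Y)\mid\lVert X\rVert \geq t$ converges weakly to $(\Theta_\infty,Y_\infty)$, by the continuous mapping theorem applied to the weak convergence stated after Assumption~\ref{assumption regular variation}.
\begin{enumerate}
    \item For a fixed continuous $h$, the function $g_h$ is continuous and bounded by $\max(\tau,1-\tau)\lVert h\rVert_\infty$. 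This boundedness, which is precisely why the modified loss is used, gives pointwise convergence $\risk_t(h\circ\theta)\to\risk_{\Plim}(h\circ\theta)$ with no integrability requirement on $Y$.
    \item By the $\max(\tau,1-\tau)$-Lipschitz property of $\pinbl$ in its second argument, $h \mapsto \risk_t(h\circ\theta)$ and $h\mapsto\risk_{\Plim}(h\circ\theta)$ are $\max(\tau,1-\tau)$-Lipschitz for $\lVert\cdot\rVert_\infty$ on $\sphere$, uniformly in $t$.
    \item Given $\varepsilon>0$, take a finite $\varepsilon$-net $h_1,\dots,h_N$ of the ball in sup norm, which exists by the compactness step. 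Then
    \[
    \sup_{h}\lvert\risk_t(h\circ\theta)-\risk_{\Plim}(h\circ\theta)\rvert \leq 2\max(\tau,1-\tau)\,\varepsilon + \max_{j\leq N}\lvert\risk_t(h_j\circ\theta)-\risk_{\Plim}(h_j\circ\theta)\rvert .
    \]
    The last term tends to $0$ as $t\to\infty$, and $\varepsilon$ is arbitrary.
\end{enumerate}

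\textbf{Identifying the limit.} From the pointwise convergence in step 1, $\risk_\infty(h\circ\theta) = \limsup_{t\to\infty}\risk_t(h\circ\theta) = \risk_{\Plim}(h\circ\theta)$, as in Theorem~\ref{Main theorem proba}(1), whose argument for this point uses only weak convergence. The uniform bound from step 3 therefore yields the claimed uniform convergence of $\lvert\risk_t(h\circ\theta)-\risk_\infty(h\circ\theta)\rvert$ over the ball.
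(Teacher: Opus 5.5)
Your proof is correct and follows the paper's route. The paper also reduces to Proposition~\ref{prop:thershold_bias_vanishes} via total boundedness of the radius-$\max(\tau,1-\tau)\lambda^{-1}$ ball in $(\mathcal{C}(\sphere),\lVert\cdot\rVert_\infty)$; it cites the compact embedding $\rkhs_\sphere\hookrightarrow\mathcal{C}(\sphere)$ from \cite{christmann2008support} where you prove it with Arzel\`a--Ascoli. Your ``direct argument'' is exactly the $\varepsilon$-net proof the paper gives for that proposition, which indeed uses only Assumption~\ref{assumption regular variation} through item~1 of Theorem~\ref{Main theorem proba}, so your concern about Assumption~\ref{assumption regularity} is resolved just as you resolved it.
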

\noindent As a corollary to Proposition~\ref{prop:BiasControl} we obtain that, for any sequence $k(n)$  such that $k(n)/n\to 0$ and $k(n)/n$ is non-increasing,  there exists a vanishing sequence $\lambda(n)$ for which the   bias term  vanishes  in the asymptotic regime as $n\to\infty$.
\begin{corollary} \label{cor:threshold_bias_vanishes_SVM}
  Let $k=k(n)$ depend on $n$ in a way such that 
  $k(n)/n\to 0$ as $n\to\infty$. There exists a
  sequence $\lambda(n)\to 0$ such that the bias term
  $B(n, k(n), \lambda(n))$ defined in Proposition~\ref{prop:BiasControl}, converges to $0$ as $ n \to \infty$.
  
\end{corollary}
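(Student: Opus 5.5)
This is a diagonal argument that uses Proposition~\ref{prop:BiasControl} for each fixed $\lambda$. Write $c_\tau=\max(\tau,1-\tau)$ and, for $\lambda>0$ and $t>0$,
\[
\beta(t,\lambda)=\sup_{\normrkhs[h]\leq c_\tau\lambda^{-1}} \big\lvert \risk_t(h\circ\theta)-\risk_\infty(h\circ\theta)\big\rvert ,
\]
so that $B(n,k,\lambda)=2\beta(\tnk,\lambda)$.

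Two monotonicity facts are useful. First, for fixed $t$, $\beta(t,\lambda)$ is non-increasing in $\lambda$, because the ball over which the supremum is taken shrinks as $\lambda$ grows. Second, since $k(n)/n\to0$, the $(1-k(n)/n)$-quantile $t_{n,k(n)}$ of $\lVert X\rVert$ tends to $+\infty$. This holds because $\lVert X\rVert$ has unbounded support under Assumption~\ref{assumption regular variation}: $\PP(\lVert X\rVert\geq t)$ is regularly varying, hence positive for every $t$.

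The construction proceeds as follows.
\begin{enumerate}
\item Fix the sequence $\lambda_m=1/m$. By Proposition~\ref{prop:BiasControl}, for each $m$ there is a threshold $T_m$ such that $\beta(t,\lambda_m)\leq 1/m$ for all $t\geq T_m$. We may take $T_m$ non-decreasing in $m$.
\item Since $t_{n,k(n)}\to\infty$, for each $m$ there is an index $N_m$ such that $t_{n,k(n)}\geq T_m$ for all $n\geq N_m$. We choose the $N_m$ strictly increasing in $m$.
\item Define $\lambda(n)=\lambda_m$ for $N_m\leq n<N_{m+1}$, and $\lambda(n)=1$ for $n<N_1$.
\item For $N_m\leq n<N_{m+1}$ we have $t_{n,k(n)}\geq T_m$, so $B(n,k(n),\lambda(n))=2\beta(t_{n,k(n)},\lambda_m)\leq 2/m$.
\item As $n\to\infty$, the index $m=m(n)$ tends to infinity. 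Hence $\lambda(n)=1/m(n)\to0$ and $B(n,k(n),\lambda(n))\to0$.
\end{enumerate}

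The only point needing care is that $t_{n,k(n)}\to\infty$. Under Assumption~\ref{assumption regular variation}, $\PP(\lVert X\rVert\geq t)\sim \mu(\{\lVert x\rVert\geq 1\}\times\R)/b(t)$ with $b$ regularly varying. The constant $\mu(\{\lVert x\rVert\geq 1\}\times\R)$ is positive because $\mu$ is nonzero and homogeneous, so the tail is positive for all large $t$ and tends to $0$. Because $k(n)/n\to0$, the quantile level $1-k(n)/n$ tends to $1$, and the quantile therefore diverges. With the generalized-inverse definition of the quantile this step is routine. No monotonicity assumption on $k(n)/n$ is needed.
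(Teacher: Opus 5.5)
Your proof is correct, but it takes a different route from the paper's. The paper sets $\beta(n,\lambda)=B(n,k(n),\lambda)$ and uses two facts about it: it is non-increasing in $\lambda$, and it is finite. From these it shows that $\Lambda_n=\{\lambda>0:\beta(n,\lambda)\le\lambda\}$ is a nonempty interval unbounded above. It then takes $\lambda(n)=\inf\Lambda_n+1/n$ and proves $\inf\Lambda_n\to0$ by contradiction, using $\beta(n,\varepsilon)\to0$ for each fixed $\varepsilon$.

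You instead build $\lambda(n)$ block by block on the grid $\lambda_m=1/m$. This uses only the fixed-$\lambda$ convergence of Proposition~\ref{prop:BiasControl} at countably many values of $\lambda$. The monotonicity of $\beta$ in $\lambda$, which you list as useful, is in fact never used in your steps, and neither is the monotonicity of $T_m$. The paper's argument, by contrast, hinges on monotonicity in $\lambda$.

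The two constructions give essentially the same quantitative by-product. For $n\ge N_1$ your choice yields $B(n,k(n),\lambda(n))\le 2\lambda(n)$, matching the paper's $B(n,k(n),\lambda(n))\le\lambda(n)$. The paper's version has a small advantage: it defines $\lambda(n)$ directly for each $n$ as nearly the smallest level at which the bias is dominated by $\lambda$, with no auxiliary thresholds $T_m,N_m$. Your version is the more elementary diagonal argument. It also makes explicit the step $t_{n,k(n)}\to\infty$, which follows from the unbounded support of $\lVert X\rVert$ under Assumption~\ref{assumption regular variation}. The paper uses this step tacitly when it infers $\beta(n,\lambda)\to0$ from $k(n)/n\to0$.
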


\subsubsection{Bounding the variance term}
\label{sec:stat_bound_distance}
We now turn to the variance term $D(n,k,\lambda)$ defined by equation \eqref{eq:variance_term}. 
A high-probability bound for $D(n,k,\lambda)$ can be obtained by exploiting the fact that $\hhlambda$ minimizes the penalized empirical risk. This implies
\begin{equation*}
    \empriskPen[k](\hhlambda \circ \theta) \leq \empriskPen[k](h^*_{\lambda,\tnk} \circ \theta).
\end{equation*}
Thus,
\begin{equation*}
\begin{aligned}
    \riskPen_t(\hhlambda \circ \theta) &= \empriskPen[k](\hhlambda \circ \theta) + \left(\risk_t(\hhlambda \circ \theta) - \emprisk[k](\hhlambda \circ \theta) \right)\\
    &\leq \empriskPen[k](h^*_{\lambda,\tnk} \circ \theta) + \left(\risk_t(\hhlambda \circ \theta) - \emprisk[k](\hhlambda \circ \theta) \right)\\
    &= \riskPen_t(h^*_{\lambda,\tnk} \circ \theta) + \left(\risk_t(\hhlambda \circ \theta) - \emprisk[k](\hhlambda \circ \theta) \right) - \left(\risk_t(h^*_{\lambda,\tnk} \circ \theta) - \emprisk[k](h^*_{\lambda,\tnk} \circ \theta)\right).
\end{aligned}
\end{equation*}
Leveraging the bound on $\normrkhs[\hhlambda]$ and $\lVert h^*_{\lambda,\tnk} \rVert_\rkhs$ provided by Corollary \ref{cor:Bound_norm}, we obtain
\begin{equation} \label{eq:bound_diff_risk_sup_1}
    \left(\risk_t(\hhlambda \circ \theta) - \emprisk[k](\hhlambda \circ \theta) \right) - \left(\risk_t(h^*_{\lambda,t} \circ \theta) - \emprisk[k](h^*_{\lambda,t} \circ \theta)\right)
    \leq 2 \sup_{h \in \frac{\max(\tau,1-\tau)}{\lambda} \ball_{\rkhs}} \lvert \risk_t(h \circ \theta) - \emprisk[k](h \circ \theta) \rvert.
\end{equation}
While this latter quantity could be bounded with high probability by adapting the techniques established in \cite{huet2023regression}, such an approach yields a suboptimal rate of order $(\lambda \sqrt{k})^{-1}$. To secure faster convergence rates, standard literature leverages the method of localization \citep{boucheron2005theory,bartlett2005local,lecue2012general}, which measures the capacity of a localized subset of functions near the optimum rather than the global hypothesis space. A key technical difficulty in our context stems from the random nature of the sample size exceeding the threshold $\tnk$, coupled with the small number of observations $k$ retained for training. Addressing this requires a careful analysis via tailored Bernstein-type concentration inequalities. Moreover, the classical localized bounds require both a variance-control condition and a light-tailed (bounded or sub-exponential) target variable $Y$. We prefer to circumvent these requirements in order to accommodate unbounded, potentially heavy-tailed targets. 
The method we propose instead leverages the stability of SVM \citep{christmann2008support,bousquet2002stability} and prove that $\hhlambda$ is close to $h^*_{\lambda,t}$ with high probability. Consequently, we can restrict the supremum in \eqref{eq:bound_diff_risk_sup_1} to a localized subset of the form
\begin{equation*}
    \sup_{h \in \ball_{\rkhs}(h^*_{\lambda,t},\rho(\lambda,k,\delta))} \Big\lvert \left(\risk_t(h \circ \theta) - \emprisk[k](h \circ \theta)\right) - \left(\risk_t(h^*_{\lambda,t} \circ \theta) - \emprisk[k](h^*_{\lambda,t} \circ \theta)\right) \Big\rvert
\end{equation*}
for some small radius $\rho(\lambda,k,\delta)$. 
To obtain a sharp estimate on $\lVert \hhlambda - h^*_{\lambda,\tnk} \rVert_\mathcal{H}$, we first bound the distance of $\hat{h}_\lambda$ to the intermediate SVM solution $\tilde{h}_\lambda$. The following lemma demonstrates that the norm $\lVert \hat{h}_\lambda - \tilde{h}_\lambda \rVert_\mathcal{H}$ can be controlled by a term capturing how closely the $(1-k/n)$-quantile of $\lVert X \rVert$ is approximated by the order statistic $\lVert X_{(k)} \rVert$.
\begin{lemma}\label{lem:distance_hh_htilde}
    Let $t = t_{n,k}$ be the $1-\frac{k}{n}$-quantile of $\lVert X \rVert$. Assume the kernel $K$ is bounded by $1$. Then, 
    \begin{equation}\label{eq:distance hh htilde}
        \lVert \hhlambda - \htildelambda \rVert_\rkhs \leq \frac{\max(\tau,1-\tau)}{2\lambda k} \Big\lvert\sum_{i=1}^n\un_{\lVert X_i \rVert \geq \tnk} -k \Big\rvert.
    \end{equation}
\end{lemma}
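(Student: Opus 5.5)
The plan is a stability argument based on the strong convexity of the penalized objectives. Write $N=\sum_{i=1}^n\un_{\lVert X_i\rVert\geq \tnk}$ and $c=\max(\tau,1-\tau)$. Both $\hhlambda$ and $\htildelambda$ minimize objectives of the same shape,
\begin{equation*}
F(h)=\frac1k\sum_{i\in S_k}\pinbl'(Y_i,h(\theta(X_i)))+\lambda\normrkhs[h]^2,\qquad G(h)=\frac1k\sum_{i\in S_t}\pinbl'(Y_i,h(\theta(X_i)))+\lambda\normrkhs[h]^2 .
\end{equation*}
Here $S_k$ is the index set of the $k$ largest norms and $S_t=\{i:\lVert X_i\rVert\geq\tnk\}$. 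Both sets are upper level sets of $i\mapsto\lVert X_i\rVert$ (ties broken in the same way as in the sorting). Hence one contains the other, and their symmetric difference $\Delta$ has exactly $\lvert N-k\rvert$ elements. The difference $G-F=\pm\frac1k\sum_{i\in\Delta}\pinbl'(Y_i,h(\theta(X_i)))$ therefore involves only $\lvert N-k\rvert$ terms, and the unknown normalization is absorbed into this count.

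First, I would show that $G-F$ is Lipschitz on $\rkhs$ with constant $c\lvert N-k\rvert/k$. The loss $\pinbl'(y,\cdot)$ is $c$-Lipschitz. By the reproducing property and $K\leq 1$, $\lvert h(x)-g(x)\rvert=\lvert\langle h-g,K(x,\cdot)\rangle_\rkhs\rvert\leq\normrkhs[h-g]$. Combining these gives
\begin{equation*}
\lvert (G-F)(h)-(G-F)(g)\rvert\leq \frac{c\lvert N-k\rvert}{k}\normrkhs[h-g].
\end{equation*}

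Second, I would establish the quadratic growth of each objective around its minimizer: $F(g)\geq F(\hhlambda)+\lambda\normrkhs[g-\hhlambda]^2$ for all $g$, and similarly for $G$ around $\htildelambda$. Note that $F=L+\lambda\normrkhs[\cdot]^2$ with $L$ convex, because the loss is convex in $z$ and evaluation is linear. To avoid subdifferential calculus for the possibly negative, non-smooth loss, I would use a segment argument. For $s\in(0,1)$, minimality gives $F(\hhlambda)\leq F(\hhlambda+s(g-\hhlambda))$. Convexity of $L$ together with the exact identity
\begin{equation*}
\lVert (1-s)a+sb\rVert^2=(1-s)\lVert a\rVert^2+s\lVert b\rVert^2-s(1-s)\lVert a-b\rVert^2
\end{equation*}
then yields $0\leq s\,(F(g)-F(\hhlambda))-\lambda s(1-s)\normrkhs[g-\hhlambda]^2$. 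Dividing by $s$ and letting $s\to0$ gives the claim. Existence of the minimizers is guaranteed by Lemma~\ref{lem:Existence-uniqueness-minimizer} applied to the two empirical measures, as in Corollary~\ref{cor:Bound_norm}.

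Third, I would apply the growth inequality for $F$ at $g=\htildelambda$ and the one for $G$ at $\hhlambda$, and sum the two:
\begin{equation*}
2\lambda\normrkhs[\hhlambda-\htildelambda]^2\leq (G-F)(\hhlambda)-(G-F)(\htildelambda)\leq \frac{c\lvert N-k\rvert}{k}\normrkhs[\hhlambda-\htildelambda].
\end{equation*}
Dividing by $2\lambda\normrkhs[\hhlambda-\htildelambda]$ (the case where this norm is zero being trivial) yields \eqref{eq:distance hh htilde}.

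The only delicate step is the second one, the quadratic growth for a non-smooth objective whose loss part may be negative. The segment argument above handles it cleanly, since it needs only convexity and the minimizing property. A minor point is the treatment of ties in $\lVert X_i\rVert$, which is settled by breaking ties consistently so that the two index sets are nested.
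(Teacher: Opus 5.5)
Your proof is correct and follows the paper's overall strategy. Both arguments reduce the lemma to the stability inequality
\[
2\lambda\normrkhs[\hhlambda-\htildelambda]^2\leq \emprisk[k](\htildelambda\circ\theta)-\emprisk[k](\hhlambda\circ\theta)+\riskinter(\hhlambda\circ\theta)-\riskinter(\htildelambda\circ\theta),
\]
whose right-hand side is exactly your $(G-F)(\hhlambda)-(G-F)(\htildelambda)$. Both then bound it using the Lipschitz property of $\pinbl'$, the estimate $\lVert\cdot\rVert_\infty\leq\normrkhs[\cdot]$, and the fact that the two index sets differ in $\lvert N-k\rvert$ points.

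The difference lies in how this inequality is obtained. The paper writes the optimality conditions $0\in\partial\riskinterPen[k](\htildelambda)$ and $0\in\partial\empriskPen[k](\hhlambda)$. It then uses the sum rule $\partial(L+\lambda\normrkhs[\cdot]^2)(h)=\partial L(h)+2\lambda\iota h$ to extract subgradients $\widetilde w=-2\lambda\iota\htildelambda$ and $\widehat w=-2\lambda\iota\hhlambda$ of the two unpenalized risks, and adds the two subgradient inequalities. This needs some convex analysis in $\rkhs$; the sum rule is legitimate because the risk terms are finite and continuous on $\rkhs$. It is also the same subgradient viewpoint the paper uses later, through the function $w$, for Proposition~\ref{prop:distance_htilde_hstar}.

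Your segment argument obtains the quadratic growth $F(g)\geq F(\hhlambda)+\lambda\normrkhs[g-\hhlambda]^2$ from convexity, minimality and the identity for $\lVert(1-s)a+sb\rVert^2$ alone. It is therefore more elementary and self-contained, and it makes clear that the possible negativity of $\pinbl'$ plays no role.

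Your treatment of ties, viewing $S_k$ and $S_t$ as initial segments of one sorted order, is also slightly more careful than the paper's identity \eqref{eq:comparison_quantile_statistic_order}. That identity is written with $\un_{\lVert X_i\rVert\geq\lVert X_{(k)}\rVert}$ and is exact only when $\lVert X_{(k)}\rVert\neq\lVert X_{(k+1)}\rVert$, which holds almost surely if $\lVert X\rVert$ has a continuous distribution.
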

\noindent Applying Bernstein's inequality to the right-hand side of $\eqref{eq:distance hh htilde}$ yields a high probability bound on $\normrkhs[\hhlambda - \htildelambda]$.
\begin{corollary}\label{cor:distance_htilde_hh}
    If the kernel $K$ is bounded by $1$, with probability no less than $1-\delta$,
    \begin{equation} \label{eq:bound_hh_htilde}
        \lVert \hhlambda - \htildelambda \rVert_\rkhs \leq \frac{\max(\tau,1-\tau)}{2\lambda} \Big(\sqrt{\frac{2\log(2/\delta)}{k}} + \frac{\log(2/\delta)}{3k}\Big).
    \end{equation}

\end{corollary}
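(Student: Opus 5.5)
The plan is to combine Lemma~\ref{lem:distance_hh_htilde} with a Bernstein bound for the binomial count $N_n = \sum_{i=1}^n \un_{\lVert X_i \rVert \geq \tnk}$. By Lemma~\ref{lem:distance_hh_htilde} it suffices to show that, with probability at least $1-\delta$,
\begin{equation*}
    \frac{1}{k}\lvert N_n - k\rvert \leq \sqrt{\frac{2\log(2/\delta)}{k}} + \frac{\log(2/\delta)}{3k}.
\end{equation*}
Multiplying this by $\max(\tau,1-\tau)/(2\lambda)$ then gives \eqref{eq:bound_hh_htilde} directly.

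First, the summands. The variables $Z_i = \un_{\lVert X_i \rVert \geq \tnk}$ are \iid Bernoulli with parameter $p = \PP(\lVert X \rVert \geq \tnk)$, and $\tnk$ is defined so that $p = k/n$. This is the same normalization used in \eqref{def:intermediate_risk} to write $\riskinter$ with a $1/k$ factor, so I take it as given, e.g.\ assuming the law of $\lVert X \rVert$ is continuous at $\tnk$. Hence $\EE[N_n] = np = k$. The centered variables $Z_i - p$ satisfy $\lvert Z_i - p\rvert \leq 1$, and $\Var(N_n) = np(1-p) \leq k$.

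Second, apply the two-sided Bernstein inequality. For independent centered variables bounded by $b=1$ with variance sum $\sigma^2 \leq k$, it gives, with probability at least $1-\delta$,
\begin{equation*}
    \lvert N_n - k\rvert \leq \sqrt{2\sigma^2\log(2/\delta)} + \frac{b\log(2/\delta)}{3} \leq \sqrt{2k\log(2/\delta)} + \frac{\log(2/\delta)}{3}.
\end{equation*}
The factor $2/\delta$ comes from a union bound over the upper and lower tails. Dividing by $k$ yields exactly the displayed target inequality.

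The only delicate point is the inverted form of Bernstein. The exponential bound $\exp\left(-u^2/(2\sigma^2 + 2bu/3)\right)$ is inverted by solving a quadratic in $u$. Then $\sqrt{a+c^2}\leq \sqrt a + c$ gives the deviation $\sqrt{2\sigma^2 L} + bL/3$ with $L = \log(2/\delta)$; here one uses the sharp form with coefficient $2b/3$, not a cruder constant. The identity $p = k/n$ also deserves a brief remark, since it relies on $\tnk$ being an exact quantile. No further obstacle arises: the randomness of $\hhlambda$ enters only through Lemma~\ref{lem:distance_hh_htilde}, which is deterministic.
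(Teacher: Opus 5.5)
Your reduction is the paper's own: Lemma~\ref{lem:distance_hh_htilde} turns the claim into a two-sided deviation bound for the binomial count $N_n$, which is then controlled by Bernstein's inequality plus a union bound over the two tails. The paper records this bound, with $\log(1/\delta)$ and probability $1-2\delta$, in the proof of Proposition~\ref{prop:statistical_guarantee_ERM}, and it relies on the same implicit identity $\PP(\lVert X\rVert\ge\tnk)=k/n$ that you flag.

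\textbf{The gap is in the step you call delicate.} Set $\exp\bigl(-u^2/(2\sigma^2+2bu/3)\bigr)=e^{-L}$. This gives $(u-bL/3)^2=2\sigma^2L+(bL/3)^2$, so the root is $u=bL/3+\sqrt{2\sigma^2L+(bL/3)^2}$. Applying $\sqrt{a+c^2}\le\sqrt a+c$ then yields $\sqrt{2\sigma^2L}+2bL/3$, not $\sqrt{2\sigma^2L}+bL/3$. In fact the exact root already strictly exceeds $\sqrt{2\sigma^2L}+bL/3$. So this form of Bernstein's inequality can never certify the constant in \eqref{eq:bound_hh_htilde}. As written, your argument proves the corollary only with $2\log(2/\delta)/(3k)$ in place of $\log(2/\delta)/(3k)$.

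\textbf{The fix.} Use the sharper, sub-gamma form of Bernstein's inequality, which is the version the paper cites (Bercu, Delyon and Rio, Theorem 2.1 with Remark 2.4). Suppose the centered summands are bounded above by $b$ and their variances sum to at most $v$.
\begin{itemize}
\item The elementary estimate $e^{u}-1-u\le u^2/\bigl(2(1-u/3)\bigr)$ for $0\le u<3$ gives $\log\EE e^{sS}\le vs^2/\bigl(2(1-bs/3)\bigr)$ for $0\le s<3/b$.
\item Optimizing the Chernoff bound then inverts \emph{exactly} to $\PP\bigl(S\ge\sqrt{2vx}+bx/3\bigr)\le e^{-x}$.
\end{itemize}
Apply this to $S=N_n-k$ and to $S=k-N_n$, whose summands are both bounded above by $1$, with $v=np(1-p)\le k$ and $x=\log(2/\delta)$. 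The rest of your argument then goes through verbatim and gives exactly \eqref{eq:bound_hh_htilde}.
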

\noindent With the bound on $\normrkhs[\hhlambda - \htildelambda]$ established, it remains to control $\normrkhs[\htildelambda - h^*_{\lambda,t} ]$. We leverage a variant of the representer theorem, utilizing Theorem 5.8 and Corollary 5.10 in \cite{christmann2008support}. When paired with a Hilbert-space adaptation of Bernstein's inequality, this approach delivers a tight probabilistic control on $\normrkhs[\hhlambda-\htildelambda]$.

\begin{proposition}[High probability bound on $\lVert \htildelambda - h^*_{\lambda,\tnk} \rVert_\rkhs$]
\label{prop:distance_htilde_hstar}
With probability  no less than  $1-\delta$,
\begin{equation} \label{eq:bound_htilde_h^*}
    \lVert \htildelambda - h^*_{\lambda,\tnk} \rVert_\rkhs \leq \frac{ \max(\tau,1-\tau)}{\lambda} \left(\sqrt{\frac{2 \log(1/\delta)}{k}} + \frac{1}{\sqrt{k}} + \frac{4\log(1/\delta)}{3k}\right).
\end{equation}
\end{proposition}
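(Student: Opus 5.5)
The plan is to use the representer-type stability result of \cite{christmann2008support} (Theorem 5.8 and Corollary 5.10), adapted to the modified pinball loss, which is convex and $\max(\tau,1-\tau)$-Lipschitz in its second argument. Write $t=\tnk$, $L=\max(\tau,1-\tau)$, and let $\Phi(x)=K(\theta(x),\cdot)$ denote the canonical feature map composed with the angle. Let $Q_t=\law\left((X,Y)\given\lVert X\rVert\geq t\right)$. The penalized conditional risk $\riskPen_t$ is the SVM objective under $Q_t$. The intermediate objective $\riskinterPen$ is the SVM objective under the (random, non-normalized) measure
\[
\widetilde Q=\frac1k\sum_{i=1}^n\delta_{(X_i,Y_i)}\un_{\lVert X_i\rVert\geq t}.
\]
Since $\PP(\lVert X\rVert\geq t)=k/n$, this measure satisfies $\EE[\widetilde Q]=Q_t$.

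\textbf{Step 1 (representer/stability inequality).} By the subdifferential characterization of $h^*_{\lambda,t}$, there exists a measurable $\varphi$ with $\lvert\varphi\rvert\leq L$ such that
\[
h^*_{\lambda,t}=-\frac1{2\lambda}\EE_{Q_t}[\varphi(X,Y)\Phi(X)],
\]
where $\varphi(x,y)$ lies in the subdifferential of $z\mapsto\pinbl'(y,z)$ at $z=h^*_{\lambda,t}(\theta(x))$. The argument of Corollary 5.10 of \cite{christmann2008support} uses only convexity of the loss and strong convexity of $\lambda\normrkhs[\cdot]^2$. Comparing the first-order conditions for $\widetilde Q$ and $Q_t$ then gives
\[
\normrkhs[\htildelambda-h^*_{\lambda,t}]\leq\frac1\lambda\Big\lVert\EE_{Q_t}[\varphi\Phi]-\frac1k\sum_{i=1}^n\varphi(X_i,Y_i)\Phi(X_i)\un_{\lVert X_i\rVert\geq t}\Big\rVert_\rkhs .
\]
I will check that nothing in that argument requires $\widetilde Q$ to be a probability measure. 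Convexity of $h\mapsto\int\pinbl'\,d\widetilde Q$ holds for any nonnegative measure, so I expect the argument to go through. Existence of $\htildelambda$ is already given by Corollary~\ref{cor:Bound_norm}.

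\textbf{Step 2 (Hilbert-space Bernstein).} Set
\[
\xi_i=\frac nk\,\varphi(X_i,Y_i)\Phi(X_i)\un_{\lVert X_i\rVert\geq t},
\]
so that the quantity inside the norm equals $\frac1n\sum_i(\EE\xi_i-\xi_i)$. These are i.i.d.\ $\rkhs$-valued random elements. Since $K\leq1$ and $\lvert\varphi\rvert\leq L$, we have $\lVert\xi_i\rVert\leq Ln/k$, and
\[
\EE\lVert\xi_i\rVert^2\leq L^2(n/k)^2\PP(\lVert X\rVert\geq t)=L^2n/k .
\]
The Bernstein inequality for Hilbert-valued variables (e.g.\ Theorem 6.14 of \cite{christmann2008support}) takes $B=2Ln/k$ for the centered variables and $\sigma^2=L^2n/k$. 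It yields, with probability at least $1-\delta$,
\[
\Big\lVert\frac1n\sum_i(\xi_i-\EE\xi_i)\Big\rVert\leq\sqrt{\frac{2\sigma^2\log(1/\delta)}{n}}+\sqrt{\frac{\sigma^2}{n}}+\frac{2B\log(1/\delta)}{3n}.
\]
Substituting gives $L\big(\sqrt{2\log(1/\delta)/k}+1/\sqrt k+4\log(1/\delta)/(3k)\big)$, and dividing by $\lambda$ produces exactly \eqref{eq:bound_htilde_h^*}.

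\textbf{Main obstacle.} The delicate point is Step 1. One must justify the representer formula for $Q_t$ when $Y$ may be non-integrable. This works because the modified loss has subgradient bounded by $L$ and $\lvert\pinbl'(y,z)\rvert\leq L\lvert z\rvert$, so all risks are finite via Lemma~\ref{lem:Existence-uniqueness-minimizer}. One must also apply the stability inequality with $\widetilde Q$ being a random finite measure of mass $\frac1k\sum_i\un_{\lVert X_i\rVert\geq t}$ rather than a probability measure. I will handle this by redoing the short convexity argument directly rather than quoting Corollary 5.10. The argument evaluates the subgradient inequality for the $\widetilde Q$-risk at $\htildelambda$ versus $h^*_{\lambda,t}$, adds $\lambda\normrkhs[\cdot]^2$ strong convexity, and applies Cauchy--Schwarz with the reproducing property.
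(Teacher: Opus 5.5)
Your proposal is correct and follows the paper's route. Both use the representer/stability inequality from Theorem 5.8 and Corollary 5.10 of \cite{christmann2008support}, extended to the finite measure $\frac1k\sum_i\delta_{(X_i,Y_i)}\un_{\lVert X_i\rVert\geq \tnk}$ with the subgradient function attached to the deterministic solution $h^*_{\lambda,\tnk}$. Both then apply the Hilbert-space Bernstein inequality (Theorem 6.14), whose choice of $B$ and $\sigma^2$ reproduces exactly the stated constants. The only difference is cosmetic: you rescale the summands by $n/k$ before applying Bernstein, while the paper multiplies by $n/k$ afterwards, and both rely on the same implicit fact $\PP(\lVert X\rVert\geq\tnk)=k/n$.
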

\noindent The details for the proof of the previous claims are in the Appendix, Section \ref{appendix:proof_bernstein_hh_h^*}. Combining the two high probability bounds \eqref{eq:bound_hh_htilde} and \eqref{eq:bound_htilde_h^*} yields a high probability bound on the distance between the two SVM solutions $\hhlambda$ and $h^*_{\lambda,\tnk}$.

\begin{theorem}[high probability bound on $\lVert \hhlambda - h^*_{\lambda,\tnk}\rVert_\rkhs$]\label{distance_hh_h^*}
    Assume the kernel $K$ is bounded by 1. Then, with probability no less than $1-\delta$,
    \begin{equation}
        \lVert \hhlambda - h^*_{\lambda,\tnk} \rVert_\rkhs \leq  
        \frac{\max(\tau,1-\tau)}{\lambda \sqrt{k}}\left(1 + 3\sqrt{\frac{\log(3/\delta)}{2}} + \frac{3\log(3/\delta)}{2\sqrt{k}}\right).
    \end{equation}
\end{theorem}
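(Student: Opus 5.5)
The plan is to combine Corollary~\ref{cor:distance_htilde_hh} and Proposition~\ref{prop:distance_htilde_hstar} through the triangle inequality in $\rkhs$ and a union bound. For every realization,
\begin{equation*}
    \lVert \hhlambda - h^*_{\lambda,\tnk} \rVert_\rkhs \leq \lVert \hhlambda - \htildelambda \rVert_\rkhs + \lVert \htildelambda - h^*_{\lambda,\tnk} \rVert_\rkhs .
\end{equation*}
We split the confidence budget into $\delta/3$ for each part.

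First, apply Corollary~\ref{cor:distance_htilde_hh} with confidence level $2\delta/3$, so that $\log(2/(2\delta/3)) = \log(3/\delta)$. With probability at least $1-2\delta/3$, this gives
\begin{equation*}
    \lVert \hhlambda - \htildelambda \rVert_\rkhs \leq \frac{\max(\tau,1-\tau)}{2\lambda}\Big(\sqrt{\tfrac{2\log(3/\delta)}{k}} + \tfrac{\log(3/\delta)}{3k}\Big).
\end{equation*}
The factor $2$ inside the logarithm of the corollary comes from a two-sided Bernstein bound on $\sum_i \un_{\lVert X_i\rVert\geq \tnk}$, so it naturally absorbs two thirds of the budget. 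Second, apply Proposition~\ref{prop:distance_htilde_hstar} with level $\delta/3$. With probability at least $1-\delta/3$, this gives
\begin{equation*}
    \lVert \htildelambda - h^*_{\lambda,\tnk}\rVert_\rkhs \leq \frac{\max(\tau,1-\tau)}{\lambda}\Big(\sqrt{\tfrac{2\log(3/\delta)}{k}} + \tfrac{1}{\sqrt k} + \tfrac{4\log(3/\delta)}{3k}\Big).
\end{equation*}
A union bound then shows that both events hold simultaneously with probability at least $1-\delta$.

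It remains to add the two bounds and factor out $\max(\tau,1-\tau)/(\lambda\sqrt k)$. The square-root terms contribute $(\tfrac12+1)\sqrt{2\log(3/\delta)} = 3\sqrt{\log(3/\delta)/2}$, since $\tfrac32\sqrt2 = 3/\sqrt2$. The linear terms contribute $(\tfrac16+\tfrac43)\log(3/\delta)/\sqrt k = \tfrac32\log(3/\delta)/\sqrt k$. The term $1/\sqrt k$ gives the leading $1$. This is exactly the stated bound.

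There is no real obstacle. The only care needed is the choice of confidence split: $2\delta/3$ for the two-sided corollary and $\delta/3$ for the proposition, so that every logarithm reads $\log(3/\delta)$ and the constants combine exactly as stated. If the corollary's $\log(2/\delta)$ is viewed as two one-sided events, each at level $\delta/3$, the accounting is the same.
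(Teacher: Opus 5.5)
Your proof is correct and is the argument the paper intends: the paper states only that the theorem follows by combining Corollary~\ref{cor:distance_htilde_hh} and Proposition~\ref{prop:distance_htilde_hstar}. In the proof of Theorem~\ref{thm:variance_term_bound} it makes the same split (two Bernstein tail events plus one Hilbert-space Bernstein event, giving probability $1-3\delta$ with $\log(1/\delta)$), which becomes your $\log(3/\delta)$ after rescaling $\delta$. Your constant bookkeeping ($\tfrac32\sqrt{2}=3/\sqrt{2}$ and $\tfrac16+\tfrac43=\tfrac32$) is also correct.
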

\noindent Based on this high-probability control, we are now in a position to establish a non-asymptotic bound for the variance term $D(n,k,\lambda)$. The proof is deferred to the Appendix, Section \ref{appendix:proof_variance_bound}.
\begin{theorem}[Control of the variance term] \label{thm:variance_term_bound}
    Consider $\hhlambda$ issued by Algorithm \ref{algo:SVM}. Assume the kernel $K$ of the RKHS $\rkhs$ is continuous and bounded by $1$. Define
    \begin{equation*}
        C(k,\delta) =  \Big(1 + 3 \sqrt{\frac{ \log(1/\delta)}{2}} + \frac{3\log(1/\delta)}{2\sqrt{k}}\Big).
    \end{equation*}
    Then, it holds that, with probability no less than $1-4\delta$:
    \begin{equation}
        D(n,k,\lambda) \leq D_1(k,\lambda,\delta) + D_2(k,\lambda,\delta),
    \end{equation}
    where
    \begin{equation}
        D_1(k,\lambda,\delta) = \frac{\max(\tau,1-\tau)^2}{\lambda k} \Big(\sqrt{\frac{\log(1/\delta)}{2}} + \frac{\log(1/\delta)}{3\sqrt{k}}\Big)^2
    \end{equation}
    and 
    \begin{equation}
        D_2(k,\lambda,\delta) = \frac{\max(\tau,1-\tau)^2}{\lambda k} C(k,\delta) \Big( 3 + \sqrt{\frac{2\log(1/\delta)} {k}}
        + \frac{16 \log(1/\delta)}{3k}\Big). 
    \end{equation}

\end{theorem}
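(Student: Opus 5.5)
We start from the decomposition already sketched before Lemma~\ref{lem:distance_hh_htilde}. Write $t=\tnk$, $h^*=h^*_{\lambda,t}$, $\widehat h=\hhlambda$, and set $c_\tau=\max(\tau,1-\tau)$. Since $\widehat h$ minimizes $\empriskPen_k$, we have
\begin{equation*}
D(n,k,\lambda)\le \Delta(\widehat h)-\Delta(h^*),\qquad \Delta(h)=\risk_t(h\circ\theta)-\emprisk_k(h\circ\theta).
\end{equation*}
This is sharper than the global supremum in \eqref{eq:bound_diff_risk_sup_1}, because we keep the difference. We split it through the pseudo-empirical risk $\riskinter$ as
\begin{equation*}
\Delta(\widehat h)-\Delta(h^*)=\big[(\risk_t-\riskinter)(\widehat h\circ\theta)-(\risk_t-\riskinter)(h^*\circ\theta)\big]+\big[(\riskinter-\emprisk_k)(\widehat h\circ\theta)-(\riskinter-\emprisk_k)(h^*\circ\theta)\big].
\end{equation*}
The second bracket is where the random number of exceedances enters and will give $D_1$. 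The first is a centered empirical process indexed by a localized ball and will give $D_2$.

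For the first bracket we linearize in $\rkhs$. By the reproducing property, $h(\theta(x))=\langle h,\Phi(\theta(x))\rangle$ with $\lVert\Phi\rVert\le1$. The loss $\pinbl'$ is Lipschitz, so for $h$ near $h^*$ we can write
\[
\pinbl'(y,h(\theta(x)))-\pinbl'(y,h^*(\theta(x)))=g(x,y)\,\langle h-h^*,\Phi(\theta(x))\rangle
\]
with $|g|\le c_\tau$. Alternatively, and more cleanly, we use the representer/subgradient characterization of $h^*$ (Theorem 5.8 in \cite{christmann2008support}, already invoked for Proposition~\ref{prop:distance_htilde_hstar}) together with convexity. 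We then bound the centered average of these increments. Conditioning on the number of exceedances $N=\sum_i\un_{\lVert X_i\rVert\ge t}$, the pairs above $t$ are i.i.d.\ from $\law(X,Y\mid\lVert X\rVert\ge t)$. A Hilbert-space Bernstein inequality, or a contraction/Rademacher argument on the ball $\ball_\rkhs(h^*,\rho)$, then gives a deviation of order $c_\tau\rho\,(1+\sqrt{2\log(1/\delta)/k}+\log(1/\delta)/k)/\sqrt{k}$. Here the Rademacher complexity of the localized ball is at most $\rho/\sqrt{k}$, since the kernel is bounded by $1$. Next we plug in the radius
\[
\rho=\frac{c_\tau}{\lambda\sqrt k}\,C(k,\delta)
\]
from Theorem~\ref{distance_hh_h^*}, which holds with probability at least $1-3\delta$ after rescaling $\delta$. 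This yields $\frac{c_\tau^2}{\lambda k}C(k,\delta)\big(3+\sqrt{2\log(1/\delta)/k}+\frac{16\log(1/\delta)}{3k}\big)=D_2$. The constant $3$ should collect the $1/\sqrt k$ expectation term, the factor $2$ from symmetrization, and the term from the mismatch between $N$ and $k$.

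For the second bracket, $\riskinter$ and $\emprisk_k$ differ only in that one sums over indices with $\lVert X_i\rVert\ge t$ and the other over the top $k$ order statistics. The two index sets differ by $|N-k|$ points. Hence the difference of increments is at most
\[
\frac{1}{k}\,|N-k|\,c_\tau\sup_x|\widehat h(\theta(x))-h^*(\theta(x))|\le \frac{c_\tau}{k}|N-k|\,\lVert\widehat h-h^*\rVert_\rkhs .
\]
We would rather pair it with the proof of Lemma~\ref{lem:distance_hh_htilde}: bound $\lVert\widehat h-\htildelambda\rVert_\rkhs\le\frac{c_\tau}{2\lambda k}|N-k|$, and use $\lambda$-strong convexity of the penalized objective. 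The gap in $\riskinter^{(\lambda)}$ induced by the index swap is then of order $\lambda\lVert\widehat h-\htildelambda\rVert^2\lesssim\frac{c_\tau^2}{\lambda k^2}|N-k|^2$. Bernstein's inequality for the binomial $N$ gives $|N-k|/\sqrt k\le\sqrt{2\log(1/\delta)}+\frac{\log(1/\delta)}{3\sqrt k}$ with probability at least $1-\delta$. Squaring and absorbing the factor $1/4$ yields exactly $D_1$. A union bound over the three events from Theorem~\ref{distance_hh_h^*} and the Bernstein event gives probability at least $1-4\delta$.

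The main obstacle is the localized empirical-process step. It must use a deterministic-radius ball, since $\widehat h$ is data-dependent, and handle the random index set of size $N\ne k$ while still producing the clean $1/(\lambda k)$ rate. We will first try conditioning on $N$ and applying a Hilbert-valued Bernstein inequality to the subgradient-weighted feature sums, combined with the strong-convexity inequality
\[
\riskPen_t(\widehat h)-\riskPen_t(h^*)\le\langle\text{gradient gap},\widehat h-h^*\rangle .
\]
This converts the problem into bounding a product of two $O(1/\sqrt k)$ Hilbert norms. If that fails, we fall back on symmetrization over the ball of radius $\rho$.
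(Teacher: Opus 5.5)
Your outline matches the paper's: an index-swap term giving $D_1$, plus a centered process over the deterministic ball $\ball_\rkhs(h^*_{\lambda,\tnk},\rho)$, with $\rho=\frac{c_\tau}{\lambda\sqrt k}C(k,\delta)$ and $c_\tau=\max(\tau,1-\tau)$. However, the central step, a high-probability bound for that localized process, is not established.

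\textbf{The localized process.} Your preferred route cannot work. In your linearization the weight $g$ depends on $h$. Moreover, off ties the subgradients of $\pinbl'$ at the data-dependent $\hhlambda$ are indicator-valued, $(1-\tau)\un\{Y<\hhlambda(\theta(X))\}-\tau\un\{Y>\hhlambda(\theta(X))\}$, hence discontinuous in $h$. The Hilbert-space Bernstein inequality controls $\frac1k\sum_i w\,\Phi(\theta(X_i))\un_{\lVert X_i\rVert\geq\tnk}$ only for one fixed $w$; that is why the paper uses it only at $h^*_{\lambda,\tnk}$. Your gradient-gap inequality leaves exactly such a sum evaluated at the random $\hhlambda$. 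That is a supremum over indicator-weighted classes, on which the RKHS-norm constraint gives no distribution-free complexity control.

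Your fallback is the paper's route, but it only bounds the \emph{expectation}. Symmetrization, contraction, the bound $\EE_\sigma\sup_{h\in\ball_\rkhs(\rho)}\sum_{i\leq\condK}\sigma_i h(\theta(X_i^k))\leq\rho\sqrt{\condK}$ (after the conditioning lemma) and Jensen ($\EE\sqrt{\condK}\leq\sqrt k$) give $\psi_{n,k,\delta}\leq 2c_\tau\rho/\sqrt k$, nothing more. What is missing is a variance-sensitive concentration inequality for the supremum. The paper applies Talagrand's inequality (Theorem 7.5 in \cite{christmann2008support}, $\gamma=1/2$) to all $n$ terms, with range $2c_\tau\rho$ and variance $c_\tau^2\rho^2k/n$; bounded differences would lose a factor $\sqrt{n/k}$. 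This step needs pointwise measurability of the ball, which is where continuity of $K$ enters, and it costs the fourth $\delta$.

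Several details follow from this.
\begin{itemize}
\item The constant $3$ is $\frac32\cdot 2$ (Talagrand's $1+\gamma$ times symmetrization). The mismatch between $N$ and $k$ contributes nothing beyond Jensen.
\item Your probability count is off. The two-sided Bernstein event for $\lvert N-k\rvert$ (cost $2\delta$) and the Hilbert-space Bernstein event (cost $\delta$) together produce the radius $\rho$, and the same Bernstein event serves $D_1$. The fourth $\delta$ is the concentration event, which you omit.
\item The deviation you assert is too small by a factor $\sqrt k$. Talagrand's deviation terms are $c_\tau\rho\big(\sqrt{2\log(1/\delta)/k}+\frac{16\log(1/\delta)}{3k}\big)=\frac{c_\tau\rho}{\sqrt k}\big(\sqrt{2\log(1/\delta)}+\frac{16\log(1/\delta)}{3\sqrt k}\big)$. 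This is consistent with the fact that, for a single $h$, the centered average can already fluctuate at scale $c_\tau\rho/\sqrt k$.
\item The paper's last display divides these two terms by $\sqrt k$ once more. What the argument actually yields is $\frac{c_\tau^2}{\lambda k}C(k,\delta)\big(3+\sqrt{2\log(1/\delta)}+\frac{16\log(1/\delta)}{3\sqrt k}\big)$, with the same $1/(\lambda k)$ rate.
\end{itemize}

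\textbf{The $D_1$ part.} This also needs restructuring. Having used $\empriskPen[k](\hhlambda\circ\theta)\leq\empriskPen[k](h^*_{\lambda,\tnk}\circ\theta)$, your second bracket involves $h^*_{\lambda,\tnk}$. Its natural bound on the Bernstein event, $\frac{c_\tau}{k}\lvert N-k\rvert\rho\leq\frac{c_\tau^2}{\lambda k}C(k,\delta)\big(\sqrt{2\log(1/\delta)}+\frac{\log(1/\delta)}{3\sqrt k}\big)$, is linear in $\lvert N-k\rvert$ and carries the factor $C(k,\delta)$, so it is not $D_1$.

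A term quadratic in $\lvert N-k\rvert$ requires passing through $\htildelambda$, as the paper does. Since $\riskinterPen[k](\htildelambda\circ\theta)\leq\riskinterPen[k](h^*_{\lambda,\tnk}\circ\theta)$, one gets $D(n,k,\lambda)\leq G+d_2$. Here $d_2$ is your first bracket and $G=\riskinterPen[k](\hhlambda\circ\theta)-\riskinterPen[k](\htildelambda\circ\theta)$.

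Your strong-convexity idea then works, though not as ``$G$ is of order $\lambda r^2$''. Set $r=\lVert\hhlambda-\htildelambda\rVert_\rkhs$ and $a=c_\tau\lvert N-k\rvert/k$. Optimality of $\hhlambda$ for $\empriskPen[k]$ gives $\empriskPen[k](\htildelambda\circ\theta)-\empriskPen[k](\hhlambda\circ\theta)\geq\lambda r^2$, and the index swap costs at most $ar$. Hence $G\leq ar-\lambda r^2\leq a^2/(4\lambda)$, without even invoking Lemma~\ref{lem:distance_hh_htilde}.

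This improves on the paper's $ar\leq a^2/(2\lambda)$, and it is precisely what lands inside the displayed $D_1$. It gives $\frac{c_\tau^2}{\lambda k}\big(\sqrt{\log(1/\delta)/2}+\frac{\log(1/\delta)}{6\sqrt k}\big)^2\leq D_1$. With $a^2/(2\lambda)$ one only obtains $\frac{c_\tau^2}{\lambda k}\big(\sqrt{\log(1/\delta)}+\frac{\log(1/\delta)}{3\sqrt{2k}}\big)^2$, whose leading term is twice that of $D_1$.
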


\subsubsection{Bounding the approximation error}
The remaining term to analyze in Theorem \ref{thm:SVM_bias_variance_dec} is the asymptotic approximation error $\approxerror_2^\infty(\lambda)$, defined by
\begin{equation}
    \approxerror_2^\infty(\lambda) = \inf_{h \in \rkhs} \Big(\risk_\infty(h \circ \theta) + \lambda \normrkhs[h]^2 \Big) - \risk_\infty^*.
\end{equation}
A standard assumption regarding this error is formulated as follows.
\begin{assumption}\label{assumption:approx_error}
    There exists $c>0$ and $\beta \in (0,1]$ such that, for any $\lambda > 0$,
    \begin{equation*}
        \approxerror_2^\infty(\lambda) \leq c \lambda^\beta.
    \end{equation*}
\end{assumption}
\noindent Combining Assumption \ref{assumption:approx_error} with the decomposition from Theorem \ref{thm:SVM_bias_variance_dec} and the variance bound established in Theorem \ref{thm:variance_term_bound}, and and choosing $\lambda = k^{-\frac{1}{\beta+1}}$, we find that the error term $\approxerror_2^{\infty}(\lambda) + D(\lambda,k,\delta)$ is of order $k^{-\frac{\beta}{\beta+1}}$. In this section, we discuss additional conditions that ensure Assumption~\ref{assumption:approx_error} is satisfied and we outline several directions for future research.
A particularly simple example  where Assumption~\ref{assumption:approx_error} holds true with $\beta=1$, is when the quantile function belongs to the angular RKHS $\rkhs_\sphere$, as stated formally below.
\begin{proposition}
    Write $q_{\Plim} = h_{\Plim} \circ \theta$ as in Lemma \ref{lemma:angularity-extreme quantile} and suppose that $h_{\Plim}$ belongs to $\rkhs_{\sphere}$. Then, $\approxerror_2^\infty(\lambda) \leq \lambda \lVert h_{\Plim}\rVert_{\rkhs_\sphere}^2$.
\end{proposition}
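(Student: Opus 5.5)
The plan is to bound the infimum in the definition of $\approxerror_2^\infty(\lambda)$ from above by plugging in one well-chosen element of $\rkhs$. Because $h_{\Plim}\in\rkhs_\sphere$, the definition of the norm on $\rkhs_\sphere$ (a minimum, attained by the RKHS-structure proposition) yields some $\bar h\in\rkhs$ with $\bar h\vert_\sphere = h_{\Plim}$ and $\normrkhs[\bar h] = \lVert h_{\Plim}\rVert_{\rkhs_\sphere}$. Since $\theta$ takes values in $\sphere$, we have $\bar h\circ\theta = h_{\Plim}\circ\theta = \quantileinf$ on $\R^d\setminus\{0\}$. This set carries all the mass, because $\PP(X=0)=0$ and $\PP(X_\infty = 0)=0$ by \eqref{eq:Plim}.

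The argument then has two steps. First,
\begin{equation*}
\approxerror_2^\infty(\lambda) \leq \risk_\infty(\bar h\circ\theta) + \lambda\normrkhs[\bar h]^2 - \risk_\infty^* = \risk_\infty(\quantileinf) + \lambda \lVert h_{\Plim}\rVert_{\rkhs_\sphere}^2 - \risk_\infty^* .
\end{equation*}
Second, item 4 of Theorem~\ref{Main theorem proba} gives $\risk_\infty^* = \risk_\infty(\quantileinf)$, so the two risk terms cancel and the claimed inequality follows.

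The only point needing care is that every quantity is well defined and finite. The function $\bar h$ is bounded, since $\lvert \bar h(x)\rvert \leq \normrkhs[\bar h]\sqrt{K(x,x)}\leq \normrkhs[\bar h]$. Hence $\EE[\lvert \bar h\circ\theta(X)\rvert]<\infty$, and $\bar h\circ\theta$ is admissible in the definitions of $\risk_\infty$ and $\risk_\infty^*$. The continuity of $h_{\Plim}$ from Lemma~\ref{lemma:angularity-extreme quantile} is not even needed here. I would also remark that item 1 of Theorem~\ref{Main theorem proba} identifies $\risk_\infty(\quantileinf)$ with $\risk_{\Plim}(\quantileinf)$, so the argument is consistent with either form of the risk. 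No step is a real obstacle; the main subtlety is using the attained minimum (or an $\varepsilon$-approximating extension with $\varepsilon\to 0$) in the definition of $\lVert\cdot\rVert_{\rkhs_\sphere}$.
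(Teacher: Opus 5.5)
Your proposal is correct and follows the paper's argument: plug an element of $\rkhs$ whose restriction to $\sphere$ is $h_{\Plim}$ into the infimum defining $\approxerror_2^\infty(\lambda)$, then cancel the risk terms using item 4 of Theorem~\ref{Main theorem proba}. The one difference is that you make explicit, via the minimal-norm extension $\bar h$ with $\normrkhs[\bar h] = \lVert h_{\Plim}\rVert_{\rkhs_\sphere}$, the identification between $\rkhs$ and $\rkhs_\sphere$ that the paper's proof uses only implicitly when it writes $\normrkhs[h_{\Plim}]$.
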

\begin{proof}
    This is a straightforward adaptation of the result of \cite{christmann2008support}, Corollary~5.18, with some necessary adaptation caused by the angular structure of the considered risk functionals and the extreme quantile function $q_{\Plim}$. Write $q_{\Plim} = h_{\Plim} \circ \theta$ as in Lemma \ref{lemma:angularity-extreme quantile}, and assume that $h_{\Plim} \in \rkhs_\sphere$. By Theorem \ref{Main theorem proba}, item 4, $\risk_\infty^* = \risk_\infty(q_{\Plim}) = \risk_\infty(h_{\Plim} \circ \theta)$. We obtain 
    \begin{align*}
        \approxerror_2^\infty(\lambda) &= \inf_{h \in \rkhs} \Big(\risk_\infty(h \circ \theta) + \lambda \normrkhs[h]^2 \Big) - \risk_\infty^*\\
        &\leq \risk_\infty(h_{\Plim} \circ \theta) + \lambda \normrkhs[h_{\Plim}]^2 - \risk_\infty(h_{\Plim} \circ \theta) = \lambda \lVert h_{\Plim} \rVert_{\rkhs_\sphere}^2.
    \end{align*}
\end{proof}
\noindent The assumption that $h_{\Plim}$ belongs to $\rkhs_{\sphere}$ is arguably very  stringent. 
Several relaxations of this condition have been proposed in the SVM literature. For instance, consider the case where $(X,Y) \sim P$ where $P$ be any distribution on $\R^d \times \R$. In the case of quantile regression in this setting where no extrapolation is involved, the approximation error is
\begin{equation*}
    \approxerror_2(\lambda) = \inf_{h \in \rkhs} \big(\EE_P[\pinbl(Y,h(X))] + \lambda \normrkhs[h]^2\big) - \EE_P[\pinbl(Y,q(X))].
\end{equation*}
\cite{christmann2008support} (section 5.6) show that if the corresponding quantile function $q$ belongs to some interpolation space \citep[see, e.g.,][]{smale2003estimating}, then Assumption \ref{assumption:approx_error}  is satisfied with explicit forms for $\beta$ and $c$. Alternatively, if $q$ belongs to a Besov-like space and a Gaussian kernel is employed, \cite{eberts2012optimal} and \cite{meister2016optimal} obtain explicit upper bounds on the approximation error by exploiting the regularity of $q$ and utilizing a convolution scheme to smooth the conditional quantile function.
We anticipate that these results could be extended to our context. The primary bottleneck would be modifying the working assumption in the cited reference, namely, that the input space is a subset of $\R^d$ with non-empty interior and a bounded Lipschitz domain,  to accommodate the case of interest here, where the kernel's domain  is the sphere $\sphere$.

\section{Case study on a real dataset}\label{sec:expes}

The aim of our experiments is threefold: (i) to compare the relative performance of ERM using an unpenalized linear model that satisfies the VC assumptions against, an SVM with a Gaussian kernel, thereby illustrating the bias-variance trade-off; (ii) to visualize the effect of marginal standardization on predicted and observed target values when covariates are not on a standard Pareto scale; and (iii) to compare the proposed SVM and ERM approaches against natural baselines, namely naive methods that do not specifically rely on the angular component of the covariate, or that apply a standard mean-deviation standardization instead of the probability integral transform.


Introduced by \cite{asadi2015extremes}, the dataset we consider constitutes a classic benchmark in the extreme value theory literature, having been further analyzed across multiple studies \citep{engelke2020graphical, mhalla2020causal}. It comprises river flow measurements from $31$ gauging stations in the upper Danube basin.
Our goal  is to predict a conditional quantile of the river flow at a target station, based on large measurements from the other stations. This prediction should be reliable in a crisis scenario \emph{e.g.} in case of a flood in the Danube basin, which translates in our setting to unusually high flow levels at one or more stations. 

This study focuses exclusively on learning the conditional median, but our methodology generalizes to arbitrary quantiles.
We treat the daily measurements as an \iid sample. While this assumption ignores potential temporal dependencies, we set this issue aside for the purpose of our analysis. In our experiment, we take $\lVert \cdot \rVert = \lVert \cdot \rVert_\infty$, so that the event $\lVert X \rVert \geq t$ occurs whenever one of the component of $X$ exceeds the threshold $t$.

\subsection{Workflow}
This section describes the proposed prediction procedure, which is applicable for extreme values of the covariate. We describe in detail the marginal standardization step, the choice and rescaling of the target station, and the cross-validation procedure employed to select all hyper-parameters.
\paragraph{Marginal standardization}
The observations are grouped into a vector $\widetilde{X} \in \R^{31}$.
We begin by a marginal  standardization step, aiming at transforming  the vector $\widetilde{X}$ to unit-Pareto margins. This standardization step is necessary in this case because the river flow is not homogeneous along the Danube river basin, thus the distribution of the observed river flow vary greatly from one station to another.
The standardization of the vector $\widetilde{X}$ considered here follows the approach of \cite{coles1994statistical} (see also \cite{coles1991modelling} or \cite{beirlant2006statistics}, Equation (9.65)). 
Denote by $F_j$ the cumulative distribution function of the marginal $\widetilde{X}_j$. For some high threshold $u_j$, the exceedance over the threshold $u_j$ is modeled by a generalized Pareto distribution, \ie, denoting $\zeta_j = \PP(X_j>u_j)$ for some parameters $\xi_j, \sigma_j$ and for any $x>u_j$,
\begin{equation*}
    F_j(x) = 1 - \zeta_j(1 + \xi_j \frac{x - u_j}{\sigma_j})^{-1/\xi_j}.
\end{equation*}
The transformation of $F_j$ in the bulk of the distribution, \ie in the domain $x<u_j$, is done with the marginal empirical distribution. Given an \iid sample $\widetilde{X}_1,\dots,\widetilde{X}_n$, for each station $j \in \{1,\dots,31\}$ and corresponding vector $\widetilde{X}_{:j}$, the estimation of $F_j(x)$ is
\begin{equation*}
\widehat{F}_j(x) = 
\begin{cases}
 1 - \zeta_j \left(1+ \hat{\xi}_j \frac{x - \hat{u}_j}{\hat{\sigma}_j}\right)^{-\frac{1}{\hat{\xi}_j}} \text{ if } x \geq \hat{u}_j\\
 \frac{1}{n}\sum_{k=1}^n \un_{\tilde{X}_{k,j}\leq x} \text{ otherwise.} 
\end{cases}
\end{equation*}
where $\hat{u}_j$ is an empirical $\zeta_j$-quantile for $F_j$, and $\hat{\xi}_j, \hat{\sigma}_j$ are estimated by maximum likelihood.
The vector of observations for station $j$, standardized to unit Pareto margins is defined by
\begin{equation*}
    \widehat{V}_j = \left(\frac{1}{1-\widehat{F}_j(\tilde{X}_{ij})}\right)_{i \in \{1, \dots, n\}}.
\end{equation*}
In the following, we work with the resulting standardized dataset $\widehat{V} = (\widehat{V}_{i,j})_{(i,j) \in \{1,\dots,n\} \times \{1,\dots,31\}}$. Without loss of generality we assume that the target station is the last one,  and we denote $(X_i,Z_i) = \widehat{V}_i$. In this work we do not investigate theoretically the impact of the standardization error, leaving this important question to further work. This question is a notoriously technical one in the literature of multivariate EVT involving pseudo-polar coordinates with non-linear transforms, although some asymptotic results exist regarding estimation of the angular measure in dimension $d=2$ using an empirical estimator of the marginal distribution functions \citep{EinmahlPiterbargHaan2001,einmahl2009maximum}, as well as non-asymptotic ones in arbitrary dimension \citep{ClemenconJalalzaiSabourinSegers2023}.

\paragraph{Prediction algorithm}
In light of Example \ref{example:prediction_missing_component}, we propose the following procedure for predicting the missing component of the (transformed) vector $(X,Z)$:
\begin{enumerate}
    \item For each measurement $(X_i,Z_i)$, compute $Y_i = \frac{Z_i}{\lVert X_i \rVert}$.
    \item Use Algorithm \ref{algo:ERM} or \ref{algo:SVM} to learn an estimate $\widehat q_Y(x)$ of the quantile function of $Y$ given $X=x$. 

The theoretical justification relies on Example~\ref{example:prediction_missing_component} which requires regular variation as well as a non degeneracy condition for the limiting distribution.

\item The estimated conditional quantile function for the Pareto-scaled target $Z$ given the transformed covariate $x$ is then 
$$
\widehat q_Z(x) = \|x\| \widehat q_Y(x). 
$$
\item In the end, when it comes to predicting the original target, based on a new observation $\tilde x_{1:30} = (\tilde x_1,\ldots, \tilde x_{30})$, we apply the Pareto transformation to the new covariate, and the inverse transformation to the estimated quantile function. The final output is the predicted conditional quantile of $\widetilde X_{31}$ given $\big\{(\widetilde X_1,\ldots \widetilde X_{30}) = \tilde x_{1:30}\}$, 
$$
\widehat q_{\widetilde X}(\tilde x_{1:30})= \widehat F_{31}^{\leftarrow}\Big(1 - \frac{1}{ \widehat q_Z(x)}\Big), 
$$
where $x=(x_1,\ldots, x_{30})$ with $x_j = \big(1 - \widehat F_j(\tilde x_j)\big)^{-1}$ and where $(\point)^\leftarrow$ denotes the generalized quantile function.

\end{enumerate}

\paragraph{Choice of the target station}
 Our proposed method is applicable to any target station $Z$ for which the normalized variable $Y = Z/\lVert X \rVert$ does not degenerate to $0$ as $\lVert X \rVert \to \infty$. Consequently, our theoretical guarantees apply to target stations where extreme values coincide with large covariate norms. Based on this criterion, stations 15 through 18 serve as suitable candidates for our analysis. As illustrated in Figure \ref{fig:map_danube}, these stations are situated consecutively in a downstream direction of the Danube.
 \begin{figure}
     \centering
     \includegraphics[width=0.7\linewidth]{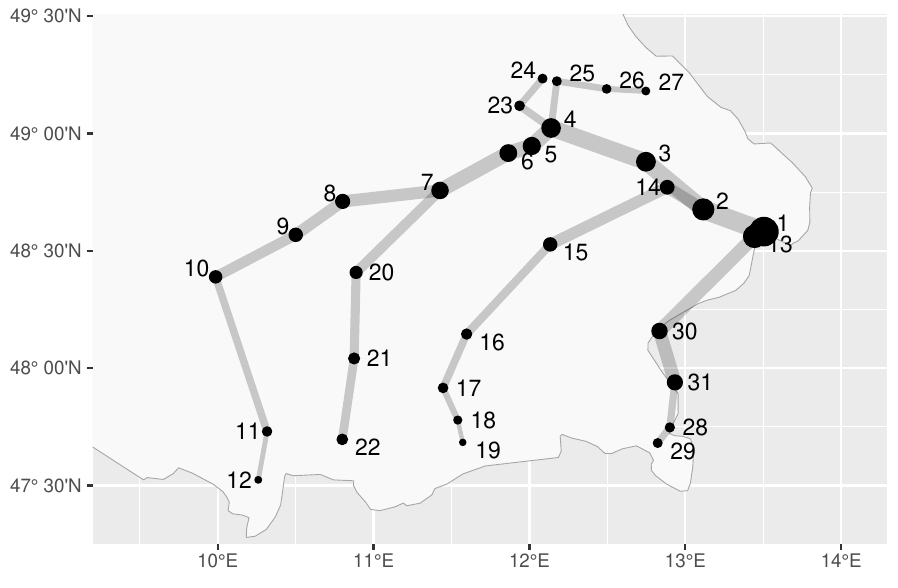}
     \caption{The geographical distribution of the gauging stations along the Danube in Bavaria, Germany. Line thickness is proportional to the average flow volume.}
     \label{fig:map_danube}
 \end{figure}
 Focusing on station 18, Figure~\ref{fig:scatter_plot} 
shows that large values for $Z$ and $\lVert X \rVert$ may occur simultaneously. The scatter plot of $Y = Z/\lVert X \rVert$ against $\lVert X \rVert$ illustrates that, conditionally on $\lVert X \rVert \geq t$, the distribution of $Y$ approaches a mixture of a point mass at zero and a non-degenerate component.
More precisely, Figure~\ref{fig:scatter_plot} suggests that the conditional distribution of $Y$ given $X=x$ degenerates to $\delta_0$ whenever $x$ lies in the region where the coordinates corresponding to stations 15, 16, and 17 are negligible relative to $\lVert x \rVert$, and remains non-degenerate otherwise. This empirical insight indicates that the framework established in Example~\ref{example:prediction_missing_component} may not strictly hold. Furthermore, due to the asymptotic dependencies among the coordinates of $\widetilde{X}$, the existence of a limiting density $\pi_\infty$ for the distribution of the limiting vector  as in Example \ref{example:prediction_missing_component} is not guaranteed. Nevertheless, despite these theoretical challenges, the empirical performance of our procedure remains robust. A rigorous treatment of these issues lies beyond the scope of this work and is deferred to future research.

The selection of the station to carry our experiments is consistent with the existing work on the Danube dataset. In particular, stations $15, 16, 17$ and $18$ are flow connected and, as demonstrated in \cite{engelke2020graphical}, they exhibit asymptotic dependence.

\paragraph{Selection of hyper-parameters}
In all our experiments, the hyper-parameters are selected via cross-validation. We investigate two learning frameworks: Empirical Risk Minimization (ERM) and Support Vector Machines (SVM) equipped with a Gaussian kernel. The hyperparameter configurations differ substantially between the two approaches. In the ERM framework, the optimization involves a single parameter, namely the number of extreme observations retained for training, denoted by $\ktrain$. Conversely, the SVM approach requires the simultaneous tuning of three hyper-parameters: the training sample size $\ktrain$, the regularization parameter $\lambda$, and the Gaussian kernel bandwidth $\gamma$. In the latter case, the hyper-parameters $\gamma$ and $\lambda$ are selected via a grid search already implemented in the \texttt{liquidSVM} package \citep{steinwart2017liquidsvm}. We now detail the specific procedure employed to select the remaining parameter, $\ktrain$.

To select the optimal number of training observations $k_{\text{train}}$, we employ a modified cross-validation scheme. The entire dataset is partitioned into $K$ disjoint folds. In each iteration, a single fold is reserved for training the model, while the remaining $K-1$ folds constitute the validation set. This layout inverts the standard $K$-fold cross-validation procedure, where $K-1$ folds are typically used for training. The rationale behind this design stems from the fact that the generalization error is evaluated exclusively on a subsample of the validation set containing the most extreme observations. Consequently, the validation set must be sufficiently large to ensure a statistically reliable evaluation of the performance metrics above high thresholds. In our empirical implementation, we set $K=5$. Thus, in each iteration, the training set comprises $20\%$ of the total observations, while the validation set contains the remaining $80\%$.
We evaluate the predictive performance by comparing the estimated conditional median $m(X)$ learned during training against the true values of $Y$ in the validation set. To do so, we employ an empirical approximation of the asymptotic pinball risk, computed exclusively on the subset of validation points exhibiting the largest covariate norms. Specifically, the asymptotic risk is estimated via
\begin{equation}
\frac{1}{\ktest}\sum_{i=1}^{\ktest} \ell_{1/2}(Y_{(i)}, m(X_{(i)})),
\end{equation}
where $(X_{(1)}, Y_{(1)}), \dots, (X_{(\ktest)}, Y_{(\ktest)})$ denote the observations corresponding to the $\ktest$ largest covariate norms $\lVert X \rVert$. This estimator provides a precise approximation of the asymptotic risk provided the fraction $\tau_{\textrm{test}} = \ktest/\ntest$ is sufficiently small. In our empirical analysis, we fix $\tau_{\textrm{test}} = 1\%$.

We perform these measurements over $10$ permutations of the whole transformed dataset, varying the number $\ktrain$ of extreme observations to learn from. We compare the results obtained using Algorithm \ref{algo:ERM} and Algorithm \ref{algo:SVM}. The ERM framework (Algorithm~\ref{algo:ERM}) is implemented using the \texttt{sklearn} quantile regressor, where the predictor class $\mathcal{F}$ is restricted to linear functionals $h(\theta) = \langle w, \theta \rangle$ for $w \in \R^d$. For the SVM alternative (Algorithm~\ref{algo:SVM}), the model operates over a Gaussian RKHS with an adjustable bandwidth parameter $\gamma$.

\subsection{Experimental Results}
\begin{figure} 
    \begin{subfigure}[b]{0.4\textwidth}
        \centering
        \includegraphics[width=1\textwidth]{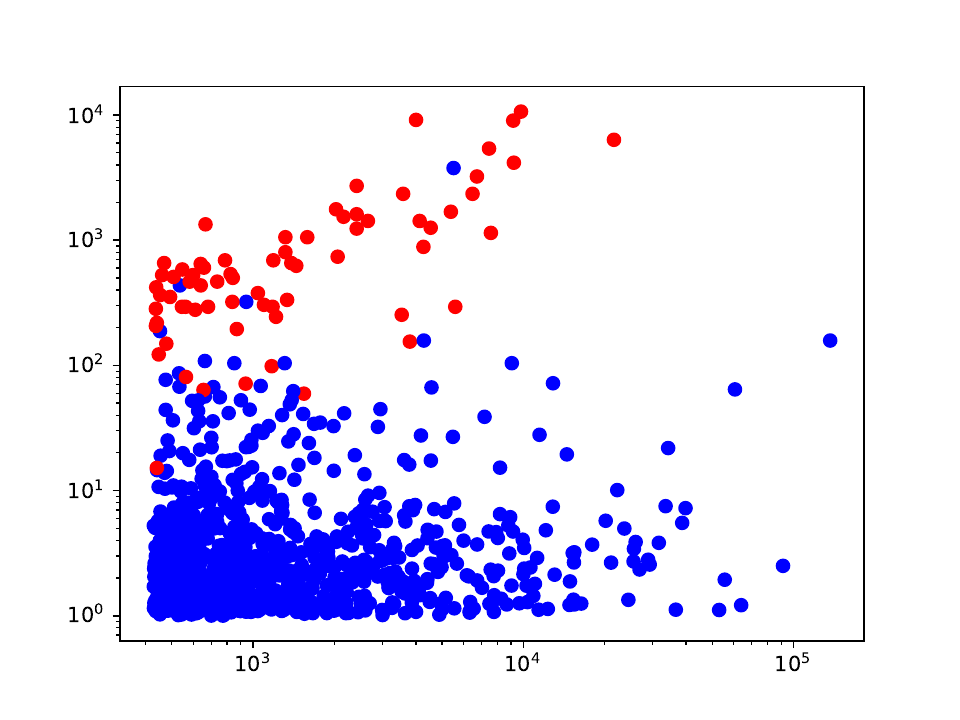}
    \end{subfigure}
    \centering
    \begin{subfigure}[b]{0.4\textwidth}
        \centering
        \includegraphics[width=1\textwidth]{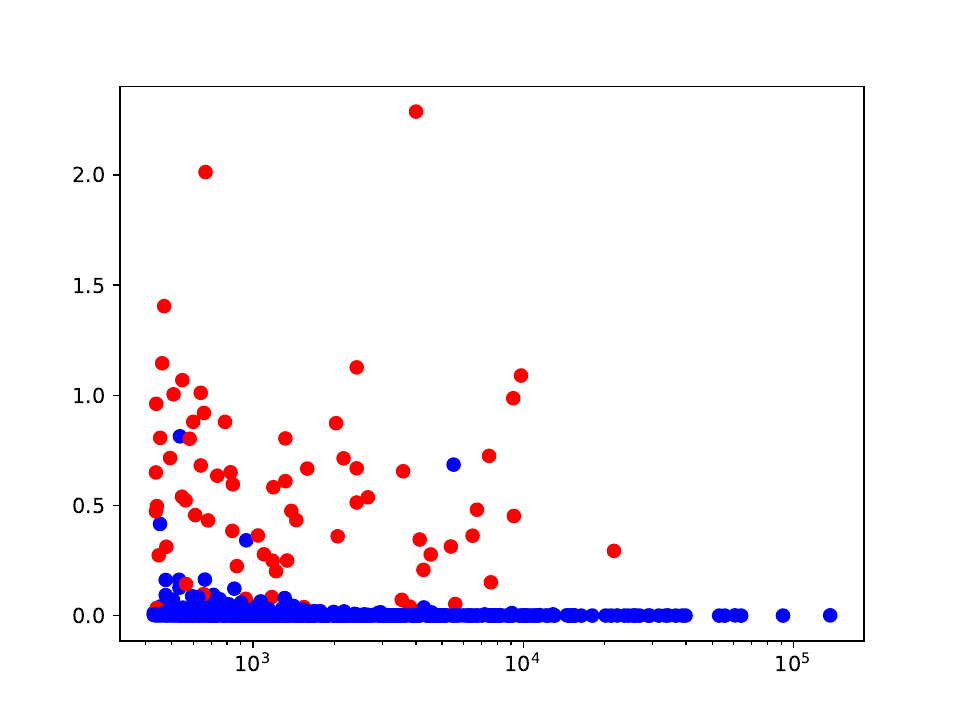}
    \end{subfigure}%
    \caption{Left panel: log-log Scatter plot of $Z$ with respect to $\lVert X \rVert$ for station $18$, on the $5\%$ most extreme observations. Right panel: Scatter plot of $Y = Z/\lVert X \rVert$ with respect to $\lVert X \rVert$ for station $18$, on the $5\%$ most extreme observations. Points are colored red if the maximum for the norm $\lVert X \rVert_\infty$ is attained in one of the three stations 16, 17 or 18, and blue otherwise.}
    \label{fig:scatter_plot}
\end{figure}

\paragraph{Bias-Variance trade-off and ERM versus SVM}
Figure~\ref{fig:tradeoff_bias_variance_Danube} displays the predictive pinball risks for the SVM method (Algorithm~\ref{algo:SVM} with a Gaussian kernel) and the ERM method (Algorithm~\ref{algo:ERM} with a linear predictor class) for varying values of $k_{\text{train}}$. These curves are estimated using the cross-validation procedure detailed above. The results reveal a clear bias-variance trade-off regarding $k_{\textrm{train}}$: the best performance for both SVM and ERM occurs at a $k_{\textrm{train}}/n_{\textrm{train}}$ ratio between $0.05$ and $0.1$. This ratio is five to ten times larger than that of the test set, meaning the test set's radial threshold is much higher than the optimal training threshold. This discrepancy confirms the effectiveness of our approach for extreme value extrapolation.
Additionally, the results reported in Figure~\ref{fig:tradeoff_bias_variance_Danube} illustrate that SVMs outperform ERM with a non-penalized linear model at lower sample sizes, specifically when the covariate dimension is large relative to the sample size and the curse of dimensionality becomes more apparent.


\begin{figure} 
    \centering
    \includegraphics[width=.5\linewidth]{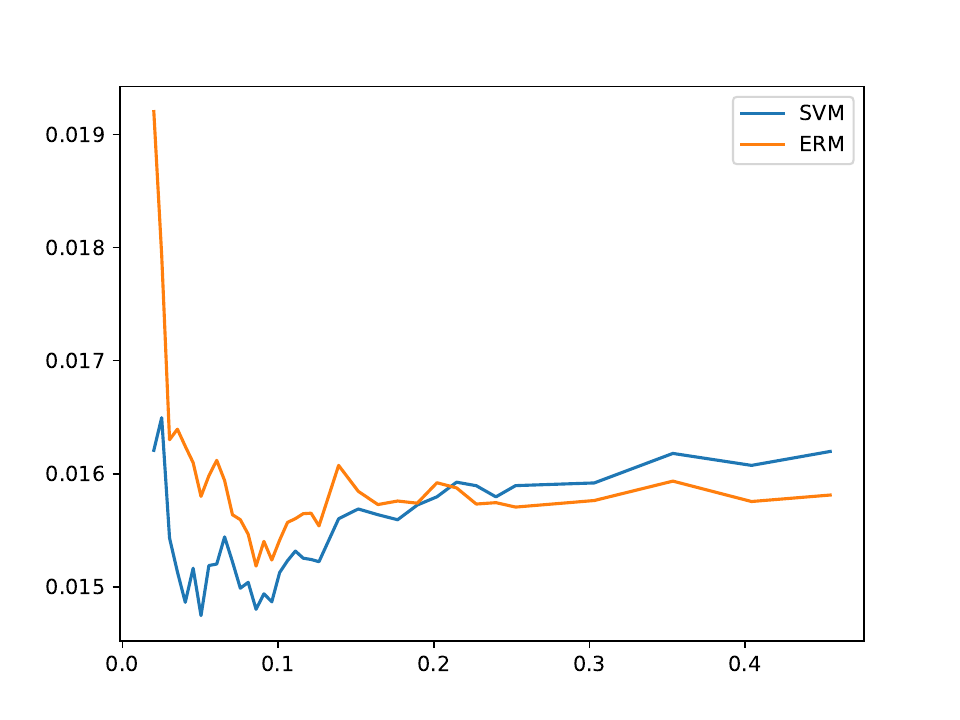}
    \caption{Mean approximated asymptotic pinball risk over 10 permutations of the whole dataset with the described $5$-fold cross-validation procedure, for different values $\ktrain$. The generalization error is estimated when using the ERM approach corresponding to Algorithm \ref{algo:ERM}, or the SVM approach corresponding to Algorithm \ref{algo:SVM}.}
    \label{fig:tradeoff_bias_variance_Danube}
\end{figure}

\paragraph{Effect of standardization}
Figure~\ref{fig:predictions_true_values_our_approach_extreme_values} displays the effect of standardization in  Algorithm~\ref{algo:SVM}. Specifically, it plots the predicted quantiles for $Y = Z/\lVert X \rVert$ against their corresponding true observed values on both the transformed and original scale. The left panel shows the predictions and observations on the Pareto scale, while the right panel displays both values back-transformed to the original scale.
 Based on the bias-variance tradeoff observed in Figure \ref{fig:tradeoff_bias_variance_Danube}, a proportion of $5\%$ of the most extreme observations were retained for training.

\paragraph{Comparison with baseline approaches}

We evaluate the proposed ERM and SVM methods by comparing them against naive baseline counterparts. For these baselines, each component of $\widetilde{X}$ is independently standardized using empirical means and standard deviations, yielding $\widehat{V}' = (\widehat{V}'_{i,j}) \in \mathbb{R}^{n \times 31}$. Let $(X'_i, Z'_i) = \widehat{V}'_i$. We then train a quantile ERM or SVM on the subset of observations with the largest infinity norms to estimate the conditional median $m(x)$ of $Z'$ given $X'=x$. The hyperparameters are determined via grid search using our cross-validation framework. A key distinction is that the baseline models use the raw covariate $X'$ directly, whereas our proposed approach relies on the angular component $\theta(X)$.
Figure~\ref{fig:comparison_approaches} displays the results, showing that the naive approach fails to provide accurate predictions for large station measurements. Conversely, our proposed methodology remains relevant in this regime.

\begin{figure} 
    \centering
    \begin{subfigure}[b]{0.4\textwidth}
        \centering
        \includegraphics[width=1\textwidth]{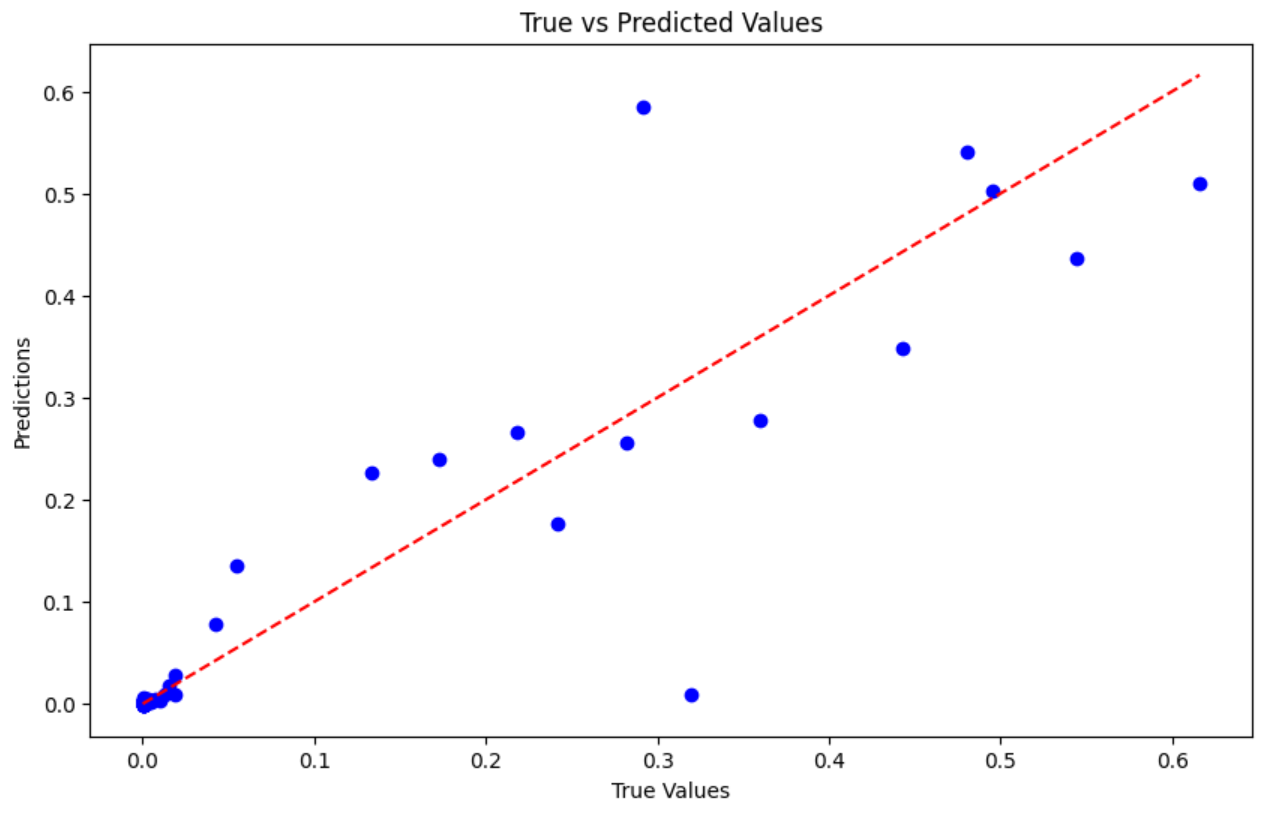}
    \end{subfigure}%
    ~ 
    \begin{subfigure}[b]{0.4\textwidth}
        \centering
        \includegraphics[width=1\textwidth]{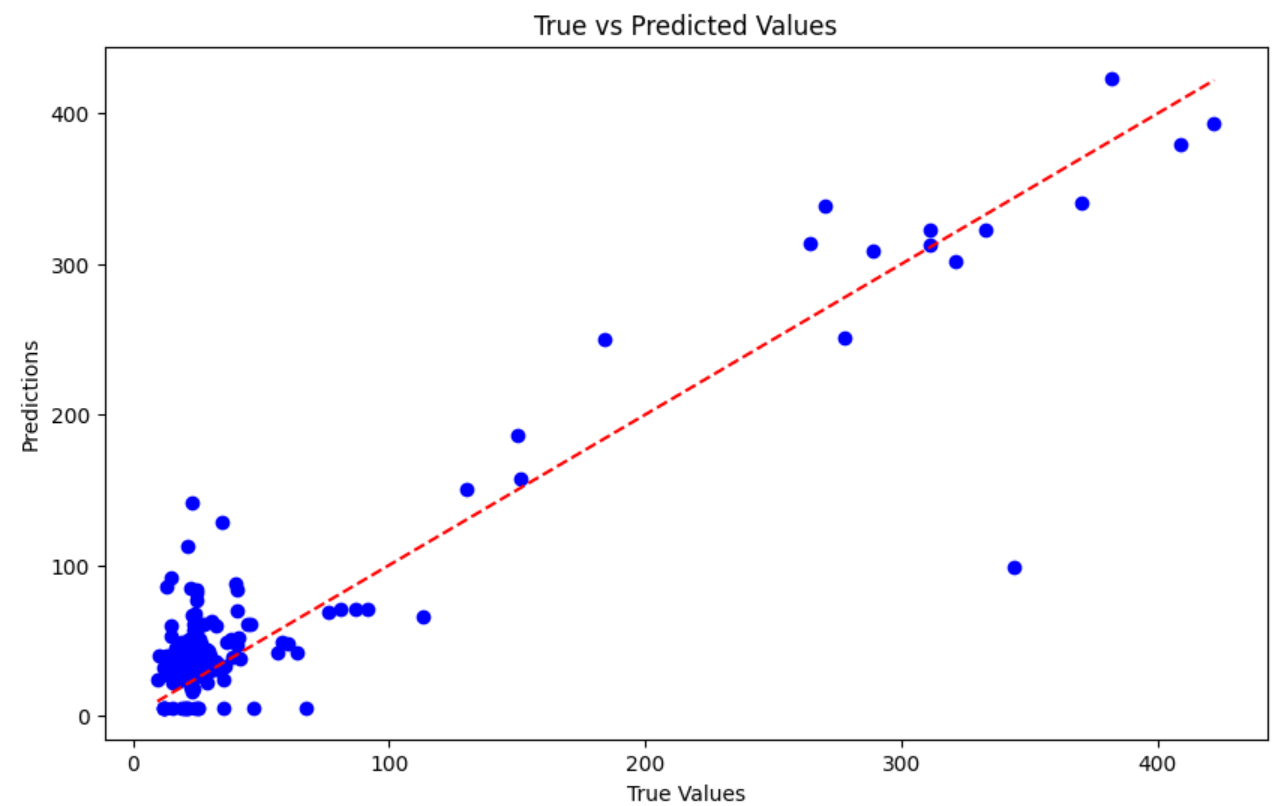}
    \end{subfigure}
    \caption{Left panel: Prediction versus true values for the prediction of $Y = Z/\lVert X \rVert$, where the coordinates have been standardized to unit Pareto margins. Right panel: Predictions versus true values for the prediction of $Z$ with our approach in its original, non transformed scale.}
    \label{fig:predictions_true_values_our_approach_extreme_values}
\end{figure}



\begin{figure}[t]
\begin{subfigure}[b]{.5\linewidth}
\includegraphics[width=1\textwidth]{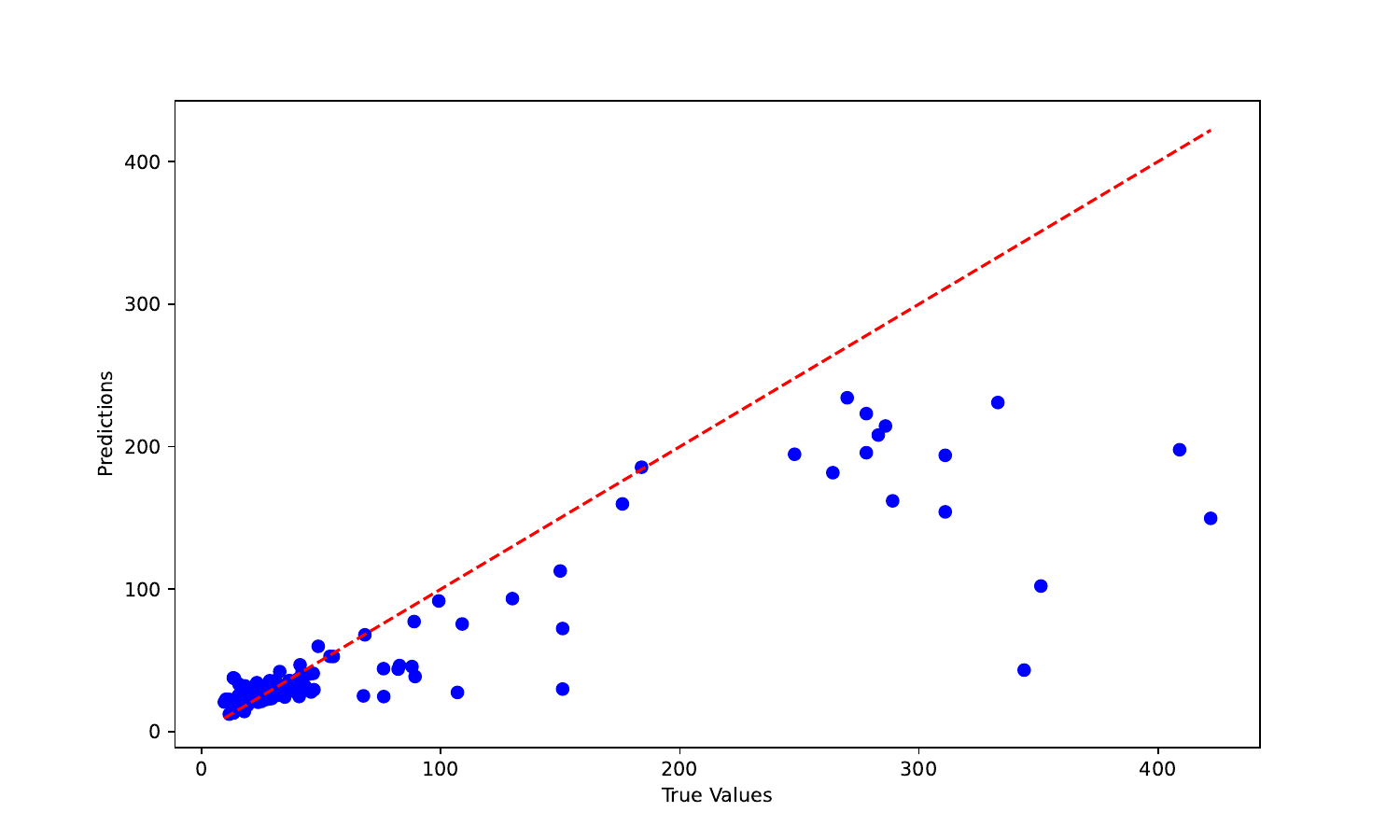}
\caption{Baseline approach, SVM algorithm}
\end{subfigure}
\begin{subfigure}[b]{.5\linewidth}
\includegraphics[width=1\textwidth]{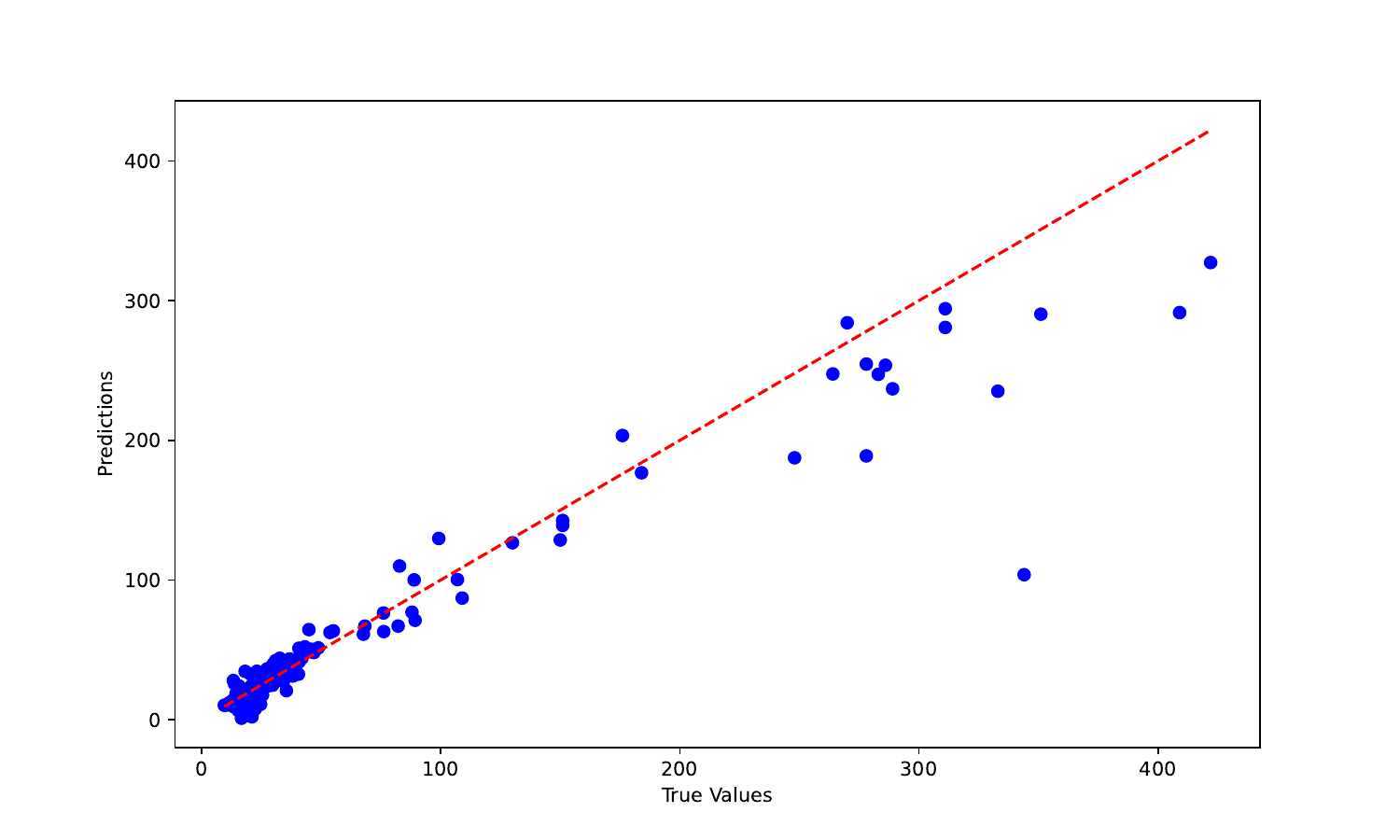}
\caption{Baseline approach, ERM algorithm}
\end{subfigure}
\newline
\begin{subfigure}[b]{.5\linewidth}
\includegraphics[width=1\textwidth]{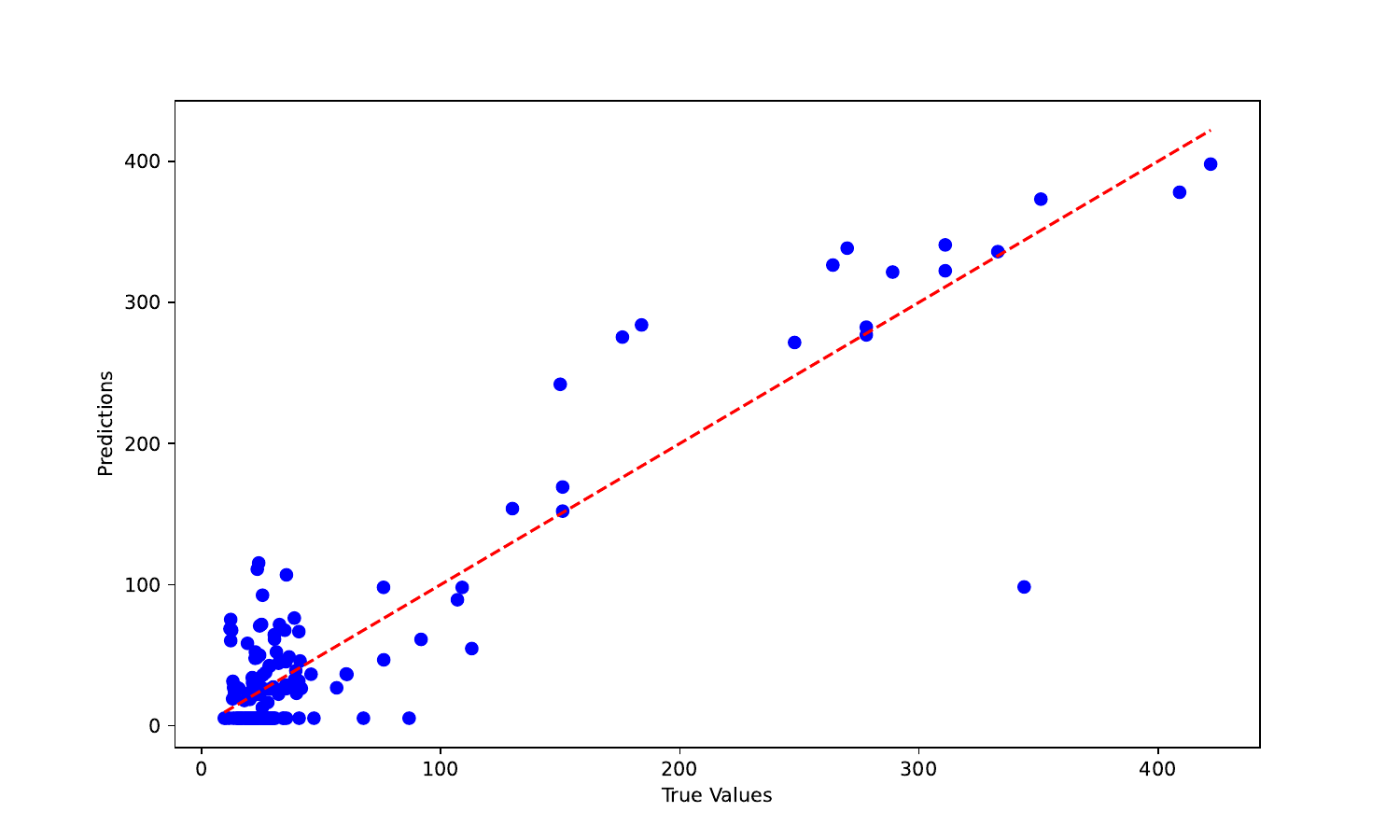}
\caption{Our approach, Algorithm \ref{algo:SVM} (SVM)}
\end{subfigure}
\begin{subfigure}[b]{.5\linewidth}
\includegraphics[width=1\textwidth]{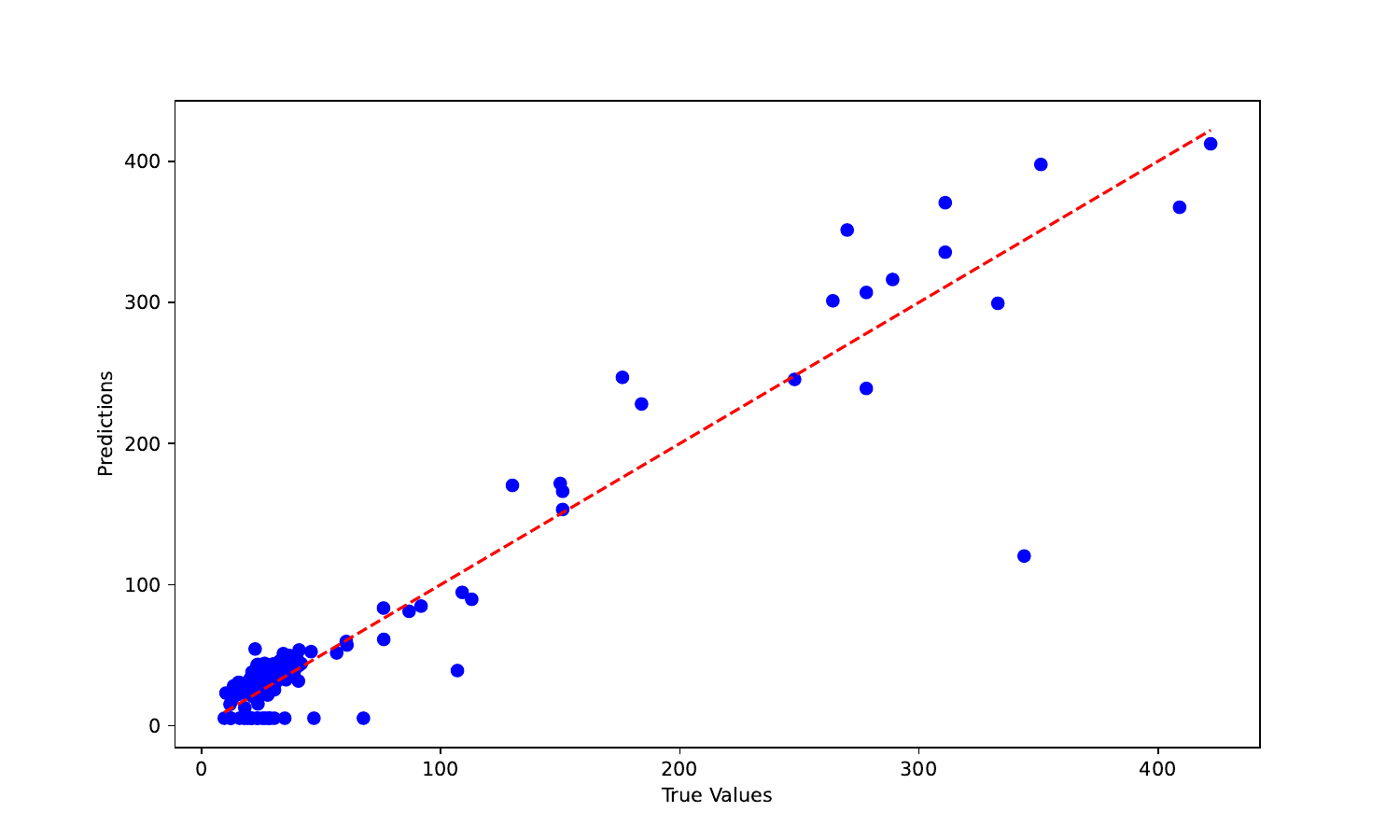}
\caption{Our approach, Algorithm \ref{algo:ERM} (ERM)}
\end{subfigure}
\caption{Predictions against true values for different approaches, in the original scale of the dataset, on the $1 \%$ most extreme observations in the test set.}
\label{fig:comparison_approaches}
\end{figure}

\appendix
\section{Proofs of the results in Section \ref{sec:framework}}
\subsection{Proof of Proposition \ref{lemme quantile}}

The proof of Proposition~\ref{lemme quantile} relies on the following lemma, which shows, in a univariate setting without covariates that the modified pinball loss elicits the quantile.
\begin{lemma} \label{lemma_1_quantile} 
    Let $Y$ be any real-valued random variable. The $\tau$-quantile $q$ of $Y$ minimizes the modified pinball risk, that is,
    \begin{equation*}
        \EE[\pinbl'(Y,q)] = \min_{z \in \R} \EE[\pinbl'(Y,z)]
    \end{equation*}
\end{lemma}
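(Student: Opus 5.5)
The plan is to study the function $\phi(z) = \EE[\pinbl'(Y,z)]$ directly and show that its one-sided derivatives change sign at $q$. The first thing to check is that $\phi$ is well defined and finite for every $z$ even when $Y$ has no moments. This follows from the bound $\lvert \pinbl'(y,z)\rvert \leq \max(\tau,1-\tau)\lvert z\rvert$. The function $z\mapsto \pinbl'(y,z)$ is convex in $z$ for each fixed $y$, since $\pinbl(y,\cdot)$ is convex and $\pinbl(y,0)$ does not depend on $z$. Hence $\phi$ is convex and finite on $\R$, and minimality of $q$ reduces to showing $\phi(z)\geq\phi(q)$ for all $z$.

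Rather than computing derivatives, I would use the exact increment formula for the pinball loss. For $z > q$, a direct case check on the positions of $y$ relative to $q$ and $z$ gives
\begin{equation*}
\pinbl(y,z)-\pinbl(y,q) = (z-q)\big(\un_{y< q}-\tau\big) + \int_q^z \big(\un_{y< s}-\un_{y<q}\big)\ud s .
\end{equation*}
Equivalently, it is the integral $\int_q^z (\un_{y<s}-\tau)\ud s$. The difference $\pinbl'(y,z)-\pinbl'(y,q)$ coincides with this quantity because the $\pinbl(y,0)$ terms cancel. This identity is what removes all integrability issues: the increment is bounded by $\max(\tau,1-\tau)\lvert z-q\rvert$ uniformly in $y$. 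Taking expectations and applying Fubini, which is justified by boundedness on a bounded interval, gives
\begin{equation*}
\phi(z)-\phi(q) = \int_q^z \big(\PP(Y<s)-\tau\big)\ud s .
\end{equation*}
For $s>q$ we have $\PP(Y<s)\geq \PP(Y\leq q)\geq\tau$, by right-continuity of the cdf and the definition of $q$ as an infimum. So the integrand is nonnegative and $\phi(z)\geq\phi(q)$.

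For $z<q$, the symmetric identity is $\pinbl(y,z)-\pinbl(y,q)=\int_z^q(\tau-\un_{y<s})\ud s$, which gives
\begin{equation*}
\phi(z)-\phi(q)=\int_z^q\big(\tau-\PP(Y<s)\big)\ud s .
\end{equation*}
For $s<q$, the definition of $q$ as an infimum gives $\PP(Y\leq s)<\tau$, hence $\PP(Y<s)<\tau$. The integrand is therefore positive, and again $\phi(z)\geq\phi(q)$. Thus the minimum is attained at $q$, which proves the lemma.

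The main point of care is verifying the pointwise integral representation of the increment of $\pinbl$. One has to check the three cases $y\leq \min(q,z)$, $y$ between $q$ and $z$, and $y\geq\max(q,z)$, and handle ties $y=s$ consistently with the convention $\un_{y<z}$ in the definition. This is routine but must be done cleanly, since it is the step that replaces any moment assumption on $Y$ by the uniform Lipschitz bound. The other delicate point is the boundary inequalities on the cdf at $q$ (right-continuity for $s>q$, strictness for $s<q$), and these are exactly where the definition of the generalized quantile enters.
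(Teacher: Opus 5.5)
Your proof is correct. It uses the same identity as the paper's argument: the modified pinball risk is an integral of $\PP(Y<s)-\tau$ over an interval, and the sign of the integrand is read off from the definition of the generalized quantile. The organization differs, though.

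The paper anchors at $0$, where the modified risk vanishes. For $z>0$ it derives $\EE[\pinbl'(Y,z)]=\int_{[0,z)}(\PP(Y<s)-\tau)\ud s$ through an explicit case computation and a layer-cake formula. It obtains $z<0$ through the symmetry $\pinbl'(y,z)=\ell_{1-\tau}'(-y,-z)$. It then treats $q\geq 0$ in detail and only asserts that $q<0$ is analogous.

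You instead anchor at $q$ and write the increment as the integral of the a.e.\ derivative $\un_{y<s}-\tau$, with Fubini justified by a bounded integrand. This handles every sign of $q$ and $z$ at once, so your version is shorter and fully complete rather than partly sketched.

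In the sign argument you use $s>q$ rather than $s\geq q$. This correctly avoids the point $s=q$, where $\PP(Y<q)<\tau$ can occur if $q$ is an atom. That point is harmless for the integral, but the paper's statement at $s=q$ is slightly inaccurate.
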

\begin{proof}
    The technique of proof is similar to that of Proposition 3.9 in \cite{christmann2008support}, except that we drop the assumption that $Y$ is integrable by considering the modified version of the pinball loss. We fix $z>0$, and compute the associated inner risk. If $y < 0$, we also have $y < z$ and by the definition of the modified pinball loss,
    \begin{align*}
        \pinbl'(y,z)& = \pinbl(y,z) - \pinbl(y,0)\\
        &=(1-\tau)(z-y) - (1-\tau)(-y)\\
        &= (1-\tau)z.
    \end{align*}
    By a similar calculation, we find that, if $0 \leq y < z$:
    \begin{equation*}
        \pinbl'(y,z) = (1-\tau)z - y
    \end{equation*}
    and, if $y\geq z$,
    \begin{equation*}
        \pinbl'(y,z) = -\tau z.
    \end{equation*}
    Denote $Q$ the law of $Y$. Since the pinball loss is $\max(\tau,1-\tau)$ with respect to its second variable, it holds that for any $(y,z) \in \R^2$,
    \begin{equation*}
        \lvert \pinbl'(y,z) \rvert = \lvert \pinbl(y,z) - \pinbl(y,0) \rvert \leq \max(\tau,1-\tau) \lvert z \rvert
    \end{equation*}
    which is integrable with respect to $Q$ the distribution of $Y$. Moreover, with the calculations above, we obtain, for any $z>0$,
    \begin{equation}
    \begin{aligned} \label{eq:lemma1_calculation1}
        \EE[\pinbl'(Y,z)]
        &=\int_{y < 0} (1-\tau)z \ud Q(y) 
        + \int_{0 \leq y < z} \big((1-\tau)z - y\big) \ud Q(y)
        + \int_{y \geq z} - \tau z \ud Q(y)\\
        &= (1-\tau)z Q((-\infty,0)) + (1-\tau)z Q([0,z)) 
        - \int_{0 \leq y < z} y \ud Q(y) 
        -\tau z Q([z,+\infty))\\
        &= (1-\tau)z Q((-\infty,z)) - \int_{0 \leq y < z} y \ud Q(y) - \tau z Q([z,+\infty))\\
        &= zQ((-\infty,z)) -\tau z - \int_{0 \leq y < z} y \ud Q(y)
    \end{aligned}
    \end{equation}
    where the last line follows follows from the fact that $Q([z,+\infty)) = 1- Q((-\infty,z))$. Now, notice that
    \begin{equation}
    \begin{aligned}\label{eq:lemma1_calculation2}
        \int_{0 \leq y < z} y \ud Q(y) &= \EE[\un_{Y \in [0,z)} Y]\\
        &= \int_{s = 0}^\infty \PP(Y \un _{Y \in [0,z)} \geq s) \ud s\\
        &= \int_{s \in [0,z)} Q([s,z))\ud s\\
        &= \int_{s \in [0,z)} Q((-\infty,z)) - Q((-\infty,s))\ud s\\
        &= z Q((-\infty,z)) - \int_{s \in [0,z)} Q((-\infty,s))\ud s.
    \end{aligned}
    \end{equation}
    Combining equations \eqref{eq:lemma1_calculation1} and  \eqref{eq:lemma1_calculation2}, we obtain
    \begin{align*}
        \EE[\pinbl'(Y,z)] &= \int_{s \in [0,z)} Q((-\infty,s)) \ud s - \tau z\\
        &= \int_{s \in [0,z)} (Q((-\infty,s)) - \tau) \ud s
    \end{align*}
    To address the case where $z \leq 0$, observe that $\EE[\pinbl'(Y,z)] = \EE[\ell_{1-\tau}'(-Y,-z)]$. Consequently, applying the previously established result by substituting $z$, $\tau$, and $Y$ with $-z$, $1-\tau$, and $-Y$, respectively, yields for any $z < 0$ that
    \begin{align*}
        \EE[\pinbl'(Y,z)] &= \int_{s \in [0,-z)} Q((-s,+\infty)) - (1-\tau) \ud s\\
        &= \int_{s \in [0,-z)} \tau - Q((-\infty,-s]) \ud s\\
        &= \int_{u \in (z,0]} \tau - Q((-\infty,u]) \ud u
    \end{align*}
    Now, assume that the generalized quantile $q$ of $Y$ is non-negative. Then, for any $z<0$,  and any $u \in (z,0]$, we have $u \leq q$, thus $Q((-\infty,u]) \leq \tau$ by the definition of  $q$ as the generalized quantile. It follows that, for any $z\leq 0$, 
    \begin{equation*}
        \EE[\pinbl'(Y,z)] = \int_{u \in (z,0]} \tau - Q((-\infty,u]) \ud u \geq 0.
    \end{equation*}
    Now, for any $s \in \R$, we have $Q((-\infty,s)) < \tau $ for any $s<q$ and $Q((-\infty,s)) \geq \tau$ for any $s \geq q$. It follows that, for any $z\geq 0$
    \begin{equation*}
        \EE[\pinbl'(Y,z)] = \int_{s \in [0,z)} (Q((-\infty,s)) - \tau) \ud s
    \end{equation*}
    has a minimum for $z=q$, and the corresponding risk is non-positive. Hence, in the case $q\geq 0$, the risk $\EE[\pinbl'(Y,z)]$ is minimized by the generalized quantile $q$. This reasoning extends analogously to the case where the generalized quantile $q$ is negative.
\end{proof}    
\begin{proof}[Proof of Proposition~\ref{lemme quantile}]
    For any function $f$ with $\EE[\lvert f(X)\rvert]<\infty$, we have
    \begin{equation*}
        \EE[\pinbl'(Y,f(X))] = \mathbb{E}[\EE[\pinbl'(Y,f(x)) \given X=x]]
    \end{equation*}
    Applying Lemma \ref{lemma_1_quantile} to the conditional law of $Y$ given $X=x$, we obtain that the inner risk $\EE[\pinbl'(Y,f(x)) \given X=x]$
    is minimized when $f(x) = q(x)$ where $q$ is the conditional quantile function. Moreover, since the coniditional function satisfies $\EE[\lvert q(X) \rvert]<\infty$, it follows that
    \begin{equation*}
        \EE[\pinbl'(Y,q(X))] = \inf \{\EE[\pinbl'(Y,f(X))] : f \text{ measurable, } \EE[\lvert f(X)\rvert] < \infty\}.
    \end{equation*}
\end{proof}

\subsection{Proofs of Lemma~\ref{lemma:angularity-extreme quantile} and Theorem~\ref{Main theorem proba} }
\begin{proof}[Proof of Lemma \ref{lemma:angularity-extreme quantile}]
    Recall that within the regular variation framework, the pair $(Y_\infty, \Theta_\infty)$ is independent from $\lVert X_\infty\rVert$. Therefore,
    \begin{equation*}
    \left(Y_{\infty} \given \Theta_\infty, \lVert X \rVert_\infty \right) \overset{\law}{=}  \left(Y_\infty \given  \Theta_\infty\right)
    \end{equation*}
    So that the conditional quantile of $Y_\infty$ given $X_\infty=x$ is the conditional quantile of $Y_\infty$ given $\Theta_\infty = \theta(x)$. Thus, the extreme conditional quantile $q_{\Plim}$ is of angular type and we may write $q _{\Plim} = h_{\Plim} \circ \theta$ for a certain measurable function $h_{\Plim} \colon \sphere \to \R$. Moreover, Assumption \ref{assumption regularity} ensures we can choose $h_{\Plim}$ to be continuous.
\end{proof}
\begin{proof}[Proof of Theorem \ref{Main theorem proba}]
    \begin{enumerate}
    \item Let $f = h \circ \theta$ be as in the theorem. $h$ is continuous on the compact set $\mathbb{S}$ so it is bounded. In particular, $f$ is bounded and its risk is well defined. Moreover, for any $(\theta,y) \in \sphere \times \R$,
    \begin{align*}
        \lvert \pinbl'(y,h(\theta)) \rvert 
        &= \lvert \pinbl(y,h(\theta)) - \pinbl(y,0) \rvert\\
        &\leq \max(\tau,1-\tau) \lVert h \rVert_\infty.
    \end{align*}
    Consequently, the function $(\theta,y) \mapsto \pinbl'(y,h(\theta))$ is bounded and continuous.
    Now, Assumption \ref{assumption regular variation} implies the convergence in law 
   \[\left(Y,\theta(X) \; \middle| \; \lVert X \rVert \geq t\right) \overset{\law} {\tto}(Y_\infty,\Theta_\infty).\]

    Thus, we have the convergence
    \[\risk_t(f) = \EE[\pinbl'(Y,h\circ\theta(X)) \; \middle| \; \lVert X \rVert \geq t] \tto \EE[\pinbl'(Y_\infty,h(\Theta_\infty))] = R_{P_\infty}(f).\]

    \item First, note that
    \begin{equation*}
        \EE[\lvert q(X) \rvert \given \lVert X \rVert \geq t] \leq \EE[\lvert \quantiletau(X) - \quantileinf(X) \rvert \given\lVert X \rVert \geq t] + \EE[\lvert \quantileinf(X) \rvert \given \lVert X \rVert \geq t]
    \end{equation*}
    Lemma \ref{lemma:angularity-extreme quantile} ensures that $\quantileinf$ is bounded, and Assumption \ref{assumption regularity} ensures that 
    \begin{equation*}
        \EE[\lvert \quantiletau(X) - \quantileinf(X) \rvert \given\lVert X \rVert \geq t] \tto 0.
    \end{equation*}
    Hence, for $t$ sufficiently large, we have $\EE[\lvert \quantiletau(X) - \quantileinf(X) \rvert \given\lVert X \rVert \geq t] < \infty$ and
    \begin{equation*}
         \EE[\lvert q(X) \rvert \given \lVert X \rVert \geq t] \leq \EE[\lvert \quantiletau(X) - \quantileinf(X) \rvert \given\lVert X \rVert \geq t] + \EE[\lvert q_{\Plim}(X) \rvert \given \lVert X \rVert \geq t] < \infty.
    \end{equation*}
    It follows that the conditional Bayes risk is $\risk_t^*=\risk_t(q)$ for $t$ large enough. Since $q_{\Plim}$ is bounded, we also have that the extreme Bayes risk is $\risk_{\Plim}^* = \risk_{\Plim} (q_{\Plim})$.
    Now, for $t$ large enough such that $\EE[\lvert q(X) \rvert \given \lVert X \rVert \geq t] < \infty$, we have
    \begin{align*}
        \risk_t^* &= \EE\left[\pinbl'(Y,q(X)) \; \middle| \; \lVert X \rVert \geq t\right]\\
        &= \EE\left[\pinbl'(Y,\quantileinf(X))\; \middle| \; \lVert X \rVert \geq t\right] + \EE\left[\pinbl'(Y,q(X)) - \pinbl'(Y,\quantileinf(X))\; \middle| \; \lVert X \rVert \geq t\right]\\
        &= \risk_t(\quantileinf) + \EE\left[\pinbl'(Y,q(X)) - \pinbl'(Y,\quantileinf(X))\; \middle| \; \lVert X \rVert \geq t\right].
    \end{align*}

    To deal with the first term, recall that from Lemma $\ref{lemma:angularity-extreme quantile}$, we know that the function $\quantileinf$ is angular. Property 1 thus implies $\risk_t(\quantileinf) \tto \risk_{P_\infty}(\quantileinf)$.

    To show that the second term vanishes, we use the fact that the function $\pinbl'$ is $\max(\tau,1-\tau)$-Lipschitz in each variable, which yields
    \begin{align*}
        \left\lvert \EE[\pinbl'(Y,\quantiletau(X)) - \pinbl'(Y,\quantileinf(X)) \given \lVert X \rVert \geq t]\right\rvert &\leq 
        \EE[ \lvert \pinbl'(Y,\quantiletau(X)) - \pinbl'(Y,\quantileinf(X)) \rvert \given \lVert X \rVert \geq t]\\
        &\leq \max(\tau,1-\tau) \EE[ \lvert \quantiletau(X) - \quantileinf(X) \rvert \given \lVert X \rVert \geq t].
    \end{align*}
    Assumption \ref{assumption regularity} ensures that $\EE\left[ \lvert \quantiletau(X) - \quantileinf(X) \rvert\; \middle| \; \lVert X \rVert \geq t\right] \tto 0$.

    \item By Property 2, we have $\risk^*_t \tto \risk_{P_\infty}^*$, and, since $\risk_\infty^* = \risk_\infty(q) = \limsup_{t \to \infty} \risk_t^*$, we obtain
    \begin{equation*}
        \risk_\infty^* = \risk_{P_\infty}^*.
    \end{equation*}

    \item Property 3 ensures that $\risk_\infty^* =\risk_{P_\infty}(\quantileinf)$.
    Now, since $\quantileinf$ is angular, we deduce from Property 1 that $\risk_\infty(\quantileinf) = \underset{t \to \infty}{\limsup}~\risk_t(\quantileinf) = \risk_{P_\infty}(\quantileinf) = \risk_{\Plim}^*$.
    \end{enumerate}
\end{proof}

\subsection{Proofs for Section \ref{sec:examples}} \label{sec:appendix_proofs_examples}
\begin{proof}[Proof for Example~\ref{example:noisy_target}]
    The proof is analogous to the proof of Example 2.A in \cite{huet2023regression}. We begin by proving that Assumption \ref{assumption regular variation} is satisfied. Let $r \colon \R^d \times \R \to \R$ be a bounded and Lipschitz function, and $t> 0$. We have
    \begin{align*}
        \EE[r(X/t, Y) \given \lVert X \rVert \geq t] &= \EE[r(X/t, g_\theta(\theta(X)) + \varepsilon) \given \lVert X \rVert \geq t]\\
        &+  \EE[r(X/t, g_\theta(\theta(X)) + \varepsilon) \given \lVert X \rVert \geq t] - \EE[r(X/t,g(X) + \varepsilon) \given \lVert X \rVert \geq t]\\
    \end{align*}
    Because $X$ is regularly varying and $\varepsilon$ is independent from $X$, we have
    \begin{align*}
        \EE[r(X/t, g_\theta(\theta(X)) + \varepsilon) \given \lVert X \rVert \geq t] \tto \EE[r(X_\infty, g_\theta(\Theta_\infty) + \varepsilon)].
    \end{align*}
    And, since $r$ is $C$-Lipschitz,
    \begin{align*}
        &\big\lvert  \EE[r(X/t,g(X) + \varepsilon) \given \lVert X \rVert \geq t] - \EE[r(t/tX, g_\theta(\theta(X)) + \varepsilon) \given \lVert X \rVert \geq t]\big\rvert\\
        \leq \ &\EE[\lvert  r(X/t,g(X) + \varepsilon) - r(X/t, g_\theta(\theta(X)) + \varepsilon) \rvert \given \lVert X \rVert \geq t]\\
        \leq \ & C \EE[\lvert g(X) - g_\theta(\theta(X))\rvert \given \lVert X \rVert \geq t]\\
        \leq \ & C \sup_{\lVert x \rVert \geq t} \lvert g(x) - g_\theta(\theta(x)) \rvert
    \end{align*}
    which vanishes as $t$ tends to infinity because of the condition \eqref{assumption_example-noisy-target}. We obtain that, for any bounded Lipschitz function $r$,
    \begin{equation*}
        \EE[r(X/t, Y) \given \lVert X \rVert \geq t] \xrightarrow[t \to \infty]{} \EE[r(X_\infty, g_\theta(\Theta_\infty) + \varepsilon)].
    \end{equation*}
    Combined with Portemanteau's theorem, this shows the convergence in distribution
    \begin{equation*}
        \left(X/t,g_\theta(\theta(X)) + \varepsilon \given \lVert X \rVert \geq t\right) \xrightarrow[t\to \infty]{w} \left(X_\infty, g_\theta(\Theta_\infty)\right)
    \end{equation*}
    and Assumption \ref{assumption regular variation} is satisfied.
    To prove that Assumption \ref{assumption regularity} also holds, denote by $q_\varepsilon$ the $\tau$-quantile of $\varepsilon$. By independence of the variables $X$ and $\varepsilon$, the following equality holds in distribution:
    \begin{equation*}
        \left(g(X) + \varepsilon \given X=x \right) \overset{(d)}{=} g(x) + \varepsilon
    \end{equation*}
    Thus, the conditional $\tau$-quantile of $Y= g(X) + \varepsilon$ given $X=x$ is $q(x) = g(x) + q_\varepsilon$. Similarly, the $\tau$-conditional of $Y_\infty = g_\theta(\Theta_\infty) + \varepsilon$ given $X_\infty = x$ is $q_{\Plim} = g_\theta(\theta(x)) + q_\varepsilon$. It follows from the continuity of $g_\theta$ that $q_{\Plim}$ is continuous. Now, we have, for any $x \in \R^d$,
    \begin{equation*}
        \lvert q(x) - q_{\Plim}(x) \rvert = \lvert g(x) - g_\theta(\theta(x)) \rvert
    \end{equation*}
    which implies
    \begin{align*}
        \EE[\lvert \quantiletau(X) - \quantileinf(X) \rvert \;\middle|\; \lVert X \rVert \geq t] &= \EE[\lvert g(X) - g_\theta(\theta(X)) \rvert \;\middle|\; \lVert X \rVert \geq t]\\
        &\leq \sup_{\lVert x \rVert \geq t} \lvert g(x) - g_\theta(\theta(x)) \rvert \tto 0.
    \end{align*}
\end{proof}

\begin{remark}
    In \cite{huet2023regression}, a more general result is provided, for a target $Y$ of the form $Y = g(X,\varepsilon)$ for some continuous function $g$. Unfortunately, in the context of quantile regression, we do not obtain such a general result. Indeed, for least-squares regression, the convenient conditional expectation expression $\EE[Y \given X=x]$ of the Bayes predictor greatly simplifies the theoretical analysis. The absence of an analogous closed-form expression for quantiles complicates the proofs.
\end{remark}
\begin{proof}[Proof of Proposition \ref{thm:ex_missing_component}] This proof requires several lemmas. We begin by showing that Assumption \ref{assumption regular variation} holds, that is, the pair $(X,Y)$ is regularly varying.

\begin{lemma} \label{ex:lem_regular_variation}
    In the framework described in \ref{example:prediction_missing_component}, the pair $(X,Y)$ has joint density $p(x,y) = \lVert x \rVert \pi(x,\lVert x \rVert y)$. This density is first component-regularly varying with limit density $p_\infty(x,y) = \lVert x \rVert \pi_\infty(x,\lVert x \rVert y)$ and same scaling function $b(t)$, \ie
    \begin{equation}
    \sup_{\lVert x \rVert \geq 1, y \in \R} \lvert b(t) t^d p(tx,y) - p_\infty(x,y) \rvert \tto 0.
    \end{equation}
\end{lemma}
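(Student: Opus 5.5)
The plan has two steps: first obtain the joint density of $(X,Y)$ by a change of variables, then transfer the uniform regular variation \eqref{ex:regular_variation_density} of $\pi$ to $p$. The one real difficulty is the unbounded prefactor $\lVert x\rVert$ that appears in $p$, which will be absorbed using homogeneity of $\pi_\infty$ and Potter-type bounds for the regularly varying function $b$.

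\emph{Density.} Consider the map $\Psi(x,z)=(x,z/\lVert x\rVert)$ on $\{x\neq 0\}\times\R$. Since $\PP(X=0)=0$, this set carries all the mass of $(X,Z)$. The map is a bijection with inverse $(x,y)\mapsto(x,\lVert x\rVert y)$. For fixed $x$, the map $y\mapsto \lVert x\rVert y$ is linear with derivative $\lVert x\rVert$, so the Jacobian of the inverse has absolute determinant $\lVert x\rVert$; this holds for any norm, including $p=\infty$. Applying Fubini to $\PP(X\in A, Y\in C)$ with the substitution $z=\lVert x\rVert y$ in the inner integral gives $p(x,y)=\lVert x\rVert\,\pi(x,\lVert x\rVert y)$. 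The same computation with $\pi_\infty$ gives $p_\infty$.

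\emph{Reduction.} Write $z=\lVert x\rVert y$. For $\lVert x\rVert\ge 1$,
\begin{equation*}
b(t)t^d p(tx,y)-p_\infty(x,y)=\lVert x\rVert\big(t^{d+1}b(t)\pi(tx,tz)-\pi_\infty(x,z)\big),
\end{equation*}
and $\lVert (x,z)\rVert_p\ge\lVert x\rVert_p\ge1$. The boundary case $\lVert(x,z)\rVert=1$ is covered by \eqref{ex:regular_variation_density} by continuity of $\pi$ and $\pi_\infty$. So it only remains to absorb the factor $\lVert x\rVert$.

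\emph{Homogeneity of $\pi_\infty$.} Applying \eqref{ex:regular_variation_density} at $ts$ and at $t$, and using $b(t)/b(ts)\to s^{-\alpha}$, gives
\begin{equation*}
\pi_\infty(sx,sz)=s^{-(d+1+\alpha)}\pi_\infty(x,z),\qquad s\ge1 .
\end{equation*}
Since $\pi_\infty$ is continuous on the compact unit sphere of $\R^{d+1}$, homogeneity implies $\pi_\infty\le M$ on $\{\lVert(x,z)\rVert\ge1\}$.

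\emph{Handling the prefactor.} Write $x=r\omega$ with $r\ge1$ and $\omega\in\sphere$, and set $z'=z/r$, so that $\lVert(\omega,z')\rVert\ge1$. Let
\begin{equation*}
\varepsilon(s)=\sup_{\lVert(u,v)\rVert\ge1}\lvert s^{d+1}b(s)\pi(su,sv)-\pi_\infty(u,v)\rvert .
\end{equation*}
Using homogeneity to write $\pi_\infty(r\omega,rz')=r^{-(d+1+\alpha)}\pi_\infty(\omega,z')$, the quantity to control is
\begin{equation*}
r\Big\lvert r^{-(d+1)}\tfrac{b(t)}{b(tr)}(tr)^{d+1}b(tr)\pi(tr\omega,trz')-r^{-(d+1+\alpha)}\pi_\infty(\omega,z')\Big\rvert
\le r^{-d}\tfrac{b(t)}{b(tr)}\,\varepsilon(tr)+r^{-d}M\Big\lvert \tfrac{b(t)}{b(tr)}-r^{-\alpha}\Big\rvert .
\end{equation*}
I bound the two terms separately.
\begin{enumerate}
\item First term. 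Potter's bounds give $b(t)/b(tr)\le C r^{-\alpha+\eta}$ for $r\ge1$ and $t$ large, with $\eta<\alpha$. Together with $r^{-d}\le1$ and $\varepsilon(tr)\le\sup_{s\ge t}\varepsilon(s)\to0$, this term tends to $0$ uniformly in $r\ge1$.
\item Second term. Fix $R$ large. For $r\ge R$ it is at most $M(CR^{-\alpha+\eta}+R^{-\alpha})$, which is small. For $r\in[1,R]$ it tends to $0$ uniformly by the uniform convergence theorem for regularly varying functions.
\end{enumerate}

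The main obstacle is making this last estimate uniform over unbounded $\lVert x\rVert$. Potter's bounds are the tool I would use for it, and the split at a large radius $R$ is what makes them sufficient.
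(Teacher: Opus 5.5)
Your proof is correct and follows the same route as the paper: the change of variables $z=\lVert x\rVert y$ gives $p$, and then the uniform convergence \eqref{ex:regular_variation_density} of $\pi$ is transferred to $p$. The only difference is in how the unbounded prefactor is handled. The paper checks uniformity on $\sphere\times\R$, where $\lVert x\rVert=1$ and the bound is immediate, and then cites the proof of Theorem~2.1 of de Haan and Resnick (1987) for the extension to $\lVert x\rVert\geq 1$; you instead carry out that extension explicitly via homogeneity of $\pi_\infty$, Potter's bounds and the uniform convergence theorem, which makes your argument self-contained.
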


\begin{proof}[Proof of Lemma \ref{ex:lem_regular_variation}]
    The steps for the proof are provided in the proof of Proposition 4.1 in \cite{clemenccon2026weak}. Their result could not be applied directly here because we do not assume that the target $\frac{Z}{\lVert X \rVert}$ is bounded, which is a needed assumption in the statements of Proposition 4.1. Nevertheless, it turns out that this boundedness assumption is not needed for the statement of Lemma \ref{ex:lem_regular_variation}.
    Regular variation for the pair $(X,Z)$ implies regular variation in the classical sense. More precisely, regular variation of densities implies regular variation of the distribution \citep[see \emph{e.g.}][\cite{cai2011estimation}]{}, that is, there exists a limit pair $(\widetilde{X}_\infty,\widetilde{Z}_\infty)$ such that 
        \begin{equation*}
            \left(t^{-1}(X,Z) \given \lVert (X,Z) \rVert \geq t \right) \xrightarrow[t \to \infty]{\law} (\widetilde{X}_\infty,\widetilde{Z}_\infty).
        \end{equation*}
        Where $(\widetilde{X}_\infty,\widetilde{Z}_\infty)$ has density $\pi_\infty$, which is homogeneous of order $-\alpha -d - 1$ because of the convergence \eqref{ex:regular_variation_density}. This implies that the marginal density of $\widetilde{X}$, defined by $\pi_\infty(x) = \int_\R \pi_\infty(x,z) \ud z$ is also homogeneous, of order $-\alpha -d$. Thus, the random variable $\widetilde{X}$ satisfies:
        \begin{align*}
            \PP(\lVert \widetilde{X} \rVert > 1) 
            &= \int_{\lVert x \rVert > 1} \pi_\infty(x) \ud x\\
            &= \int_{\lVert x \rVert > 1} \lVert x \rVert^{-\alpha - d} \pi_\infty(\theta(x)) \ud x\\
            &\geq c \int_{\lVert x \rVert > 1} \lVert x \rVert^{-\alpha-d} \ud x > 0
        \end{align*}
        where the last line follows from the lower bound of the marginal of the sphere, given by the condition \eqref{ex:lower_bound_marginal}. We can now apply Proposition 4.1, item 1 of \cite{clemenccon2026weak} which ensures that we also have the convergence in distribution when conditioning on $\|X\|$ large,
        \begin{equation*}
            \left(t^{-1}(X,Z) \given \lVert X \rVert \geq t \right) \xrightarrow[t \to \infty]{w} (X_\infty, Z_\infty),
        \end{equation*}
        where        \begin{equation}\label{ex:def_X_inty_Zinfty}
            (X_\infty,Z_\infty) \overset{(d)}{=} \left((\widetilde{X}_\infty, \widetilde{Z}_\infty) \given \lVert \widetilde{X}_\infty \rVert \geq 1\right).
        \end{equation}
        and we also have
        \begin{equation*}
            \left((X/t,Y) \given \lVert X \rVert \geq t \right) \xrightarrow[t \to \infty]{w} (X_\infty,Y_\infty)
        \end{equation*}
        with $(X_\infty,Y_\infty) \overset{(d)}{=} (X_\infty, \frac{Z_\infty}{\lVert X_\infty \rVert})$.
        The formula $p(x,y) = \lVert x \rVert \pi(x,\lVert x \rVert y)$ follows from a simple change of variable formula. To ensure that the convergence condition  \eqref{ex:regular_variation_density} holds, it suffices to show the uniform convergence on $\sphere \times \R$ (see \cite{de1987regular}, proof of Theorem 2.1), and we have
        \begin{align*}
            \sup_{\omega \in \sphere, y \in \R} \lvert b(t) t^d p(t\omega,y) - p_\infty(\omega,y) \rvert
            &= \sup_{\omega \in \sphere, y \in \R} \lvert b(t) t^{d+1} \pi(t \omega,ty) - \pi_\infty(\omega,y) \rvert\\
            &\leq \sup_{\lVert (x,z) \rVert \geq 1} \lvert b(t) t^{d+1} \pi(tx,tz) - \pi_\infty(x,z) \rvert \tto 0
        \end{align*}
        where the last inequality follows from the fact that $\lVert (\omega,y) \rVert \geq 1$ for any $\omega \in \sphere$. 
        \end{proof}
\noindent In the following, we will denote by $p_{Y\lvert X}(y\lvert x)$ the conditional density of $Y$ given $X=x$ and $p_{Y_\infty\lvert X_\infty}(y\lvert x)$ the conditional density of $Y_\infty$ given $X_\infty=x$. To check that Assumption \ref{assumption regularity} also holds, we begin by showing that the function $\omega \mapsto q_{\Plim}(\omega)$ is continuous on the sphere in a separate lemma.
\begin{lemma} \label{lem:ex_continuity_quantilefunc}
        Recall that $q_{\Plim}(\omega)$ is the $\tau$-quantile of $Y_\infty$ given $\theta(X_\infty) = \omega$. Assume that the limit marginal density $\pi_\infty(x)$ is lower bounded on the sphere \ie there exists $c> 0$ such that $\pi_\infty(\omega) \geq c$ for any $\omega \in \sphere$. Assume also that $\lVert \cdot \rVert = \lVert \cdot \rVert_p$ for some $p \in [1,+\infty]$. Then, the function $q_{\Plim}$ is continuous.
    \end{lemma}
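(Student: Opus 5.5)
We will work directly with the conditional density $p_{Y_\infty\mid X_\infty}(y\mid \omega)$ for $\omega\in\sphere$. By Lemma~\ref{ex:lem_regular_variation}, the limit pair $(X_\infty,Y_\infty)$ has density proportional to $p_\infty(x,y)=\lVert x\rVert\,\pi_\infty(x,\lVert x\rVert y)$ on $\{\lVert x\rVert\ge 1\}$. For $\omega\in\sphere$ this gives
\begin{equation*}
p_{Y_\infty\mid X_\infty}(y\mid \omega)=\frac{\pi_\infty(\omega,y)}{\pi_\infty(\omega)},\qquad \pi_\infty(\omega)=\int_\R \pi_\infty(\omega,z)\ud z .
\end{equation*}
By Lemma~\ref{lemma:angularity-extreme quantile}, the conditional law of $Y_\infty$ given $X_\infty=x$ depends only on $\theta(x)$, which is consistent with the homogeneity of $\pi_\infty$. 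It therefore suffices to prove that $\omega\mapsto F_\omega(s)=\int_{-\infty}^s \pi_\infty(\omega,y)\ud y/\pi_\infty(\omega)$ defines a family of distribution functions that is continuous in $\omega$, strictly increasing in $s$, and continuous in $s$. Continuity of the $\tau$-quantile $q_{\Plim}(\omega)=F_\omega^{-1}(\tau)$ then follows.

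\emph{Step 1: continuity of the marginal $\pi_\infty(\omega)$ on $\sphere$.} Since $\pi_\infty$ is continuous and positive, it suffices to dominate $\pi_\infty(\omega,y)$ uniformly in $\omega$ by an integrable function of $y$. This is where homogeneity and the choice of $\lVert\cdot\rVert_p$ enter. Because $\lVert\omega\rVert_p=1$, the bound $\lVert(\omega,y)\rVert_p\ge \max(1,\lvert y\rvert)$ holds (for $p=\infty$ it is $\max(1,\lvert y\rvert)$ exactly, and for $p<\infty$ it is $(1+\lvert y\rvert^p)^{1/p}$). Homogeneity of order $-\alpha-d-1$ then gives
\begin{equation*}
\pi_\infty(\omega,y)=\lVert(\omega,y)\rVert^{-\alpha-d-1}\,\pi_\infty\big(\theta(\omega,y)\big)\le M\max(1,\lvert y\rvert)^{-\alpha-d-1},
\end{equation*}
where $M=\sup\{\pi_\infty(u):u$ in the unit sphere of $\R^{d+1}\}<\infty$ by compactness and continuity. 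The right-hand side is integrable in $y$ because $\alpha+d+1>1$. Dominated convergence yields continuity of $\omega\mapsto\pi_\infty(\omega)$ and of $\omega\mapsto\int_{-\infty}^s\pi_\infty(\omega,y)\ud y$, uniformly in $s$. Combined with the lower bound \eqref{ex:lower_bound_marginal}, $\pi_\infty(\omega)\ge c>0$, this shows that $(\omega,s)\mapsto F_\omega(s)$ is jointly continuous, and in fact $\sup_s\lvert F_{\omega_n}(s)-F_\omega(s)\rvert\to0$ whenever $\omega_n\to\omega$.

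\emph{Step 2: from distribution functions to quantiles.} Positivity of $\pi_\infty$ makes each $F_\omega$ continuous and strictly increasing, so $q_{\Plim}(\omega)$ is the unique solution of $F_\omega(s)=\tau$. Fix $\omega_n\to\omega$ and $\varepsilon>0$, and set $q=q_{\Plim}(\omega)$. Strict monotonicity gives $F_\omega(q-\varepsilon)<\tau<F_\omega(q+\varepsilon)$. By the uniform convergence from Step 1, for $n$ large we have $F_{\omega_n}(q-\varepsilon)<\tau<F_{\omega_n}(q+\varepsilon)$, hence $\lvert q_{\Plim}(\omega_n)-q\rvert\le\varepsilon$. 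Finally, since $q_{\Plim}=h_{\Plim}\circ\theta$ and $\theta$ is continuous on $\R^d\setminus\{0\}$, continuity on $\sphere$ extends to continuity on $\R^d\setminus\{0\}$.

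The main obstacle is the integrable domination in Step 1: the support of $Y$ is unbounded, so one cannot simply invoke compactness, and the argument relies on the tail decay coming from homogeneity together with the norm inequality $\lVert(\omega,y)\rVert\ge\lvert y\rvert$, which is the reason the $\ell_p$ norm is assumed.
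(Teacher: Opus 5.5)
Your proof is correct and follows essentially the same route as the paper: a uniform integrable domination of $\pi_\infty(\omega,\cdot)$ obtained from homogeneity of order $-\alpha-d-1$ and the $\ell_p$ bound $\lVert(\omega,y)\rVert_p\ge\max(1,\lvert y\rvert)$, then dominated convergence to get continuity of $\omega\mapsto F_\infty(s\mid\omega)$, then passage to the quantile. Your direct inversion argument replaces the paper's citation of Proposition 0.1 in Resnick (2008), and it makes explicit, via strict monotonicity from positivity of $\pi_\infty$, that $\tau$ is a continuity point of the inverse. You also prove continuity of the marginal $\pi_\infty(\omega)$ explicitly, and your bound $\max(1,\lvert y\rvert)^{-\alpha-d-1}$ for $p=\infty$ is correct, whereas the paper's inequality with $(1+\lvert y\rvert)^{-\alpha-d-1}$ has the wrong direction. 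On all three points your argument is slightly more careful than the paper's.
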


    \begin{proof}[Proof of Lemma \ref{lem:ex_continuity_quantilefunc}]
        Let $\omega \in \sphere$ and consider the extreme conditional cumulative distribution function 
    \begin{equation*}
        F_\infty(s\lvert \omega) = \PP\left(Y_\infty \leq s \given X_\infty = \omega\right).
    \end{equation*}
    We first prove that  for any $s \in \R$, the mapping $\omega \in \sphere \mapsto F_\infty(s\lvert \omega) = \int_{-\infty}^s p_{\infty, Y \lvert X}(\omega,y) \ud y$ is continuous. Note first that the convergence 
        \begin{equation*}
    \sup_{\lVert(x,z) \rVert > 1} \lvert t^{d+1} b(t) \pi(tx,tz) - \pi_\infty(x,z) \rvert \tto 0.
\end{equation*}
    implies that $\pi_\infty$ is homogeneous of order $-\alpha-d-1$. Now, the function $(\omega,y) \in \sphere \times \R \mapsto p_{\infty, Y \lvert X}(\omega,y)$ is continuous (by continuity of $p_\infty$) and, for any $\omega \in \sphere$, we have
    \begin{align*}
        p_{Y_\infty \lvert X_\infty}(y \lvert \omega)
        &=  \frac{\pi_\infty(\omega,y)}{\pi_\infty(\omega)}\\
        &\leq \frac{\lVert (\omega,y) \rVert^{-\alpha - d - 1} \pi_\infty(\frac{(\omega,y)}{\lVert (\omega,y) \rVert})}{c}\\
        &\leq \frac{\lVert (\omega,y) \rVert^{-\alpha - d - 1} \sup_{x \in \sphere_{d+1}} \pi_\infty(x)}{c}
    \end{align*}
    where we used both the homogeneity of $\pi_\infty$ and the lower bound on its marginal on the sphere. Now, for any $p \in [1,\infty)$, we have
    \begin{equation*}
        \lVert(\omega,y) \rVert_p^p = \lVert \omega \rVert_p^p + \lvert y \rvert^p = 1 +  \lvert y \rvert^p
    \end{equation*}
    where we have used the fact that $\omega \in \sphere$ for the last equality. We obtain
    \begin{equation*}
         p_{Y_\infty \lvert X_\infty}(y \lvert \omega)
        \leq \frac{(1+\lvert y \rvert^p)^{-\frac{\alpha + d + 1}{p}} \sup_{\omega \in \sphere_{d+1}} \pi_\infty(\omega)}{c}
    \end{equation*}
    which is a bound independent from $\omega$, and defines an integrable function of $y$. Indeed,
    \begin{equation*}
    (1+\lvert y \rvert^p)^{-\frac{\alpha + d + 1}{p}} \underset{\lvert y \rvert \to \infty}{\sim} \lvert y \rvert^{-\alpha -d - 1} 
    \end{equation*}
    which is integrable at $\pm \infty$. If the case where $p=\infty$, we simply write
    \begin{equation}
        \lVert (\omega, y) \rVert_\infty^{-\alpha -d -1} \leq (1 + \lvert y \rvert)^{-\alpha -d-1}
    \end{equation}
    hence the same argument ensuring the domination of the density by an integrable function of $y$ also holds.
    By dominated convergence, we deduce that the function $\omega \mapsto F_\infty(s\lvert\omega)$ is continuous. Then Proposition 0.1 in \cite{resnick2008extreme} ensures that the weak convergence of non decreasing functions implies that of their generalized inverse. In particular, if $\omega_n \to \omega$, we have $q_{\Plim}(\omega_n) \to q_{\Plim}(\omega)$, hence $q_{\Plim}$ is continuous.
    \end{proof}
    
    \noindent We now check that the stability condition \eqref{eq:assumption_2} holds, and begin with an explicit expression for $p_{Y\lvert X}$ and $p_{Y_\infty \lvert X_\infty}$ as functions of $\pi$ and $\pi_\infty$ respectively. First, by definition of the conditional density, for any $x \in \R^d \setminus \{0\}$ and $y \in \R$,
    \begin{align*}
        p_{Y\lvert X}(y\lvert x) &= \frac{p(x,y)}{p(x)} = \frac{\lVert x \rVert \pi(x,\lVert x \rVert y)}{\int_\R \lVert x \rVert \pi(x,\lVert x \rVert y) \ud y} = \frac{\lVert x \rVert \pi(x,\lVert x \rVert y)}{\int_\R  \pi(x,u) \ud u} = \dfrac{\lVert x \rVert \pi(x, \lVert x \rVert y)}{\pi(x)}.
    \end{align*}
    For the conditional density $p_{Y_\infty \lvert X_\infty}$, note first the homogeneity of $\pi_\infty$ of order $-\alpha-d-1$ implies that $p_\infty$ is homogeneous of order $-\alpha-d$ with respect to the first component, \ie for any $x \in \R^d\setminus \{0\}$ and $t > 0$:
    \begin{equation*}
        p_\infty(tx,y) = t^{-\alpha-d} p_\infty(x,y).
    \end{equation*}
    The joint density of $(X_\infty, Z_\infty)$ is proportional to $\pi_\infty(x,z)$ on $\{x \in \R^d \colon \lVert x \rVert \geq 1\} \times \R$. It follows that the joint density of $(X_\infty, Y_\infty)$ is proportional to $p_\infty$ on $\{x \in \R^d \colon \lVert x \rVert \geq 1\} \times \R$, thus
    \begin{align*}
        p_{Y_\infty\lvert X_\infty}(y\lvert x) = \frac{p_\infty(x,y)}{\int_\R p_\infty(x,y) \ud y},
    \end{align*}
    so, by homogeneity of $p_\infty$,
    \begin{align*}
        p_{Y_\infty\lvert X_\infty}(y\lvert x) &= \frac{\lVert x \rVert^{-\alpha - d} p_\infty(\theta(x),y)}{\lVert x \rVert^{-\alpha - d} \int_\R  p_\infty(\theta(x),y) \ud y} = \frac{\pi_\infty(\theta(x),y)}{\pi_\infty(\theta(x))}.
    \end{align*}
    The following lemma then shows the uniform convergence of the conditional probability densities and their associated conditional distribution functions.

    \begin{lemma} \label{lem:ex_unif_cv}~
    \begin{enumerate}
        \item The conditional density of $Y$ given $X = t \omega$ converges to the conditional density of $Y_\infty$ given $X_\infty = \omega$ uniformly on the sphere, \ie
        \begin{equation*}
            \sup_{\omega \in \sphere} \lvert p_{Y\lvert X}(y\lvert t\omega) - p_{Y_\infty\lvert X_\infty}(y,\omega) \rvert \tto 0 
        \end{equation*}
        \item Denote by $F_t(s\lvert\omega) = \PP\left(Y\leq t \given X = t \omega \right)$ the conditional cumulative distribution function of $Y$ given $X = t \omega$. Also denote $F_\infty(s\lvert \omega) = \PP\left(Y_\infty \leq s \given X_\infty = \omega\right)$ the extreme conditional cumulative distribution function. The conditional cumulative function $F_t$ converges to the extreme conditional cumulative distribution function $F_\infty$, uniformly on the sphere, as $t \to \infty$, \ie
    \begin{equation*}
        \sup_{s \in \R, \omega \in \sphere} \lvert F_t(s \lvert \omega) - F_\infty(s \lvert \omega) \rvert  \tto 0.
    \end{equation*}
    \end{enumerate}
\end{lemma}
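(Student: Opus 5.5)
Item 1 reduces to the density convergence of Lemma~\ref{ex:lem_regular_variation}, with the marginal lower bound \eqref{ex:lower_bound_marginal} keeping the denominators away from zero. Item 2 then follows by integrating in $y$, using item 1 on compacts and the tail domination from the proof of Lemma~\ref{lem:ex_continuity_quantilefunc} for large $\lvert y \rvert$.

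\textbf{Rewriting both conditional densities.} From the expressions above, with $u = ty$,
\begin{equation*}
p_{Y\lvert X}(y\lvert t\omega) = \frac{t\,\pi(t\omega,ty)}{\pi(t\omega)} = \frac{b(t)t^{d+1}\pi(t\omega,ty)}{b(t)t^{d}\pi(t\omega)},
\end{equation*}
where $\pi(t\omega) = \int_\R \pi(t\omega,u)\ud u = t\int_\R \pi(t\omega,ty)\ud y$. The limit is $p_{Y_\infty\lvert X_\infty}(y\lvert\omega) = \pi_\infty(\omega,y)/\pi_\infty(\omega)$.

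\textbf{Numerator.} Since $\lVert(\omega,y)\rVert\ge 1$, condition \eqref{ex:regular_variation_density} gives
\begin{equation*}
\sup_{\omega,y}\lvert b(t)t^{d+1}\pi(t\omega,ty)-\pi_\infty(\omega,y)\rvert\to 0.
\end{equation*}

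\textbf{Denominator.} I need $\sup_\omega\lvert b(t)t^d\pi(t\omega)-\pi_\infty(\omega)\rvert\to0$. Write the difference as
\begin{equation*}
b(t)t^d\pi(t\omega)-\pi_\infty(\omega)=\int_\R\big(b(t)t^{d+1}\pi(t\omega,ty)-\pi_\infty(\omega,y)\big)\ud y .
\end{equation*}
The integrand tends to zero uniformly, but the integral runs over all of $\R$, so uniform convergence of the integrand is not enough by itself. This is the main obstacle. I would split the integral into $\lvert y\rvert\le A$ and $\lvert y\rvert>A$.
\begin{itemize}
\item On $\lvert y\rvert\le A$, uniform convergence gives a contribution of at most $2A\varepsilon_t$, where $\varepsilon_t \to 0$.
\item On $\lvert y\rvert>A$, $\pi_\infty$ is controlled by $(1+\lvert y\rvert)^{-\alpha-d-1}$ times a constant, by homogeneity, exactly as in Lemma~\ref{lem:ex_continuity_quantilefunc}. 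The term $b(t)t^{d+1}\pi(t\omega,ty)$ needs a uniform tail bound.
\end{itemize}
To get that tail bound, I would observe that uniform convergence on $\{\lVert(x,z)\rVert>1\}$ gives a relative bound on the region $\lVert(x,z)\rVert\ge r$. For such points write $(x,z)=r(x',z')$ with $\lVert(x',z')\rVert=1$. Using the regular variation of $b$ (Potter bounds), this should yield
\begin{equation*}
b(t)t^{d+1}\pi(t\omega,ty)\lesssim \lVert(\omega,y)\rVert^{-\alpha-d-1+\eta}
\end{equation*}
for large $t$, for some small $\eta>0$. This is integrable in $y$.

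If the Potter-type argument proves too delicate, I would try a fallback. Since $\int b(t)t^{d+1}\pi(t\omega,ty)\ud y$ is a total mass, I would use Scheffé-type reasoning: pointwise convergence of densities together with convergence of total mass. The mass convergence would come from the weak convergence in Lemma~\ref{ex:lem_regular_variation} applied along angular neighborhoods, plus continuity of $\pi_\infty$.

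\textbf{Concluding item 1.} With both uniform convergences in hand, the denominators satisfy $\pi_\infty(\omega)\ge c>0$. The ratio therefore converges uniformly in $\omega$, and also in $y$.

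\textbf{Item 2.} I would write
\begin{equation*}
\lvert F_t(s\lvert\omega)-F_\infty(s\lvert\omega)\rvert\le\int_\R\lvert p_{Y\lvert X}(y\lvert t\omega)-p_{Y_\infty\lvert X_\infty}(y\lvert\omega)\rvert\ud y .
\end{equation*}
This bound does not depend on $s$, which handles the supremum over $s$. I would again split at $\lvert y\rvert=A$. The compact part is controlled by item 1. The tail part is controlled by the same uniform integrable envelope, divided by the denominators, which are bounded below by $c/2$ for $t$ large. Choosing $A$ first and then $t$ large gives the uniform $L^1$ convergence, hence the uniform convergence of the conditional distribution functions.
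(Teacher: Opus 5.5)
Your proposal is correct and has the same skeleton as the paper's proof. Item 1 comes from uniform convergence of the numerator $b(t)t^{d+1}\pi(t\omega,ty)$ and of the denominator $b(t)t^{d}\pi(t\omega)$, with the latter kept away from zero by $\pi_\infty(\omega)\ge c$. Item 2 comes from the $s$-free bound by $\int_{\R}\lvert p_{Y\lvert X}(y\lvert t\omega)-p_{Y_\infty\lvert X_\infty}(y\lvert \omega)\rvert\,dy$.

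The difference is where the two analytic inputs come from. The paper imports both from \cite{huet2023regression}: their Lemma B.1 for the uniform convergence of the marginal densities, and equation (2.19) in the proof of their Proposition 2.2 for the vanishing of the integral. You instead derive both from a single Potter-type envelope, and that argument does go through:
\begin{itemize}
\item for $r=\lVert(\omega,y)\rVert\ge 1$, write $(\omega,y)=ru$ with $\lVert u\rVert=1$;
\item bound $b(tr)(tr)^{d+1}\pi(tru)$ by $\sup_{\lVert u\rVert=1}\pi_\infty(u)+1$, using the assumed uniform convergence at level $tr\ge t$ (extended to $\lVert u\rVert=1$ by continuity);
\item bound $b(t)/b(tr)\le(1+\epsilon)r^{-\alpha+\eta}$ by Potter's bounds.
\end{itemize}
Since $\lVert(\omega,y)\rVert_p\ge\max(1,\lvert y\rvert)$ for the $p$-norms of the example, this gives an $\omega$-free majorant $C\max(1,\lvert y\rvert)^{-\alpha-d-1+\eta}$, which is integrable in $y$. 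It then serves both for the denominator and, after dividing by $c/2$, for the tail part of item 2.

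What your route buys is self-containedness. It checks directly that the results of \cite{huet2023regression} carry over to the present normalization $Y=Z/\lVert X\rVert$, which the paper asserts without verification.

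The Scheffé fallback is not needed. As you phrase it, it presupposes the convergence of total mass $b(t)t^d\pi(t\omega)\to\pi_\infty(\omega)$ that the denominator step is meant to establish. Obtaining that uniformly in $\omega$ from weak convergence would need extra equicontinuity, so keep the Potter argument.
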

\begin{proof}[Proof of Lemma \ref{lem:ex_unif_cv}]
    We use several results from \cite{huet2023regression}. More precisely, we use Lemma B.1 and the computation done in the proof of Proposition 2.2 (item $(c)$). Note that both results are exploitable here because they do not require the boundedness of the target $Y$, and the statements can be proven only working with the densities $\pi, \pi_\infty$. This ensures that Lemma B.1 in \cite{huet2023regression} also applies in our setting, as well as the computation in the proof of Proposition 2.2.
    \begin{enumerate}
        \item From Lemma B.1 in \cite{huet2023regression}, we have the convergence of the marginals uniformly on the sphere,
        \begin{equation*}
            \sup_{\omega \in \sphere} \lvert t^d b(t) \pi(t\omega) - \pi_\infty(\omega) \rvert \tto 0.
        \end{equation*}
        Thus, we have the convergence
        \begin{equation*}
            \frac{t \pi (t \omega,ty)}{\pi(\omega)} = \frac{t^{d+1} b(t) \pi(t\omega,ty)}{t^d b(t) \pi(t\omega)} \tto \frac{\pi_\infty(\omega,y)}{\pi_\infty(\omega)}.
        \end{equation*}
        Moreover, this convergence is uniform on the sphere because both the numerator and the denominator converge uniformly on the sphere and the denominator is greater than $\frac{\inf_{\omega \in \sphere} \pi_\infty(\omega)}{2} > 0$ for $t$ large enough.
        \item Let $t > 0$. For any $s \in \R$, We have:
    \begin{align*}
        \sup_{\omega \in \sphere} ~~\lvert F_t(s \lvert \omega) - F_\infty(s \lvert \omega) \rvert &= \sup_{\omega \in \sphere} \Big\lvert \int_{-\infty}^s p_{Y\lvert X}(y\lvert t \omega) \ud y - \int_{-\infty}^s p_{Y_\infty \lvert X_\infty}(y\lvert \omega) \ud y\Big\rvert\\
        &= \sup_{\omega \in \sphere}\Big \lvert \int_{-\infty}^s \big(\frac{t \pi(t \omega,ty)}{\pi(t\omega)} - \frac{\pi_\infty(\omega,y)}{\pi_\infty(\omega)} \big) \ud y \Big\rvert \\
        &= \sup_{\omega \in \sphere} \Big \lvert \int_{-\infty}^s \big(\frac{t^{d+1} b(t) \pi(t \omega,ty)}{t^d b(t)\pi(t\omega)} - \frac{\pi_\infty(\omega,y)}{\pi_\infty(\omega)} \big) \ud y \Big\rvert\\
        &\leq \int_\R \sup_{\omega \in \sphere} \Big \lvert \frac{t^{d+1} b(t) \pi(t \omega,ty)}{t^d b(t)\pi(t\omega)} - \frac{\pi_\infty(\omega,y)}{\pi_\infty(\omega)} \Big\rvert \ud y 
    \end{align*}
    which is independent from $s$. It has been shown in \cite{huet2023regression} (proof of Proposition 2.2, equation 2.19) that this integral converges to $0$ as $t \to \infty$, which ends the proof.
    \end{enumerate}
    \end{proof}  
\noindent We now have all the necessary tools to show that the condition \eqref{eq:assumption_2} in Assumption \ref{assumption regularity} holds. We actually show the stronger condition
        \begin{equation} \label{ex:cond_ass2_sup}
            \sup_{\lVert x \rVert \geq t} \lvert q(x) - q_{\Plim}(x) \rvert \tto 0.
        \end{equation}
        \textbf{Step 1:} Let $x = t \omega$. We first establish a lower bound on the conditional density $p_{Y\lvert X}(y,t\omega)$ which holds on a specified domain. Lemma \ref{lem:ex_continuity_quantilefunc} ensures that by $q_{\Plim}$ is continuous on the sphere, hence it is bounded. Thus, there exists $M>0$ such that, for any $\omega \in \sphere$, $\lvert q_{\Plim}(\omega) \rvert \leq M$. since $\sphere \times [-M-1,M+1]$ is compact, by the continuity of the extreme conditional density, we can define
        \begin{equation*}
            c= \min_{(\omega,y) \in \sphere \times [-M-1,M+1]} p_{Y_{\infty}\lvert X_{\infty}}(y\lvert \omega) 
        \end{equation*}
        and $c>0$ by positivity of $p_{Y_{\infty}\lvert X_{\infty}}$. Now, by the uniform convergence of the conditional densities given by Lemma \ref{lem:ex_unif_cv}, it holds that, for $y \in [-M-1,M+1]$ and any $\omega \in \sphere$,
        \begin{align*}
            p_{Y\lvert X}(y,t\omega) &= p_{Y_\infty \lvert X_\infty}(y,\omega) + (p_{Y\lvert X}(y,t\omega) - p_{Y_\infty \lvert X_\infty}(y,\omega))\\
            &\geq p_{Y_\infty \lvert X_\infty}(y,\omega) - \sup_{\omega \in \sphere} \big\lvert p_{Y\lvert X}(y,t\omega) - p_{Y_\infty \lvert X_\infty}(y,\omega) \big\rvert
        \end{align*}
        if $t$ is large enough so that $\sup_{\omega \in \sphere} \lvert p_{Y\lvert X}(y,t\omega) - p_{Y_\infty \lvert X_\infty}(y,\omega) \rvert \leq \frac{c}{2}$, we obtain, for any $\omega \in \sphere$ and $y \in [-M-1,M+1]$:
        \begin{equation}\label{ex:lowerbound_density}
            p_{Y\lvert X}(y,t\omega) \geq \frac{c}{2}
        \end{equation}
        \textbf{Step 2:} We now use the uniform convergence  of the conditional cumulative distribution function. We have, from Lemma \ref{lem:ex_unif_cv},
        \begin{equation*}
            \sup_{\omega \in \sphere} \lvert F_t(q_{\Plim}(\omega)\lvert\omega) - F_\infty(q_{\Plim}(\omega) \lvert \omega) \rvert \leq \sup_{s \in \R, \omega \in \sphere} ~~\lvert F_t(s \lvert \omega) - F_\infty(s \lvert \omega) \rvert \tto 0
        \end{equation*}
        Moreover, by positivity of the conditional densities, we have $F_\infty(q_{\Plim}(\omega) \lvert \omega) = \tau = F_t(q(t\omega) \lvert \omega)$. We obtain
        \begin{equation}\label{ex:proximity_cdfs}
            \sup_{\omega \in \sphere} \lvert F_t(q_{\Plim}(\omega)\lvert\omega) - F_t(q(t\omega) \lvert \omega) \rvert \tto 0.
        \end{equation}
        \textbf{Step 3:} We now show that \eqref{ex:lowerbound_density} along with \eqref{ex:proximity_cdfs} imply that $q_{\Plim}(\omega)$ and $q(t\omega)$ are close for $t$ large enough. We first show that $q(t\omega) \in [-M-1,M+1]$ if $t$ is large enough. Assume by contradiction that, for any $T>0$ there exists $t\geq T$  and $\omega \in \sphere$ such that $\lvert q(t\omega) \rvert > M+1$. Assume for simplicity that $q(t\omega) > M+1$, and take $T$ large enough so that the lower bound on the conditional density $\eqref{ex:lowerbound_density}$ holds. Then, 
        \begin{align*}
            F_t(q(t\omega)) - F_t(q_{\Plim}(\omega)) &\geq F_t(M+1) - F_t(q_{\Plim}(\omega))\\
            &= \int_{q_{\Plim}(\omega)}^{M+1} p_{Y\lvert X}(y, t\omega) \ud y\\
            &\geq \frac{c}{2}(M+1-q_{\Plim}(\omega))\\
            &\geq \frac{c}{2}
        \end{align*}
        which contradicts the convergence \eqref{ex:proximity_cdfs}. Thus, there exists $T>0$ such that, for $t \geq T$ and any $\omega \in \sphere$, $q(t\omega) \in [-M-1,M+1]$. We can now leverage the lower bound on the conditional density $p_{Y\lvert X}$ provided by \eqref{ex:lowerbound_density}, which yields
        \begin{align*}
            \lvert F_t(q(t\omega)) - F_t(q_{\Plim}(\omega)) \rvert &=\Big \lvert \int_{q_{\Plim}(\omega)}^{q(t\omega)} p_{Y\lvert X}(y,t\omega) \ud y\Big \rvert\\
            &\geq \frac{c}{2} \lvert q(t\omega) - q_{\Plim}(\omega)\rvert
        \end{align*}
        and we obtain
        \begin{equation*}
            \sup_{\omega \in \sphere} \big \lvert q(t \omega) - q_{\Plim}(\omega) \rvert \leq \frac{2}{c} \sup_{\omega \in \sphere} \lvert F_t(q_{\Plim}(\omega)\lvert\omega) - F_t(q(t\omega) \lvert \omega) \big \rvert \tto 0.
        \end{equation*}
        Thus, it also holds that $\sup_{s \geq t} \sup_{\omega \in \sphere} \lvert q(t\omega) - q_{\Plim}(t \omega) \rvert \tto 0$, and 
        \begin{align*}
            \sup_{\lVert x \rVert \geq t} \lvert q(x) - q_{\Plim}(x) \rvert &= \sup_{\lVert x \rVert \geq 1} \lvert q(tx) - q_{\Plim}(tx) \rvert\\
            &= \sup_{s \geq t} \sup_{\lVert x \rVert = 1} \lvert q(sx) - q_{\Plim}(sx) \rvert \tto 0
        \end{align*}
        which shows that the condition \eqref{ex:cond_ass2_sup} is satisfied. 
    \end{proof}

\section{Proofs of the results in Section \ref{sec:stat_ERM}}
\subsection{Proof of Proposition \ref{prop:thershold_bias_vanishes}}
\label{appendix:proof_threshold_bias}
This is an adaptation of Proposition 3.2 in \cite{huet2023regression} to the case of quantile regression with the modified pinball loss. For any $h_1,h_2 \in \mathcal{F}$, we have, because the modified pinball loss is $\max(\tau,1-\tau)$ Lipschitz in each variable,
    \begin{equation*}
        \lvert \risk_t(h_1 \circ \theta) - \risk_t(h_2\circ \theta) \rvert \leq \max(\tau,1-\tau) \lVert h_1 - h_2 \rVert_\infty
    \end{equation*}
    and the same holds for $\risk_{\Plim}$,
    \begin{equation*}
        \lvert \risk_{\Plim}(h_1 \circ \theta) - \risk_{\Plim}(h_2\circ \theta) \rvert \leq \max(\tau,1-\tau) \lVert h_1 - h_2 \rVert_\infty.
    \end{equation*}
    By total boundedness of $\mathcal{F}$, for any $\varepsilon$, there exists $h_1,\dots, h_n$ such that $\mathcal{F} \subset \bigcup_{i=1}^n B(h_i,\varepsilon, \lVert \cdot \rVert_\infty)$ where $B(h_i,\varepsilon, \lVert \cdot \rVert_\infty)$ is the ball centered in $h_i$ and of radius $\varepsilon$ in the space $(\mathcal{C}(\sphere), \lVert \cdot \rVert_\infty)$. Because Assumption \ref{assumption regular variation} holds, we can apply Theorem \ref{Main theorem proba}, item 1, which ensures that, for any $i \in \{1,\dots,n\}$, 
    \begin{equation*}
        \risk_t(h_i \circ \theta) \tto \risk_{\Plim}(h_i \circ \theta).
    \end{equation*}
    Thus, for $t>T$ large enough, we have $\lvert \risk_t(h_i \circ \theta) - \risk_{\Plim}(h_i \circ \theta) \rvert \leq \varepsilon$ for any $i \in \{1,\dots,n\}$. Now, given $h \in \mathcal{F}$, there exists $i \in \{1,\dots,n\}$ such that $\lVert h - h_i \rVert_\infty \leq \varepsilon$. Thus, for $t \geq T$,

    \begin{align*}
        \lvert \risk_t(h \circ \theta) - \risk_{\Plim}(h \circ \theta) \rvert 
        &\leq \lvert \risk_t(h \circ \theta) - \risk_t(h_i \circ \theta) \rvert 
        + \lvert \risk_t(h_i \circ \theta) - \risk_{\Plim} (h_i \circ \theta) \rvert 
        + \lvert \risk_{\Plim}(h_i \circ \theta) - \risk_{\Plim}(h \circ \theta) \rvert\\
        &\leq (2\max(\tau,1-\tau) + 1) \varepsilon 
    \end{align*}
    This holds for any $h \in \mathcal{F}$, hence, for $t>T$,
    \begin{equation*}
        \sup_{h \in \mathcal{F}}\lvert \risk_t(h \circ \theta) - \risk_{\Plim}(h \circ \theta) \rvert \leq (2\max(\tau,1-\tau) + 1) \varepsilon.
    \end{equation*}
    Since $\varepsilon>0$ is arbitrary, we deduce the uniform convergence of the quantity above as $t \to \infty$. Note finally that $\risk_{\Plim}(h\circ \theta)$ can be replaced by $\risk_\infty(h\circ \theta)$ using item 1 of Theorem \ref{Main theorem proba}.

\subsection{Proof of Proposition \ref{prop:statistical_guarantee_ERM}} \label{sec:appendix_proof_stat_guarantee_ERM}

First, we split by the triangular inequality the supremum, introducing the intermediate risk $\riskinter(f \circ \theta)$, defined by equation \eqref{def:intermediate_risk},
    \begin{equation} \label{appendix_proof_ERM:splitting_risk}
        \sup_{f \in \ERMfamily} \lvert \risk_{\tnk}(f\circ \theta) - \emprisk_k(f \circ \theta) \rvert \leq \sup_{f \in \ERMfamily} \lvert \risk_{\tnk}(f\circ \theta) - \riskinter(f \circ \theta) \rvert + \sup_{f \in \ERMfamily} \lvert \riskinter(f\circ \theta) - \emprisk_k(f \circ \theta) \rvert.
    \end{equation}
We now provide a high probability bound on
\begin{equation*}
    \sup_{f \in \ERMfamily} \big\lvert \riskinter(f\circ \theta) - \emprisk_k(f \circ \theta) \big \rvert.
\end{equation*}
Notice first that by the uniform boundedness assumption (Assumption \ref{assumptionERM:uniform_bound}) on the class $\ERMfamily$, we have, for any $i \in \{1,\dots, n\}$,
\begin{equation*}
\lvert \pinbl'(Y_i,f(X_i)) \rvert \leq \max(\tau,1-\tau)\lvert f(X_i) \rvert \leq \max(\tau,1-\tau)M.
\end{equation*}
Applying the triangular inequality yields
\begin{equation}\label{eq:proof_ERM_riskhat_riskinter}
\begin{aligned}
        \sup_{f \in \mathcal{F}} \big \lvert \emprisk[k](f \circ \theta) - \riskinter(f \circ \theta) \big \rvert 
        &= \sup_{f \in \mathcal{F}} \frac{1}{k} \left\lvert \sum_{i=1}^n \pinbl'(Y_i,f(X_i))(\un_{\lVert X_i\rVert \geq \lVert X_{(k)}\rVert}-\un_{\lVert X_i\rVert \geq \tnk}) \right\rvert\\
        &\leq \frac{M \max(\tau,1-\tau)}{k} \sum_{i=1}^n \left\lvert \un_{\lVert X_i\rVert \geq\lVert X_{(k)}\rVert}-\un_{\lVert X_i\rVert \geq \tnk} \right\rvert.
    \end{aligned}
\end{equation}
Now, if $\tnk > \lVert X_{(k)}\rVert$, the previous sum counts all indices $i$ for which $\lVert X_i \rVert$ lies between $\lVert X_{k} \rVert$ (included) and $t$ (excluded), and there are exactly $\left\lvert\sum_{i=1}^n\un_{\lVert X_i \rVert \geq \tnk} -k \right\rvert$ such indices. 
Applying the same reasoning when $\tnk \leq \lVert X_{(k)}\rVert$ leads to the same result, thus,
\begin{equation}\label{eq:comparison_quantile_statistic_order}
        \sum_{i=1}^n \big\lvert \un_{\lVert X_{i} \rVert \geq \lVert X_{(k)} \rVert}  - \un_{\lVert X_i \rVert \geq \tnk} \big\rvert = \Big\lvert\sum_{i=1}^n\un_{\lVert X_i \rVert \geq \tnk} -k \Big\rvert.
    \end{equation}
The sum on the right-hand side of the equation can be bounded with high probability with Bernstein's inequality (see for instance \cite{bercu2015concentration}, Theorem 2.1 with Remark 2.4). Indeed, $\un_{\lVert X_i \rVert \geq \tnk}$ follows a binomial distribution with parameter $(n,\frac{k}{n}$), with $\EE[\un_{\lVert X_i \rVert \geq  \tnk}^2] = \frac{k}{n}$. By applying Bernstein's inequality along with a union bound argument, we obtain that, with probability no less than $1-2\delta$,
\[\left\lvert\sum_{i=1}^n\un_{\lVert X_i \rVert \geq \tnk} -k \right\rvert \leq \sqrt{2k\log(1/\delta)} + \frac{\log(1/\delta)}{3}.\]
Multiplying this inequality by $\frac{M\max(\tau,1-\tau)}{k}$, and replacing in \eqref{eq:proof_ERM_riskhat_riskinter}, we obtain that, with probability no less than $1-2\delta$,
\begin{equation} \label{appendix_proof:ERM_riskk_riskinter}
    \sup_{f \in \mathcal{F}} \big \lvert \emprisk[k](f \circ \theta) - \riskinter(f \circ \theta) \big \rvert \leq \frac{M \max(\tau,1-\tau)}{\sqrt{k}} \Big(\sqrt{2\log(1/\delta)} + \frac{\log(1/\delta)}{3\sqrt{k}}\Big).
\end{equation}
    To deal with the first term in \eqref{appendix_proof_ERM:splitting_risk}, we use a Talagrand type inequality (see \cite{bousquet2002bennett}, or the version in \cite{christmann2008support}, Theorem 7.5, with $\gamma = \frac{1}{2}$). We have, for any $f \in \ERMfamily$,
    \begin{align*}
        \big \lvert \riskinter(f \circ \theta) \big \rvert - \risk_{\tnk}(f\circ \theta) \rvert &= \frac{n}{k} \Big\lvert \frac{1}{n} \sum_{i=1}^n \pinbl'(Y_i,f(\theta(X_i))) \un\{\lVert X_i \rVert \geq \tnk\} - \EE[\pinbl'(Y_i,f(\theta(X_i))) \un\{\lVert X_i \rVert \geq \tnk\}] \Big\rvert\\
        &= \frac{n}{k} \Big\lvert \frac{1}{n} \sum_{i=1}^n g(X_i,Y_i) \Big\rvert
    \end{align*}
    where 
    \begin{equation}
        g(X_i,Y_i) = \pinbl'(Y_i,f(\theta(X_i))) \un\{\lVert X_i \rVert \geq \tnk\} - \EE[\pinbl'(Y_i,f(\theta(X_i))) \un\{\lVert X_i \rVert \geq \tnk\}].
    \end{equation}
    The function $g$ satisfies $\EE[g(X_1,Y_1)] = 0$. Moreover, for any $f \in \ERMfamily$, we have
    \begin{equation*}
        \lvert \pinbl'(Y,f(X)) \rvert 
        \leq \max(\tau,1-\tau) \lVert f\rVert_\infty \leq \max(\tau,1-\tau)M.
    \end{equation*}
    The second moment of $g(X_i,Y_i)$ can be bounded with
    \begin{align*}
        \EE[g(X_i,Y_i)^2] &= \Var(\pinbl'(Y_i,f(\theta(X_i))) \un_{\lVert X_i \rVert \geq \tnk})\\
        &\leq \EE[\pinbl'(Y_i,f(\theta(X_i)))^2 \un_{\lVert X_i \rVert \geq \tnk}]\\
        &\leq \max(\tau,1-\tau)^2 M^2 \frac{k}{n} = v.
    \end{align*}
    Note finally that the family $\ERMfamily$ is pointwise measurable, thus, we can restrict the supremum to $\ERMfamily_0$ which is at most countable, hence the supremum is measurable. We denote by $\psi_{n,k}$ the expectation of the supremum
    \begin{equation*}
        \psi_{n,k} = \frac{n}{k}\EE[\sup_{f \in \ERMfamily} \Big\lvert \frac{1}{n} \sum_{i=1}^n g((X_i,Y_i)) \Big\rvert ].
    \end{equation*}
    Talagrand's inequality yields that, with probability no less than $1-\delta$,
    \begin{align*}
        \sup_{f \in \ERMfamily} \Big\lvert \frac{1}{n} \sum_{i=1}^n g((X_i,Y_i)) \Big\rvert 
        &\leq \frac{3k}{2n}\psi_{n,k} + \sqrt{\frac{2v}{n}\log(1/\delta)} + \frac{8 M\max(\tau,1-\tau)\log(1/\delta)}{3n}.
    \end{align*}
    Multiplying both sides by $\frac{n}{k}$ and replacing $v$ by its value, we obtain
    \begin{equation*}
        \sup_{f \in \ERMfamily} \lvert \risk_{\tnk}(f\circ \theta) - \riskinter(f \circ \theta) \rvert \leq \frac{3}{2}\psi_{n,k} + M\max(\tau,1-\tau) \sqrt{\frac{2\log(1/\delta)}{k}}+ \frac{8M\max(\tau,1-\tau)\log(1/\delta)}{3k}.
    \end{equation*}
    We now provide a bound on the expectation $\psi_{n,k}$. We do exactly as in \cite{huet2023regression}, and introduce the Rademacher average
    \begin{equation*} \label{proof:ERM_riskt-riskinter_withpsi}
        \Rad(\ERMfamily,k) = \sup_{f \in \ERMfamily} \Big\lvert\frac{1}{k} \sum_{i=1}^n \sigma_i\pinbl'(Y_i,f(\theta(X_i))) \un_{\lVert X_i \rVert \geq \tnk}\Big\rvert
    \end{equation*}
    where the $\sigma_i$ are independent Rademacher variables. A standard symmetrization argument yields
    \begin{equation*}
        \EE[\sup_{f \in \ERMfamily} \lvert \risk_{\tnk}(f\circ \theta) - \riskinter(f \circ \theta) \rvert] \leq 2 \EE[\Rad(\ERMfamily,k)].
    \end{equation*}
    To deal with the Rademacher average, we use a conditioning lemma (see \cite{lhaut2022uniform}, Lemma 2.1), which ensures that
    \begin{equation*}
         \sum_{i=1}^n \sigma_i\pinbl'(Y_i,f(\theta(X_i))) \un_{\lVert X_i \rVert \geq \tnk} \overset{(d)}{=} \sum_{i=1}^{\mathcal{K}} \sigma_i\pinbl'(Y_i^k,f(\theta(X_i^k)))
    \end{equation*}
    where $(X_i^k,Y_i^k)$ are independent variables, independent from the variables $(X_i,Y_i)$ and from $\mathcal{K} \sim Bin(n,\frac{k}{n})$, and follow the same distribution as $\left(X,Y \given \lVert X \rVert \geq \tnk \right)$. Letting $\mathcal{G} = \{g(x,y) = \pinbl'(y,f\circ\theta(x)) \colon f \in \ERMfamily\}$, and considering any probability $Q$ on $\R^d \times \R$, we have
    \begin{align*}
        \lVert g_1-g_2 \rVert_{L^2(Q)} &= \sqrt{\EE_Q[(\pinbl'(Y,f_1\circ \theta(X)) - \pinbl'(Y,f_2\circ\theta(X)))^2]}\\
        &\leq \max(\tau,1-\tau)\sqrt{\EE_Q[(f_1(X)-f_2(X))^2]}\\
        &= \max(\tau,1-\tau) \lVert f_1 - f_2 \rVert_{L^2(Q_X \circ \theta^{-1})}
    \end{align*}
    where $Q_X$ is the marginal distribution of $X$. Denoting $\tilde{Q} = Q_X \circ \theta^{-1}$, we obtain that the covering number $\mathcal{N}(\mathcal{G},L_2(Q),\varepsilon)$ is bounded by $\mathcal{N}(\ERMfamily,L^2(\tilde{Q}), \frac{\varepsilon}{\max(\tau,1-\tau)})$. Since $\ERMfamily$ is of finite VC-dimension, with envelope function $M\un_{\sphere}$, Theorem 2.6.7 in \cite{vaart1997weak} yields
    \begin{equation*}
        \mathcal{N}(\ERMfamily, L^2(\tilde{Q}),M\varepsilon ) \leq \Big(\frac{A}{\varepsilon}\Big)^{2V_\ERMfamily}
    \end{equation*}
    where $V_\ERMfamily$ is the VC dimension of $\ERMfamily$, and $A$ is some universal constant. We deduce
    \begin{align*}
        \mathcal{N}(\mathcal{G}, L^2(Q), \varepsilon) &\leq \mathcal{N}\big(\ERMfamily, L^2(\tilde{Q}),\frac{\varepsilon}{\max(\tau,1-\tau)} \big)\\
        &\leq \Big(\frac{M\max(\tau,1-\tau)}{\varepsilon}\Big)^{2V_{\ERMfamily}}.
    \end{align*}
    Since the functions in $\mathcal{G}$ are uniformly bounded by $\max(\tau,1-\tau)M$, we obtain that $G = \max(\tau,1-\tau)M$ is an envelope function of $\mathcal{G}$ and
    \begin{equation*}
        \mathcal{N}(\mathcal{G}, L^2(Q), \varepsilon \lVert G \rVert_{L^2(Q)}) \leq \Big(\frac{A}{\varepsilon}\Big)^{2V_{\ERMfamily}}.
    \end{equation*}
    Now, applying Proposition 2.1 of \cite{gine2001consistency} along with $\sup_{g \in \mathcal{G}} \EE[g^2] \leq \max(\tau,1-\tau)^2 M^2$ and $\sup_{g\in \mathcal{G}} \lVert g \rVert_\infty \leq \max(\tau,1-\tau)M$, we obtain for any $m \geq 1$,
    \begin{equation*}
        \EE[\sup_{f \in \ERMfamily} \sum_{i=1}^m \sigma_i \pinbl'(Y_i^k,f\circ \theta(X_i^k))] \leq C'M\max(\tau,1-\tau)(V_\ERMfamily + \sqrt{mV_\ERMfamily}) 
    \end{equation*}
    for some universal constant $C'$. Thus,
    \begin{equation*}
        \EE\Big[\sup_{f \in \ERMfamily} \sum_{i=1}^{\mathcal{K}} \sigma_i\pinbl'(Y_i^k,f(\theta(X_i^k))) \; \big| \; \condK\Big]\leq  C'M\max(\tau,1-\tau)(V_\ERMfamily + \sqrt{\mathcal{K}V_\ERMfamily}).
    \end{equation*}
    We deduce
    \begin{align*}
        \EE[\Rad(\ERMfamily,k)] &= \EE\Big[\frac{1}{k}\EE\Big[\sup_{f \in \ERMfamily} \sum_{i=1}^m \sigma_i \pinbl'(Y_i^k,f\circ \theta(X_i^k)) \; \big| \; \condK \Big]\Big]\\
        &\leq \frac{C'M\max(\tau,1-\tau)}{k}\EE[V_\ERMfamily + \sqrt{\condK V_\ERMfamily}]\\
        &\leq \frac{C'M\max(\tau,1-\tau) V_\ERMfamily}{k} + \frac{C'M\max(\tau,1-\tau) \sqrt{V_\ERMfamily}}{\sqrt{k}}.
    \end{align*}
    Thus, for some universal constant $C^{''}$,
    \begin{equation}
        \psi_{n,k} \leq C^{''}M\max(\tau,1-\tau)\Big(\frac{V_\ERMfamily}{k} + \sqrt{\frac{V_\ERMfamily}{k}}\Big).
    \end{equation}
    Replacing $\psi_{n,k}$ by the previous bound in \eqref{proof:ERM_riskt-riskinter_withpsi}, we obtain that with probability no less than $1-\delta$, and some universal constant $C$,
    \begin{equation}\label{proof:ERM_riskt_riskinter}
    \begin{aligned}
        \sup_{f \in \ERMfamily} \lvert \risk_{\tnk}(f\circ \theta) - \riskinter(f \circ \theta) \rvert \leq \frac{M\max(\tau,1-\tau)}{\sqrt{k}}\Big(C\sqrt{V_\ERMfamily} + \sqrt{2\log(1/\delta)} + \frac{8 \log(1/\delta)}{3\sqrt{k}} + \frac{C V_\ERMfamily}{\sqrt{k}}\Big).
    \end{aligned}
    \end{equation}
    Combining  \eqref{appendix_proof:ERM_riskk_riskinter} with \eqref{proof:ERM_riskt_riskinter} and replacing in \eqref{appendix_proof_ERM:splitting_risk}, we obtain that, with probability no less than $1-3\delta$,
    \begin{equation*}
    \begin{aligned}
        \sup_{f \in \ERMfamily} \lvert \risk_{\tnk}(f\circ \theta) - \emprisk[k](f \circ \theta) \rvert \leq \frac{M\max(\tau,1-\tau)}{\sqrt{k}} \Big(C\sqrt{V_\ERMfamily} + 2\sqrt{2\log(1/\delta)}
        +\frac{3 \log(1/\delta)}{\sqrt{k}} + \frac{C V_\ERMfamily}{\sqrt{k}}\big).
    \end{aligned}
    \end{equation*}

\section{Proofs of the results in Section \ref{sec:stat_SVM}}
\subsection{Proof of the results in Section \ref{sec:SVM_preliminary}}
\label{appendix:proof_Existence_uniqueness_minimizer}
\begin{proof}[Proof of Lemma \ref{lem:Existence-uniqueness-minimizer}]
    \emph{Uniqueness}: By convexity of the pinball loss with respect to its first variable, we have, for any $h_1, h_2 \in \rkhs$ and $x,y \in \R^d$
    \begin{equation*}
        \pinbl'\big(\frac{h_1+ h_2}{2} \circ \theta(x),y\big) \leq \frac{\pinbl'(h_1 \circ \theta(x),y) + \pinbl'(h_1\circ \theta(x),y)}{2}
    \end{equation*}
    Integrating both sides with respect to the distribution $P$, we obtain
    \begin{equation*}
        \risk_P\big(\frac{h_1 + h_2}{2} \circ \theta\big) \leq \frac{1}{2}(\risk_P(h_1 \circ \theta) + \risk_P(h_2 \circ \theta))
    \end{equation*}
    that is, the function $h \mapsto \risk_P(h \circ \theta)$ is convex. Moreover, for any $\lambda>0$, the function $h \mapsto \lambda \normrkhs[h]^2$ is strictly convex. Hence, the map $h \mapsto \risk_P(h \circ \theta) + \lambda \normrkhs[h]^2$ is strictly convex, and the minimum is unique.

    \noindent \emph{Existence}: For any $h_1, h_2 \in \rkhs$, since the risk is $\max(\tau,1-\tau)$-Lipschitz, 
    \begin{equation}\label{eq:continuity-risk}
        \big \lvert \risk_P(h_1 \circ \theta) - \risk_P(h_2 \circ \theta) \big \rvert \leq \max(\tau,1-\tau)\lVert h_1 - h_2 \rVert_\infty
    \end{equation}
    Furthermore, since the kernel $K$ is bounded with $\sup K(x,x) \leq 1$, we have
    \begin{equation}\label{eq:inequality-normesup-normeH}
    \lVert h \rVert_\infty \leq \normrkhs[h].
    \end{equation}
    Replacing in \eqref{eq:inequality-normesup-normeH} in \eqref{eq:continuity-risk}, we obtain
    \begin{equation*}
        \big \lvert \risk_P(h_1 \circ \theta) - \risk_P(h_2 \circ \theta) \big \rvert \leq \max(\tau,1-\tau)\normrkhs[h_1-h_2]
    \end{equation*}
    that is to say, the function $h \in \rkhs \mapsto \risk_P(h \circ \theta)$ is $\max(\tau,1-\tau)$-Lipschitz. In particular, it is continuous, and therefore the map $h \in \rkhs \mapsto \lambda \normrkhs[h]^2 + \risk_P(h \circ \theta)$ is continuous.
    Now, consider the set 
    \begin{equation*}
        A = \{ h \in \rkhs \colon \lambda \normrkhs[h]^2 + \risk_P(h \circ \theta) \leq 0\}.
    \end{equation*}
    Since $\risk_P(0 \circ \theta) = 0$, it follows that $0 \in A$. Moreover, if $h \in A$, we have
    \begin{equation}\label{eq:estimee_norme_1}
    \begin{aligned}
        \lambda \normrkhs[h]^2 &\leq \lvert \risk_P(h \circ \theta) \rvert\\
        &\leq \max(\tau,1-\tau) \lVert h \rVert_\infty.
    \end{aligned}
    \end{equation}
    We obtain, replacing \eqref{eq:inequality-normesup-normeH} in \eqref{eq:estimee_norme_1}, 
    \begin{equation*}
        \lambda \normrkhs[h]^2 \leq \max(\tau,1-\tau) \lVert h \rVert_\rkhs.
    \end{equation*}
    Dividing both sides by $\lambda \lVert h \rVert_\rkhs$ yields
    \begin{equation*}
        \lVert h \rVert_\rkhs \leq \frac{\max(\tau,1-\tau)}{\lambda}.
    \end{equation*}
    Therefore, $A$ is a non empty bounded subset of the reflexive Banach space $\rkhs$. Then, (see Theorem A.6.9 in \cite{christmann2008support}) the map $h \in \rkhs \mapsto \lambda \normrkhs[h]^2 + \risk_P(h)$ has a minimum, achieved at $h^*_{\lambda,P}$. Moreover, this SVM solution satisfies, by the previous inequality,
    \begin{equation*}
        \normrkhs[h^*_{\lambda,P}] \leq \frac{\max(\tau,1-\tau)}{\lambda}.
    \end{equation*}
\end{proof}

\begin{proof}[Proof of Corollary \ref{cor:Bound_norm}]
       Define $\widehat{P}_k$ the empirical distribution of $(X_{(i)},Y_{(i)})_{1\leq i \leq k}$, $P_t$ the conditional law of $(X,Y)$ given $\lVert X \rVert \geq t$ and recall that $P_\infty$ is the joint law of $(X_\infty,Y_\infty)$ defined by \eqref{eq:Plim}. Recall that $((X_1, Y_1), \dots, (X_n,Y_n))$ are sampled from a distribution such that $\PP(X_i=0) = 0$ for any $i$, so that the empirical distribution $\widehat{P}_k$ satisfies the condition $\widehat{P}_k(0 \times \R) = 0$. This condition is also satisfied by $P_t$ since this distribution is supported on $\{(x,y) \in \R^d \times \R \colon \lVert x \rVert \geq t\}$, and the probability distribution $\Plim$ defined by $\eqref{eq:Plim}$ also satisfies $\Plim(0 \times \R) = 0$. Hence, Lemma \ref{lem:Existence-uniqueness-minimizer} applies to these 3 probability distributions, and yields the existence of $\hhlambda$, $h_{\lambda,t}^*$ and $h_{\lambda,\infty}^*$ respectively, along with the bound on their norm.

       \noindent For the intermediate SVM solution $\htildelambda$, denote by $N = \sum_{i=1}^n \un_{\lVert X_i \rVert \geq \tnk}$ the number of observations above the threshold $\tnk$. If $N=0$, the intermediate risk defined by \eqref{def:intermediate_risk} is identically equal to 0, and in this case $\htildelambda = 0$. Otherwise, denote by $\widetilde{P}_k$ the probability distribution associated to the observations $(X_i,Y_i)$ where $\lVert X_i \rVert \geq \tnk$,
       \begin{equation*}
           \widetilde{P}_k = \frac{1}{N} \sum_{i=1}^n \delta_{(X_i,Y_i)} \un_{\lVert X_i \rVert \geq \tnk}.
       \end{equation*}
       This defines a random probability distribution supported on $\{(x,y) \in \R^d \times \R \colon \lVert x \rVert \geq \tnk\}$. Lemma \ref{lem:Existence-uniqueness-minimizer} applies and the minimizer of 
       \begin{equation*}
           \frac{1}{N} \sum_{i=1}^n\pinbl'(h(\theta(X_i)),Y_i) + \lambda' \normrkhs[h]^2
       \end{equation*}
       is well-defined, for any $\lambda' > 0$. Now, multiply this penalized risk by $\frac{N}{k}$, and we obtain that the minimizer of 
       \begin{equation*}
           \riskinter(h \circ  \theta) + \lambda' \normrkhs[h]^2 \frac{N}{k}
       \end{equation*}
       is well defined for any $\lambda>0$. In particular, taking $\lambda' = \lambda \frac{k}{N}$ yields the existence and uniqueness of $\htildelambda$.
       
    \end{proof}
\begin{proof}[Proof of Proposition \ref{prop:restriction_RKHS_SVM_solution}]
        Notice first that the RKHS-norm of the SVM solution $h^*_{\lambda,P}$ is equal to the RKHS-norm of its restriction as an element of $\rkhs_\sphere$,
        \begin{equation} \label{eq:equality_norm_rkhs_rkhs_sphere}
            \normrkhs[h^*_{\lambda,P}] = \lVert h^*_{\lambda,P}\lvert_\sphere \rVert_{\rkhs_\sphere}.
        \end{equation}
        Indeed, recall that by definition of the norm $\lVert \cdot \rVert_{\rkhs_\sphere}$, 
        \begin{equation*}
             \lVert h^*_{\lambda,P}\lvert_\sphere \rVert_{\rkhs_\sphere} = \min \{ \normrkhs[h] \colon h\lvert_\sphere = h^*_{\lambda,P}\lvert_\sphere\} \leq \normrkhs[h^*_{\lambda,P}].
        \end{equation*}
        Assume by contradiction that $\lVert h^*_{\lambda,P}\lvert_\sphere \rVert_{\rkhs_\sphere} < \normrkhs[h^*_{\lambda,P}]$. Then there exists $h \in \rkhs$ such that $h\lvert_\sphere = h^*_{\lambda,P}\lvert_\sphere$ and $\normrkhs[h] < \normrkhs[h^*_{\lambda,P}]$. Since both functions coincide on the sphere, their conditional risk are equal, and we obtain
        \begin{align*}
            \riskPen[P](h \circ \theta) &= \risk[P](h \circ \theta) + \lambda \normrkhs[h]^2 \\
            &= \risk[P](h^*_{\lambda,\tnk} \circ \theta) + \lambda \normrkhs[h]^2\\
            &< \riskPen[P](h^*_{\lambda,\tnk} \circ \theta)
        \end{align*}
        which contradicts the definition of $h^*_{\lambda,P}$ as the minimizer of the penalized conditional risk. Now, take $h^*_\sphere$ the element of the RKHS $\rkhs$ such that $h^*_\sphere \lvert_\sphere = h^*_{\lambda,P,\sphere}$ and $\lVert h^*_\sphere \rVert_{\rkhs} = \lVert h^*_{\lambda,P,\sphere}\rVert_{\rkhs_\sphere}$. Proposition \ref{def:rkhs'} ensure that such an element exists and is unique. By definition of $h^*_{\sphere}$, we have
        \begin{align*}
            \risk_{P,\sphere}^{(\lambda)}(h^*_{\lambda,P,\sphere} \circ \theta) &= \risk[P](h^*_\sphere) + \lambda \normrkhs[h^*_\sphere]^2.
        \end{align*}
        By definition of $h^*_{\lambda,P,\sphere}$ as the minimizer of the penalized conditional risk over $\rkhs_\sphere$, we obtain
        \begin{align*}
             \risk[P](h^*_\sphere) + \lambda \normrkhs[h^*_\sphere]^2 &\leq \risk[P](h^*_{\lambda,P}\lvert_\sphere \circ \theta) + \lambda \lVert h^*_{\lambda,P} \lvert_\sphere \rVert_{\rkhs_\sphere}\\
             &= \risk[P](h^*_{\lambda,P} \circ \theta) + \lambda \lVert h^*_{\lambda,P}\rVert_{\rkhs}
        \end{align*}
        where the last line follows from the equality \eqref{eq:equality_norm_rkhs_rkhs_sphere}. Hence, $h^*_\sphere$ also minimizer the penalized conditional risk, that is, $h^*_\sphere = h^*_{\lambda,P}$. It follows, restricting to the sphere, that $h^*_{\lambda,P,\sphere} = h^*_{\lambda,P}\lvert_\sphere$.
    \end{proof}
\subsection{Proof the results in Section \ref{sec:stat_svm_debut}}
\begin{proof}[Proof of Theorem \ref{thm:SVM_bias_variance_dec}]
    We start with the  decomposition
    \[
    \risk_\infty(\hhlambda \circ \theta) + \lambda \normrkhs[\hhlambda]^2 - \risk_\infty^*=\mathrm{I}+\mathrm{II}+\mathrm{III}
    \]
    where
    \begin{equation*}
    \begin{aligned}
        \mathrm{I}&= \risk_{\tnk}(h^*_{\lambda,\tnk} \circ \theta) + \lambda \normrkhs[h^*_{\lambda,\tnk}]^2 - \risk_\infty^*,\\
        \mathrm{II}&=\risk_\infty(\hhlambda \circ \theta) - \risk_{\tnk}(\hhlambda \circ \theta),\\
        \mathrm{III}&=\Big(\risk_{\tnk}(\hhlambda \circ \theta) + \lambda \normrkhs[\hhlambda]^2\Big) - \Big( \risk_{\tnk}(h^*_{\lambda,\tnk} \circ \theta) + \lambda \normrkhs[h^*_{\lambda,\tnk}]^2\Big).
    \end{aligned}
    \end{equation*}
    For term $\mathrm{III}$, we notice that
    \begin{equation*}
        \mathrm{III}
        = \riskPen_{\tnk}(\hhlambda \circ \theta) - \riskPen_{\tnk}(h^*_{\lambda,\tnk} \circ \theta)
        = D(n,k,\lambda).
    \end{equation*}
    For term $\mathrm{II}$, we know from Corollary \ref{cor:Bound_norm} that $\lVert \hhlambda \rVert_{\rkhs_\sphere} \leq \frac{\max(\tau,1-\tau)}{\lambda}$, so that we can deduce
    \begin{equation*}
        \mathrm{II}=\risk_\infty(\hhlambda \circ \theta) - \risk_{\tnk}(\hhlambda \circ \theta) \leq \sup_{\normrkhs[h] \leq \max(\tau,1-\tau) \lambda^{-1}}\big \lvert \risk_{\tnk}(h \circ \theta) - \risk_\infty(h \circ \theta) \big \rvert.
    \end{equation*}
    Finally for term $\mathrm{I}$, we use the fact that $h^*_{\lambda,\tnk}$ minimizes the penalized conditional risk so that its penalized conditional risk is smaller than that of $h^*_{\lambda,\infty}$. Hence
    \begin{equation*}
    \begin{aligned}
        \mathrm{I}&=\risk_{\tnk}(h^*_{\lambda,\tnk} \circ \theta) + \lambda \normrkhs[h^*_{\lambda,\tnk}]^2 - \risk_\infty^*\\ 
        &\leq  \risk_{\tnk} (h^*_{\lambda,\infty} \circ \theta) + \lambda \normrkhs[h^*_{\lambda,\infty}]^2 - \risk_\infty^*\\
        &= \underbrace{\risk_{\infty}(h^*_{\lambda,\infty} \circ \theta) + \lambda \normrkhs[h^*_{\lambda,\infty}]^2 - \risk_\infty^*}_{\mathrm{I}_a} + \underbrace{\risk_{\tnk}(h^*_{\lambda,\infty} \circ \theta) - \risk_\infty(h^*_{\lambda,\infty} \circ \theta)}_{\mathrm{I}_b}
    \end{aligned}
    \end{equation*}
    For  term $\mathrm{I}_a$, we immediately recognize the asymptotic approximation error of the RKHS $\rkhs$:
    \begin{equation*}
        \mathrm{I}_a = \approxerror_2^\infty(\lambda).
    \end{equation*}
    Finally, we know from Corollary \ref{cor:Bound_norm} that $\lVert h^*_{\lambda,\infty} \rVert_{\rkhs_\sphere} \leq \frac{\max(\tau,1-\tau)}{\lambda}$, so that
    \begin{equation*}
        \mathrm{I}_b\leq \sup_{\normrkhs[h] \leq \max(\tau,1-\tau)\lambda^{-1}}\big \lvert \risk_{\tnk}(h \circ \theta) - \risk_\infty(h \circ \theta) \big \rvert.
    \end{equation*}
    Combining all of the previous inequalities yields the result.
\end{proof}
\begin{proof}[Proof of Proposition \ref{prop:BiasControl}]
    Let $R = \frac{\max(\tau,1-\tau)}{\lambda} > 0$. We want to prove
    \begin{equation*}
        \sup_{h \in R \ball_{\rkhs}} \Big \lvert \risk_{t}(h \circ \theta) - \risk_\infty(h \circ \theta) \Big \rvert \tto 0.
    \end{equation*}
    Since $h\circ \theta$ depends only on the restriction to the sphere  $h_{|\sphere}$, we can work in the RKHS $\rkhs_\sphere$. Using also $\lVert h\lvert_\sphere \rVert_{\rkhs_\sphere} \leq \normrkhs[h]$, we obtain
    \begin{equation*}
        \sup_{h \in R \ball_{\rkhs}} \Big \lvert \risk_{t}(h \circ \theta) - \risk_\infty(h \circ \theta) \Big \rvert  \leq \sup_{h \in R \ball_{\rkhs_\sphere}} \Big \lvert \risk_{t}(h \circ \theta) - \risk_\infty(h \circ \theta) \Big \rvert.
    \end{equation*}
    Since the sphere $\sphere \subset \R^d$ is compact, it follows from Corollary 4.31 in \cite{christmann2008support} that the inclusion $\rkhs_{\sphere} \to \mathcal{C}(\sphere)$ is compact. Thus, the $\lVert \cdot \rVert_\infty$-closure of $\ball_{\rkhs_\sphere}$ is a compact subset of $\mathcal{C}(\sphere)$. In particular, $R \ball_{\rkhs_\sphere}$ is totally bounded in the space $(\mathcal{C}(\sphere),\lVert \cdot \rVert_\infty)$. We conclude by applying Proposition \ref{prop:thershold_bias_vanishes}.
\end{proof}

\begin{proof}[Proof of Corollary \ref{cor:threshold_bias_vanishes_SVM}]
  Define a  function of two variables, 
  \begin{equation*}
  \beta(n, \lambda) := B(n, k(n), \lambda), ~~\lambda>0. 
  \end{equation*}
  From the definition of the bias term \eqref{eq:threshold_bias}, the function $\beta$ is clearly non increasing in its second argument, and from the previous proposition and the fact that $k(n)/n \to 0$, for any fixed $\lambda > 0$, $\beta(n,\lambda)$ converges to $0$ as $n \to \infty$. We need to construct a sequence $\lambda(n)>0$ such that $\lambda(n) \to 0$ and $\beta(n,\lambda(n)) \to 0$.
  Consider the subset
  \begin{equation*}
      \Lambda_n = \{\lambda>0 \colon  \beta(n,\lambda) \leq \lambda \}.
  \end{equation*}
First, we claim that $\Lambda_n$ is nonempty. If $\Lambda_n = (0,+\infty)$ then it is clearly nonempty, otherwise we can choose $\lambda > 0$ such that $\beta(n,\lambda) > \lambda$. Then for $x\ge \beta(n,\lambda)$, we have by
  monotonicity,
  $$
\beta(n,x) \le  \beta(n,\lambda) \le x, 
  $$
  thus  $x \in \Lambda_n$. In addition, the set $\Lambda_n$ is clearly lower bounded by 0. Consequently, the function
  \begin{equation*}
      \psi(n) = \inf(\Lambda_n)
  \end{equation*}
  is valued in $[0,+\infty)$. We now show that for all $n$,  the set $\Lambda_n$ is  an interval of the kind $(\psi(n),+\infty)$ or $[\psi(n),+\infty)$. Indeed since $\Lambda_n$ is nonempty, let $\lambda\in \Lambda_n$ and let  $\lambda'>\lambda$.  By monotonicity we have 
$$
\beta(n,\lambda') \leq \beta(n,\lambda) \leq \lambda < \lambda', 
$$
proving that $\lambda' \in \Lambda_n$ too and that $\Lambda_n$ is an interval. Now, define $\lambda(n) = \psi(n) + \frac{1}{n}$ so that, for all $n$, $\lambda(n) \in \Lambda_n$.  Then,
$$
\beta(n, \lambda(n)) \le \lambda(n) = \psi(n) +1/n. 
$$
It remains only to  prove 
that $\psi(n) \to 0$ as $n\to\infty$.
We proceed by contradiction: assume that $\psi(n)$ does not converge to $0$. Then  for some $\varepsilon>0$,  and some subsequence  $\phi(n)\to\infty$, we have  $\psi\circ\phi(n)>\varepsilon$, for all $n$. Thus $\varepsilon \notin \Lambda_{\phi(n)}$, which means
\begin{equation*}
    \beta(\phi(n),\varepsilon) > \varepsilon ,
\end{equation*}
contradicting the fact that $\beta(\phi(n), \varepsilon) \to 0$ as $n\to \infty$. 
 \end{proof} 
\subsection{Proof of the results in Section \ref{sec:stat_bound_distance}}
\label{appendix:proof_bernstein_hh_h^*}
\begin{proof}[Proof of Lemma \ref{lem:distance_hh_htilde}]
The proof shares some similarities with the one of Theorem 5.9 (see also Theorem 5.8) in \cite{christmann2008support}. Since our setting involves restricting the empirical distributions to regions above  order statistics and true quantiles, respectively, we find it simpler to propose a full proof rather than attempting to apply the above mentioned results. The intuition behind this result is linked with the stability of SVM solutions. Given a distribution $P$ on $\R^d \times \R$, consider $h_{\lambda,P}^*$ the SVM solution obtained by minimizing 
\begin{equation*}
    \empriskPen_P(h \circ \theta) = \emprisk_P(h \circ \theta) + \lambda \normrkhs[h]^2.
\end{equation*}
If $P$ and $P'$ be two probability distribution on $\R^d \times \R$, results concerning the stability of SVM solutions typically show that the distance between the two resulting SVM solutions $h^*_{\lambda,P}$ and $h^*_{\lambda,P'}$ can be bounded by  some term quantifying the distance between $P$ and $P'$. Here, we are considering instead the two SVM solutions $\htildelambda$ and $\hhlambda$. Recall that $\emprisk_k$ is associated with the empirical measure of the sample $(X_{(i)},Y_{(i)})_{1\leq i\leq k}$ but that  $\riskinter$ cannot be directly interpreted as a risk against a probability measure. Still, we can prove that some similar property holds, and provide on $\normrkhs[\hhlambda -\htildelambda]$ close depending on how well the quantile $t_{n,k}$ is estimated by $\lVert X_{(k)} \rVert$ the $k$-th statistic order of $\lVert X \rVert$.

First, note that the risk functions $\riskinter[k,\sphere] \colon h \in \rkhs \mapsto \riskinter(h \circ \theta)$ and $\emprisk[k,\sphere] \colon h \in \rkhs \mapsto \emprisk[k](h \circ \theta)$ are convex functions from $\rkhs$ to $\R$, by convexity of the pinball loss. Moreover, the function $h \in \rkhs \mapsto \normrkhs[h]^2$ is differentiable and convex. Therefore, the regularized risks $\riskinterPen[k] \colon h \in \rkhs \mapsto \lambda \normrkhs[h]^2 + \widetilde{\risk}_{t,\sphere}(h)$ and $\empriskPen_k \colon h \in \rkhs \mapsto \lambda \normrkhs[h]^2 + \emprisk[k,\sphere](h)$ are both subdifferentiable, and their subgradient verifies
    \begin{equation*}
        \partial \riskinterPen[k](h) = \partial \riskinter[k,\sphere](h) + 2\lambda \iota h,
    \end{equation*}
    and
    \begin{equation*}
        \partial \empriskPen[k](h) = \partial \emprisk[k,\sphere](h) + 2\lambda \iota h,
    \end{equation*}
    where $\iota \colon \rkhs \to \rkhs'$ is the Fréchet-Riesz isomorphism that maps $\rkhs$ to its dual $\rkhs'$.
    Now, by definition of $\htildelambda$ and $\hhlambda$, we have $ 0 \in \partial \riskinterPen[k](\htildelambda)$ and $ 0 \in \partial \empriskPen_{k}(\hhlambda)$. Thus, there exists $\widetilde{w} \in \partial \riskinter[k,\sphere](\htildelambda)$ and $\widehat{w} \in \partial \emprisk[k,\sphere](\hhlambda)$ such that:
    \begin{equation*}
        \widetilde{w} + 2\lambda \iota \htildelambda = 0
    \end{equation*}
    and
    \begin{equation*}
        \widehat{w} + 2\lambda \iota \hhlambda = 0.
    \end{equation*}
    Now, by definition of the subgradient, we have
    \begin{equation*}
        \langle \widehat{w}, \htildelambda-\hhlambda \rangle_{\rkhs',\rkhs} \leq \emprisk[k](\htildelambda \circ \theta) - \emprisk[k](\hhlambda \circ \theta),
    \end{equation*}
    which implies
    \begin{equation*}
        \langle -\widehat{w}, \hhlambda - \htildelambda \rangle_{\rkhs',\rkhs} \leq \emprisk[k](\htildelambda \circ \theta) - \emprisk[k](\hhlambda \circ \theta),
    \end{equation*}
    and
    \begin{equation*}
        \langle \widetilde{w}, \hhlambda - \htildelambda \rangle_{\rkhs',\rkhs} \leq \riskinter(\hhlambda \circ \theta) - \riskinter(\htildelambda \circ \theta).
    \end{equation*}
    Adding the two equations above yields
    \begin{equation*}
        \langle \widetilde{w} - \widehat{w}, \hhlambda - \htildelambda \rangle_{\rkhs',\rkhs} \leq \emprisk[k](\htildelambda \circ \theta)- \emprisk[k](\hhlambda \circ \theta) +
        \riskinter(\hhlambda \circ \theta) - \riskinter(\htildelambda \circ \theta).
    \end{equation*}
    Using the relations $\widetilde{w}= -2\lambda \iota  \htildelambda$ and $\widehat{w} =-2\lambda \iota \hhlambda $, we obtain
    \begin{equation*}
        2\lambda \langle \iota (\hhlambda - \htildelambda), \hhlambda - \htildelambda \rangle_{\rkhs',\rkhs} \leq \emprisk[k](\htildelambda \circ \theta)- \emprisk[k](\hhlambda \circ \theta) +
        \riskinter(\hhlambda \circ \theta) - \riskinter(\htildelambda \circ \theta).
    \end{equation*}
    By definition of $\iota$ and $\langle~ \cdot ~\rangle_{\rkhs',\rkhs}$, we have
    \begin{equation*}
        \langle \iota (\hhlambda - \htildelambda), \hhlambda - \htildelambda \rangle_{\rkhs',\rkhs}
        = \langle \hhlambda - \htildelambda, \hhlambda - \htildelambda \rangle_{\rkhs}
        = \lVert \hhlambda - \htildelambda \rVert_\rkhs^2.
    \end{equation*}
    Replacing in the previous inequality, we obtain the following bound:
    \begin{equation}\label{eq:distance htilde hh with risk}
        2\lambda \lVert \hhlambda - \htildelambda \rVert_\rkhs^2 \leq 
        \emprisk[k](\htildelambda\circ \theta)- \emprisk[k](\hhlambda \circ \theta) +
        \riskinter(\hhlambda \circ \theta) - \riskinter(\htildelambda \circ \theta).
    \end{equation}
    Now, using the definition of $\riskinter$ and $\emprisk[k]$,
    \begin{align*}
        &\big\lvert \emprisk[k](\htildelambda \circ \theta)- \emprisk[k](\hhlambda \circ \theta) +
        \riskinter(\hhlambda \circ \theta) - \riskinter(\htildelambda \circ \theta)\big\rvert \\
        &= \frac{1}{k} \Big\lvert \sum_{i=1}^n (\pinbl'(\htildelambda(\theta_{(i)}),Y_i) - \pinbl'(\hhlambda(\theta_{(i)}), Y_i) (\un_{\lVert X_{i} \rVert \geq \lVert X_{(k)} \rVert}  - \un_{\lVert X_i \rVert \geq \tnk})\Big\rvert\\
        &\leq \frac{\max(\tau,1-\tau) \lVert \htildelambda - \hhlambda\rVert_\infty}{k} \sum_{i=1}^n \big \lvert \un_{\lVert X_{i} \rVert \geq \lVert X_{(k)} \rVert}  - \un_{\lVert X_i \rVert \geq \tnk} \big \rvert,
    \end{align*}
    The sum can be rewritten as in \eqref{eq:comparison_quantile_statistic_order}. Moreover, since the kernel $K$ is bounded by $1$, we have 
    \begin{equation*}
        \lVert \htildelambda - \hhlambda\rVert_\infty \leq \normrkhs[\htildelambda - \hhlambda].
    \end{equation*}
    Replacing in \eqref{eq:distance htilde hh with risk}, we obtain
    \begin{equation*}
        2\lambda \lVert \hhlambda - \htildelambda \rVert_\rkhs^2 \leq 
        \frac{\max(\tau,1-\tau) \normrkhs[\hhlambda - \htildelambda]}{k} \Big\lvert\sum_{i=1}^n\un_{\lVert X_i \rVert \geq \tnk} -k \Big\rvert
    \end{equation*}
    and we obtain the result by dividing both sides by $2\lambda\normrkhs[\hhlambda - \htildelambda]$.
\end{proof}
\begin{proof}[Proof of Proposition \ref{prop:distance_htilde_hstar}]
Theorem 5.8 in \cite{christmann2008support} ensures that, for any probability measure $P$ on $\mathbb{R}^d \times \mathbb{R}$, the SVM solution
\begin{equation*}
    h_{\lambda,P} \in \argmin \risk_P(h) + \lambda \normrkhs[h]^2
\end{equation*}
can be written as an expectation involving a certain function $w$ and the feature map $\Phi$ of the RKHS $\rkhs$. More precisely, we have 
\begin{equation*}
    h_{\lambda,P} = - \frac{1}{2\lambda} \EE_P[w \Phi]
\end{equation*}
and the function $w$ satisfies $w(x,y) \in \partial \pinbl'(y,h_{\lambda,P}(x))$ for any  $(x,y) \in \mathbb{R}^d \times \mathbb{R}$.
This theorem covers also our setting because $h_{\lambda,t}$ minimizes
\[\EE[\pinbl(Y, h \circ \theta (X)) \given \lVert X \rVert \geq \tnk] + \lambda \normrkhs[h]^2\]
and we can take $P=P_{\tnk,\sphere}$ the joint distribution of $\left(\theta(X),Y \given \lVert X \rVert \geq \tnk \right)$. More precisely, Corollary 5.10 in  \cite{christmann2008support} ensures that, for any $(x, y) \in \R^d \times \R$, the function $\representer$ satisfies
\begin{equation*}
    \representer(\theta(x),y) \in \partial \pinbl'(y,h^*_{\lambda,\tnk}(\theta(x))).
\end{equation*}
Note that the non-negativity of the loss is not satisfied in our case. However, this is not an issue for quantile regression because the modified pinball loss is globally Lipschitz and the result from \cite{christmann2008support} can still be used.
The $\max(\tau,1-\tau)$-Lipschitzianity of the pinball loss also yields $\lVert \representer \rVert_\infty \leq \max(\tau,1-\tau)$. Moreover, for any distribution $P$ on $\R^d \times \R$, we obtain, from the propositions in \cite{christmann2008support},
\begin{equation} \label{eq:representer_with_proba_distrib}
    \normrkhs[h^*_{\lambda,\tnk} - h_{\lambda,P}^*] \leq \frac{1}{\lambda} \big\lVert \EE_{P_{\tnk,\sphere}}[w \Phi] - \EE_{P_{\sphere}}[w\Phi] \big\rVert_{\rkhs},
\end{equation}
where $P_\sphere$ is the distribution of $(\theta(X),Y)$ when $(X,Y) \sim P$. It turns out that this result can be extended to even discrete finite measures (not necessarily probability measures), with exactly the same proof as in \cite{christmann2008support} (see the end of the proof of Theorem 5.9 in particular). Taking $\mu = \frac{1}{k} \sum_{i=1}^n \delta_{(\theta(X_i),Y_i)} \un \{\lVert X_i \rVert \geq \tnk\}$ the measure associated to a sample $((X_1,Y_1) \dots, (X_n,Y_n))$, with associated SVM solution $\htildelambda$, \eqref{eq:representer_with_proba_distrib} extends in this case to
\begin{equation} \label{eq:representer_hstar_htilde}
    \normrkhs[h^*_{\lambda,t} - \htildelambda] 
    \leq \frac{1}{\lambda } \Big\lVert \EE_{P_{t,\sphere}}[w \Phi] - \frac{1}{k} \sum_{i=1}^n w(\theta(X_i),Y_i) \Phi(\theta(X_i)) \un\{\lVert X_i \rVert \geq \tnk\} \Big\rVert_{\rkhs}.
\end{equation}
An application of a Bernstein's type inequality adapted to Hilbert-spaces valued random variables allows to control the term on the right-hand side.
We will use the Bernstein's inequality stated below.
\begin{proposition}
    [Bernstein's inequality in Hilbert spaces, \cite{christmann2008support} Theorem 6.14] \label{Bernsteinhilbert}
Let $\rkhs$ be a separable RKHS, $B>0$ and $\sigma > 0$. Let $\xi_1,\dots,\xi_n$ be random variables taking values in $\rkhs$ and satisfying, for all $i \in \{1,\dots,n\}$, $\EE[\xi_i] = 0$, $\EE [\lVert \xi_i \rVert_\rkhs^2] \leq \sigma^2$ and such that, with probability $1$, $\normrkhs[\xi_i] \leq B$ for some $B>0$ . Then, with probability no less than $1-\delta$,
\begin{equation*}
    \normrkhs[\frac{1}{n}\sum_{i=1}^n \xi_i] 
    \leq \sqrt{\frac{2\sigma^2\log(1/\delta)}{n}} + \sqrt{\frac{\sigma^2}{n}} + \frac{2B\log(1/\delta)}{3n}.
\end{equation*}
\end{proposition}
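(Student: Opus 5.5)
The plan is to prove this Hilbert-space Bernstein bound by splitting the norm into its mean and its fluctuation around the mean. Write $S = \sum_{i=1}^n \xi_i$. Then
\begin{equation*}
\normrkhs[S] \leq \EE\normrkhs[S] + \big(\normrkhs[S] - \EE\normrkhs[S]\big).
\end{equation*}
This gives one term for the expectation and one for the deviation.

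\emph{Expectation term.} Separability of $\rkhs$ and independence of the $\xi_i$ (implicit in the statement, as in \cite{christmann2008support}) let us expand the square, and the cross terms vanish because $\EE[\xi_i]=0$. Jensen's inequality then gives
\begin{equation*}
\EE\normrkhs[S] \leq \big(\EE\normrkhs[S]^2\big)^{1/2} = \Big(\sum_{i=1}^n \EE\normrkhs[\xi_i]^2\Big)^{1/2} \leq \sqrt{n\sigma^2}.
\end{equation*}
Dividing by $n$ yields the middle term $\sqrt{\sigma^2/n}$.

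\emph{Deviation term.} We view $\normrkhs[S]$ as a supremum of an empirical process. Take a countable dense subset $\{g_j\}$ of the unit ball of $\rkhs$; it exists by separability. Then
\begin{equation*}
\normrkhs[S] = \sup_j \sum_{i=1}^n \langle g_j,\xi_i\rangle_\rkhs .
\end{equation*}
Each summand $\langle g_j,\xi_i\rangle_\rkhs$ is centered, bounded by $B$, and has variance at most $\sigma^2$. A Talagrand/Bousquet-type inequality for suprema of empirical processes, e.g.\ \cite{bousquet2002bennett}, bounds $\normrkhs[S] - \EE\normrkhs[S]$ by a sub-Gaussian term plus a $B\log(1/\delta)$ term. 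The variance proxy there involves $n\sigma^2 + 2B\,\EE\normrkhs[S]$, and this cross term would spoil the stated constants. A cleaner alternative is the one-dimensional-type martingale argument of Pinelis/Yurinskii, which applies since a Hilbert space is $(2,1)$-smooth. It gives, with probability at least $1-\delta$,
\begin{equation*}
\normrkhs[S] - \EE\normrkhs[S] \leq \sqrt{2n\sigma^2\log(1/\delta)} + \tfrac{2}{3}B\log(1/\delta).
\end{equation*}
Concretely, define the Doob martingale $M_m = \EE[\normrkhs[S]\mid \xi_1,\dots,\xi_m]$. By the triangle inequality and independence, its increments are bounded by $\normrkhs[\xi_m] + \EE\normrkhs[\xi_m] \leq 2B$. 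Their conditional variances are bounded by $\EE\normrkhs[\xi_m]^2 \leq \sigma^2$; this needs the Hilbert structure via the usual symmetrization/replacement trick. Freedman's (Bernstein's) martingale inequality then gives the bound above, with the $\tfrac{2}{3}$ coming from the increment bound $2B$ and the Bernstein constant $\tfrac13$.

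Dividing by $n$ and adding the two pieces gives exactly the stated bound. The main obstacle is the deviation step: matching the constants $\sqrt{2\sigma^2}$ and $2B/3$ requires the variance proxy to be exactly $\sigma^2$ per step, with no $B\,\EE\normrkhs[S]$ correction. I would first try the martingale-difference bound carefully, then fall back on citing \cite{christmann2008support} Theorem 6.14 directly, since the statement is quoted from there.
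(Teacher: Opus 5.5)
Your argument is correct. The paper gives no proof of this statement; it imports it verbatim from \cite{christmann2008support}, Theorem 6.14. So what you give is a self-contained reconstruction along the standard route. The mean term $\mathbb{E}\lVert S\rVert_\rkhs\le\sqrt{n\sigma^2}$ comes from orthogonality of independent centered summands. The deviation of $\lVert S\rVert_\rkhs$ about its mean is controlled through the Doob martingale $M_m=\mathbb{E}[\lVert S\rVert_\rkhs\mid\xi_1,\dots,\xi_m]$.

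Compared with the bare citation, your route shows which hypotheses are actually used. In particular, you are right that independence, which the paper's statement omits, is indispensable. It does hold in the paper's application, where the $\xi_i$ are built from an i.i.d.\ sample.

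Two points to tighten.

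\emph{The constant $\frac{2B}{3}$.} With increment bound $R=2B$, the stated term $\frac{2B}{3n}\log(1/\delta)$ comes out only if you invert the Bennett/Freedman bound in its sub-gamma form. Bennett's lemma, which needs only the one-sided bound $d_m\le R$, gives $\log\mathbb{E}e^{\lambda(M_n-M_0)}\le\frac{V}{R^2}(e^{\lambda R}-1-\lambda R)\le\frac{V\lambda^2}{2(1-R\lambda/3)}$ for $0\le\lambda<3/R$, with $V=n\sigma^2$. The exact inversion of this is $\sqrt{2V\log(1/\delta)}+\frac{R}{3}\log(1/\delta)$, which gives $\frac{2B}{3}$. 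If you instead invert the commonly quoted tail $\exp\{-t^2/(2(V+Rt/3))\}$, you only get $\frac{4B}{3}\log(1/\delta)$.

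Alternatively, write $d_m=A_m-\mathbb{E}[A_m\mid\mathcal{F}_{m-1}]$ with $A_m=\mathbb{E}[\lVert S\rVert_\rkhs-\lVert S-\xi_m\rVert_\rkhs\mid\mathcal{F}_m]$. Then $\lvert A_m\rvert\le\lVert\xi_m\rVert_\rkhs\le B$. The scalar Bernstein moment argument applied to the uncentered $A_m$ gives the mgf bound with scale $B$, so even the crude inversion yields $\frac{2B}{3}$.

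\emph{Where the Hilbert structure enters.} It is used only in the mean step. The conditional variance bound $\mathbb{E}[d_m^2\mid\mathcal{F}_{m-1}]\le\mathbb{E}[A_m^2\mid\mathcal{F}_{m-1}]\le\mathbb{E}\lVert\xi_m\rVert_\rkhs^2\le\sigma^2$ needs only the triangle inequality and independence. Likewise, the appeal to $(2,1)$-smoothness and Pinelis is beside the point for the Yurinskii-type Doob-martingale argument you actually run. Pinelis's smoothness method bounds $\lVert S\rVert_\rkhs$ directly and yields a bound of a different shape: a factor $2$ in front and no $\sqrt{\sigma^2/n}$ term.
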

\noindent Applying this high probability bound in our context leads to the following proposition:
\begin{proposition}\label{prop:bernstein+representer}
    Let $\tnk$ be the $1-\frac{k}{n}$-quantile of $\lVert X \rVert$. Suppose the kernel $K$ is bounded by $1$, and take $w \colon \R^d \times \R$ any bounded function. With probability no less than $1-\delta$, the following holds:
    \begin{align*}
        &\Big\lVert \frac{1}{k} \sum_{i=1}^n \representer \Phi(X_i,Y_i) \un_{\lVert X_i \rVert \geq \tnk} - \EE[ \representer \Phi(Y,\theta) \; \middle| \; \lVert X \rVert \geq \tnk] \Big\rVert_\rkhs\\
        &\leq  \lVert w \rVert_\infty \left(\sqrt{\frac{2 \log(1/\delta)}{k}} + \frac{1}{\sqrt{k}} + \frac{4\log(1/\delta)}{3k}\right).
    \end{align*}
\end{proposition}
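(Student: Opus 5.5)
We write the left-hand side as a normalized sum of i.i.d.\ centered Hilbert-valued variables and apply Proposition~\ref{Bernsteinhilbert}. Set
\begin{equation*}
\xi_i = \frac{n}{k}\Big( w(\theta(X_i),Y_i)\Phi(\theta(X_i))\un_{\lVert X_i \rVert \geq \tnk} - \EE\big[w(\theta(X),Y)\Phi(\theta(X))\un_{\lVert X \rVert \geq \tnk}\big]\Big),
\end{equation*}
using $\PP(\lVert X\rVert\geq \tnk)=k/n$ (the definition of $\tnk$; if the law of $\lVert X\rVert$ has atoms, use the convention that makes this hold). Then
\begin{equation*}
\EE\big[w\Phi\un_{\lVert X\rVert\geq \tnk}\big]=\tfrac{k}{n}\,\EE\big[w\Phi \given \lVert X\rVert\geq \tnk\big],
\end{equation*}
so the quantity to control is exactly $\normrkhs[\frac1n\sum_{i=1}^n\xi_i]$. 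The $\xi_i$ are i.i.d., Bochner-integrable because $\Phi$ is bounded and measurable, and centered. Separability of $\rkhs$ follows from continuity of $K$ (a continuous kernel on a separable metric space), or one may work directly in $\rkhs_\sphere$.

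The moment bounds come from $\normrkhs[\Phi(x)]=\sqrt{K(x,x)}\leq 1$. First, each raw term satisfies $\normrkhs[w\Phi\un_{\lVert X\rVert\geq\tnk}]\le\lVert w\rVert_\infty$, hence its mean does too, and so $\normrkhs[\xi_i]\le B:=2\frac nk\lVert w\rVert_\infty$. Second, for the variance we use $\EE\normrkhs[Z-\EE Z]^2\le\EE\normrkhs[Z]^2$:
\begin{equation*}
\EE\normrkhs[\xi_i]^2\le\frac{n^2}{k^2}\lVert w\rVert_\infty^2\,\PP(\lVert X\rVert\ge\tnk)=\frac nk\lVert w\rVert_\infty^2=:\sigma^2 .
\end{equation*}
The key point is to use the indicator in this variance bound: the naive bound $(n/k)^2\lVert w\rVert_\infty^2$ would give the wrong rate $1/\sqrt{n}\cdot n/k$ instead of $1/\sqrt{k}$.

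Plugging into Proposition~\ref{Bernsteinhilbert}, the first term is $\sqrt{2\sigma^2\log(1/\delta)/n}=\lVert w\rVert_\infty\sqrt{2\log(1/\delta)/k}$ and the second is $\sqrt{\sigma^2/n}=\lVert w\rVert_\infty/\sqrt k$. The third is $2B\log(1/\delta)/(3n)=4\lVert w\rVert_\infty\log(1/\delta)/(3k)$, which accounts for the factor $4/3$ (the $2$ from centering times the $2/3$ in Bernstein). Together these give exactly the claimed bound.

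The only delicate points are measurability and separability, and the slight abuse of notation in the statement, where $\Phi$ is evaluated at the angle $\theta(X_i)$. Neither affects the computation, so the main obstacle is really just getting the variance proxy with the $k/n$ factor right.
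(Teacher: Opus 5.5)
Your proof is correct and is essentially the paper's argument. The paper applies the Hilbert-space Bernstein inequality to the unscaled centered variables $w\Phi\,\mathbf{1}_{\lVert X_i\rVert\geq t_{n,k}}-\mathbb{E}[w\Phi\,\mathbf{1}_{\lVert X\rVert\geq t_{n,k}}]$ with $B=2\lVert w\rVert_\infty$ and $\sigma^2=\lVert w\rVert_\infty^2\,k/n$, then multiplies by $n/k$ at the end; since the Bernstein bound scales linearly when $B$ and $\sigma$ are scaled together, this is equivalent to your pre-scaling, and your explicit use of $\mathbb{P}(\lVert X\rVert\geq t_{n,k})=k/n$ is a step the paper takes implicitly.
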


\begin{proof}
    We apply Bernstein's inequality (Proposition \ref{Bernsteinhilbert}) with 
    \[
    \xi_i = (\representer \Phi)(Y_i,\theta(X_i)) \un_{\lVert X_i \rVert \geq \tnk} - \EE[ \representer \Phi(Y,\theta) \un_{\lVert X \rVert \geq \tnk}].
    \]
    Note first that these variables are $\rkhs$-valued. Indeed, for any $\omega \in \Omega$, we have
    \begin{equation*}
        (\representer \Phi)(Y_i(\omega), \theta(X_i(\omega)) \un_{\lVert X_i(\omega) \rVert \geq \tnk} = \representer(\theta(X_i(\omega),Y_i(\omega))) \un_{\lVert X_i(\omega) \rVert \geq \tnk}\Phi(\theta(X_i(\omega)))
    \end{equation*}
    which is an element of $\rkhs$ as product of a real number and an element of $\rkhs$. Moreover, for any $\omega \in \Omega$, 
    \begin{equation*}
        \normrkhs[\representer(\theta(X_i(\omega),Y_i(\omega))) \un_{\lVert X_i(\omega) \rVert \geq \tnk} \Phi(\theta(X_i(\omega)))] \leq \lVert \representer \rVert_\infty \normrkhs[\Phi(X_i(\omega)] \leq \lVert \representer \rVert_\infty,
    \end{equation*}
    where the last inequality holds because the kernel is bounded by $1$. Hence, the random variable $\representer \Phi(Y,\theta(X)) \un_{\lVert X \rVert \geq \tnk}$ defines a Bochner-$P$ integrable function, thus $\EE[ \representer \Phi(Y,\theta) \un_{\lVert X \rVert \geq \tnk}]\in \rkhs$. We have $\EE[\xi_i]=0$, and by the triangular inequality, 
    \begin{align*}
        \lVert \xi_i \rVert_\rkhs &\leq \lVert (\representer \Phi)(Y_i,\theta(X_i)) \un_{\lVert X_i \rVert \geq \tnk} \rVert_\rkhs + \lVert \EE[ \representer \Phi(Y,\theta) \un_{\lVert X \rVert \geq \tnk}] \rVert_\rkhs\\
        &\leq \lVert (\representer \Phi)(Y_i,\theta(X_i)) \un_{\lVert X_i \rVert \geq \tnk} \rVert_\rkhs + \EE[ \lVert (\representer \Phi)(Y_i,\theta(X_i)) \un_{\lVert X_i \rVert \geq \tnk} \rVert_\rkhs] \\
        &\leq 2 \lVert w \rVert_\infty,
    \end{align*}
    where we used the fact that, for any $\theta \in \sphere$, $\normrkhs[\Phi(\theta)] \leq 1$. Finally, we have,
    \begin{align*}
    \EE[\normrkhs[\xi_i]^2] 
    &\leq \EE\left[\normrkhs[\representer \Phi(Y \theta(X)) \un_{\lVert X \rVert \geq \tnk}]^2\right]\\ 
    &\leq \lVert \representer \rVert_\infty^2 \EE[\un_{\lVert X_i \rVert \geq \tnk}]\\
    &= \lVert \representer \rVert_\infty^2   \frac{k}{n}.
    \end{align*}
    We can now apply Theorem \ref{Bernsteinhilbert} which yields, with probability no less than $1-\delta$,
    \begin{equation}
        \lVert \frac{1}{n} \sum_{i=1}^n \xi_i \rVert_\rkhs \leq \frac{\sqrt{2 \log(1/\delta) k}}{n}\lVert \representer \rVert_\infty + \frac{\sqrt{k }}{n} \lVert \representer \rVert_\infty + \frac{4 \lVert \representer \rVert_\infty \log(1/\delta)}{3n}.
    \end{equation}
    Multiplying this inequality by $\frac{n}{k}$, we get the result.
\end{proof}
Combining the high probability bound obtained in Proposition \ref{prop:bernstein+representer} with equation \eqref{eq:representer_hstar_htilde}, and using the estimate $\lVert \representer \rVert_\infty \leq \max(\tau,1-\tau)$, we obtain the result stated in Proposition \ref{prop:distance_htilde_hstar}.
\end{proof}
\subsection{Proof of Theorem \ref{thm:variance_term_bound}}
\label{appendix:proof_variance_bound}
The first part of the proof consists in replacing the empirical risk $\emprisk_k$ by the intermediate risk $\riskinter$ and $\hhlambda$ by $\widetilde{h}_\lambda$ the minimizer of $\riskinter(h) + \lambda \normrkhs[h]^2$. We have
    \begin{align*}
        \riskPen_{\tnk}(\hhlambda) &= \lambda \normrkhs[\hhlambda]^2 + \risk_{\tnk}(\hhlambda) \\ 
        &= \empriskPen_k(\hhlambda) + \left(\risk_{\tnk}(\hhlambda) - \emprisk_k(\hhlambda)\right)\\
        &\leq \empriskPen_k(\widetilde{h}_\lambda) + 
        \left(\risk_{\tnk}(\hhlambda) - \emprisk_k(\hhlambda)\right)\\
        &= \riskinterPen_k(\widetilde{h}_\lambda)
        + \left(\risk_{\tnk}(\hhlambda) - \riskinter(\hhlambda)\right)
        + \underbrace{\left(\emprisk_k(\widetilde{h}_\lambda) - \riskinter(\widetilde{h}_\lambda)\right)-\left( \emprisk_k(\hhlambda) - \riskinter(\hhlambda) \right)}_{d_1}.
    \end{align*}
    The term $d_1$ has its absolute value controlled by
    \begin{align*}
        |d_1|
        &=\frac{1}{k} \Big\lvert \sum_{i=1}^n (\pinbl'(\hhlambda(\theta_{(i)}),Y_i) - \pinbl'(\widetilde{h}_\lambda(\theta_{(i)}), Y_i)) (\un_{\lVert X_{i} \rVert \geq \lVert X_{(k)} \rVert}  - \un_{\lVert X_i \rVert \geq \tnk})\Big\rvert \\
        &\leq \frac{\max(\tau,1-\tau) \lVert \hhlambda - \widetilde{h}_\lambda \rVert_\infty}{k} \sum_{i=1}^n \big\lvert \un_{\lVert X_{i} \rVert \geq \lVert X_{(k)} \rVert}  - \un_{\lVert X_i \rVert \geq \tnk} \big\rvert.
    \end{align*}
    From Lemma \ref{lem:distance_hh_htilde}, we have
    \begin{equation*}
    \lVert \hhlambda - \htildelambda \rVert_\infty 
    \leq \lVert \hhlambda - \htildelambda \rVert_\rkhs
    \leq \frac{\max(\tau,1-\tau)}{2\lambda k} \Big\lvert\sum_{i=1}^n\un_{\lVert X_i \rVert \geq \tnk} -k \Big\rvert.
    \end{equation*}
    and, the equality \eqref{eq:comparison_quantile_statistic_order} still holds, that is,
    \begin{equation*}
        \sum_{i=1}^n \lvert \un_{\lVert X_{i} \rVert \geq \lVert X_{(k)} \rVert}  - \un_{\lVert X_i \rVert \geq \tnk} \rvert = \Big\lvert\sum_{i=1}^n\un_{\lVert X_i \rVert \geq \tnk} -k \Big\rvert.
    \end{equation*}
    Replacing in the inequality for $\lvert d_1 \rvert$, we obtain
    \begin{equation} \label{eq:bound_d1}
        |d_1| \leq  \frac{\max(\tau,1-\tau)^2}{2 \lambda k^2} \Big\lvert\sum_{i=1}^n\un_{\lVert X_i \rVert \geq \tnk} -k \Big\rvert^2.
    \end{equation}
    Now that we have bounded $d_1$, we can proceed with the decomposition of the penalized risk and replace the intermediate SVM solution $\htildelambda$ by the true SVM solution $h^*_{\lambda,\tnk}$. Using the fact that $\htildelambda$ minimizes the penalized pseudo-empirical risk yields
    \begin{align*}
        & \riskinterPen_k(\widetilde{h}_\lambda)
        + \left(\risk_{\tnk}(\hhlambda) - \riskinter(\hhlambda)\right) \\
        & \leq \riskinterPen_k(h^*_{\lambda,\tnk}) 
        + \big(\risk_{\tnk}(\hhlambda) - \riskinter(\hhlambda)\big)\\
        &=\riskPen_{\tnk}(h^*_{\lambda,\tnk}) + \underbrace{\big(\riskinter(h^*_{\lambda,\tnk}) - \risk_{\tnk}(h^*_{\lambda,\tnk})\big) - \big( \riskinter(\hhlambda) - \risk_{\tnk}(\hhlambda)\big)}_{d_2}
    \end{align*}  
    We now bound with high probability both terms $d_1$ and $d_2$. Combining Lemma \ref{lem:distance_hh_htilde} and Proposition \ref{prop:distance_htilde_hstar} we obtain that, with probability no less than $1-3\delta$, we have simultaneously
    \begin{equation} \label{eq:high_proba_bound_estimate_quantile}
        \Big\lvert\sum_{i=1}^n\un_{\lVert X_i \rVert \geq \tnk} -k \Big\rvert \leq \sqrt{2k\log(1/\delta)} + \frac{\log(1/\delta)}{3}
    \end{equation}
    and
    \begin{equation}\label{eq:def_rho}
        \lVert \hhlambda - h^*_{\lambda,\tnk} \rVert_\rkhs \leq  
        \frac{\max(\tau,1-\tau)}{\lambda}\left(\frac{1}{\sqrt{k}} + 3\sqrt{\frac{\log(1/\delta)}{2k}} + \frac{3\log(1/\delta)}{2k}\right) = \frac{\max(\tau,1-\tau)}{\lambda \sqrt{k}}C(k,\delta) = \rho(\lambda,k,\delta).
    \end{equation}
    Combining \eqref{eq:bound_d1} with \eqref{eq:high_proba_bound_estimate_quantile}, we obtain that, on this high probability event,
    \begin{equation*}
        d_1 \leq \frac{\max(\tau,1-\tau)^2}{\lambda k} \Big(\sqrt{\frac{\log(1/\delta)}{2}} + \frac{\log(1/\delta)}{3\sqrt{k}}\Big)^2 = D_1(\lambda,k,\delta).
    \end{equation*}
    Leveraging the bound on the distance $\lVert \hhlambda - h^*_{\lambda,\tnk} \rVert_\rkhs$ provided by \eqref{eq:def_rho} also yields
    \begin{equation}\label{eq:C11}
        \begin{aligned}
            d_2 &= \big(\riskinter(h^*_{\lambda,\tnk}) - \risk_{\tnk}(h^*_{\lambda,\tnk})\big) - \big( \riskinter(\hhlambda) - \risk_{\tnk}(\hhlambda)\big)\\
        &\leq \sup_{h \in \ball_\rkhs(h^*_{\lambda,\tnk},\rho(\lambda,k,\delta))} \big\lvert \big(\riskinter(h^*_{\lambda,\tnk}) - \risk_{\tnk}(h^*_{\lambda,\tnk})\big) - \big( \riskinter(h) - \risk_{\tnk}(h)\big) \big\rvert.
        \end{aligned}
    \end{equation}
    It remains to provide a high probability bound for the right-hand side of the last inequality. Denote, for $h \in \ball_\rkhs(h^*_{\lambda,\tnk},\rho(\lambda,k,\delta))$,
    \begin{equation*}
        Z_h = \big(\pinbl'(Y_i,h^*_{\lambda,\tnk}(X_i)) - \pinbl'(Y_i,h(X_i))\big) \un \{\lVert X_i \rVert \geq \tnk\},
    \end{equation*}
 Equation~\eqref{eq:C11} is equivalent to
    \begin{equation*}
d_2\leq  \sup_{h \in \ball_\rkhs(h^*_{\lambda,\tnk},\rho(\lambda,k,\delta))} \Big\lvert \frac{1}{k}\sum_{i=1}^n \big(Z_h - \EE[Z_h]\big) \Big\rvert.
    \end{equation*}
    Define
    \begin{equation*}
        \psi_{n,k,\delta} = \EE\left[ \sup_{h \in \ball_\rkhs(h^*_{\lambda,\tnk},\rho(\lambda,k,\delta))} \Big\lvert \frac{1}{k}\sum_{i=1}^n \big(Z_h - \EE[Z_h]\big) \Big\rvert\right].
    \end{equation*}
    We have, for any $h \in \ball_\rkhs(h^*_{\lambda,\tnk},\rho(\lambda,k,\delta))$,
    \begin{equation*}
        \big \lvert Z_h - \EE[Z_h] \big \rvert \leq 2\max(\tau,1-\tau) \rho(\lambda,k,\delta)
    \end{equation*}
    and
    \begin{align*}
        \EE\left[(Z_h - \EE[Z_h])^2\right] &= 
        \EE\big((\pinbl'(Y_i,h^*_{\lambda,\tnk}(X_i)) - \pinbl'(Y_i,h(X_i)))^2 \un \{\lVert X_i \rVert \geq \tnk\}\big)\\
        &\leq \max(\tau,1-\tau)^2\rho(\lambda,k,\delta)^2 \EE[\un \{\lVert X_i \rVert \geq \tnk\}]\\
        &= \max(\tau,1-\tau)^2\rho(\lambda,k,\delta)^2 \frac{k}{n}.
    \end{align*}
    Note that the set of functions we are considering is $\ball_{\rkhs}(h^*_{\lambda,\tnk},\rho(\lambda,k,\delta))$, which is separable by separability of the RKHS $\rkhs$ (see for instance Lemma 4.33 in \cite{steinwart2011estimating} with the continuity of the kernel $k$) and the metric induced by the norm $\lVert \cdot \rVert_\rkhs$ dominates the pointwise convergence by the reproducing property of $\rkhs$. Hence, the set $(h^*_{\lambda,\tnk},\rho(\lambda,k,\delta))$ is pointwise measurable, and Talagrand's inequality may be applied. It follows that, from Talagrand's inequality (see Theorem 7.5 in \cite{christmann2008support} with $\gamma = \frac{1}{2}$), with probability $1-\delta$,
    \begin{equation*}
    \begin{aligned}
        \sup_{h \in \ball_\rkhs(h^*_{\lambda,\tnk},\rho(\lambda,k,\delta))} \Big\lvert \sum_{i=1}^n Z_h - \EE[Z_h] \Big\rvert 
        \leq \frac{3}{2} k \psi_{n,k,\delta}
        + \max(\tau,1-\tau) \rho(\lambda,k,\delta) \left(\sqrt{2k\log(1/\delta)} 
        + \frac{16 \log(1/\delta)}{3}\right)
    \end{aligned}
    \end{equation*}
    Multiplying both sides by $\frac{1}{k}$, we obtain that, on the event with probability $1-4\delta$ where the previous inequality, \eqref{eq:high_proba_bound_estimate_quantile}, and \eqref{eq:def_rho}  hold simultaneously, we have
    \begin{equation}\label{Talagrand_bound_with_psi}
        d_2 \leq \frac{3}{2}\psi_{n,k,\delta}
        + \max(\tau,1-\tau) \rho(\lambda,k,\delta) \left(\sqrt{\frac{2\log(1/\delta)} {k}}
        + \frac{16 \log(1/\delta)}{3k}\right).
    \end{equation}
We now bound the expectation $\psi_{n,k,\delta}$.
To this end, let $\sigma = (\sigma_1, \dots, \sigma_n)$ be a sequence of independent Rademacher variables. We introduce the Rademacher average
\begin{equation*}
\begin{aligned}
    \risk_k^{\sigma} &= \frac{1}{k} \EE_\sigma \Big[\sup_{\normrkhs[h - h^*_{\lambda,\tnk}] \leq \rho(\lambda,k,\delta)}\sum_{i=1}^n\sigma_i\big(\pinbl'(Y_i,h^*_{\lambda,\tnk}(X_i)) - \pinbl'(Y_i,h(X_i))\big) \un_{\lVert X_i \rVert \geq \tnk} \Big]\\
    &= \frac{1}{k} \EE_\sigma \Big[\sup_{\normrkhs[h - h^*_{\lambda,\tnk}] \leq \rho(\lambda,k,\delta)}\sum_{i=1}^n\sigma_i\big(\pinbl(Y_i,h^*_{\lambda,\tnk}(X_i)) - \pinbl(Y_i,h(X_i))\big) \un_{\lVert X_i \rVert \geq \tnk} \Big]
\end{aligned}
\end{equation*}
where we recall that $\pinbl$ is the non-modified pinball loss.
Following the classical symmetrization argument, we have
\begin{equation} \label{eq:symmetrization_bound_psi}
    \psi_{n,k,\delta} = \EE \Big[\sup_{\normrkhs[h - h^*_{\lambda,\tnk}] \leq \rho(\lambda,k,\delta)} \Big\lvert\big(\riskinter(h^*_{\lambda,\tnk}) - \risk_{\tnk}(h^*_{\lambda,\tnk})\big) - \big( \riskinter(h) - \risk_{\tnk}(h)\big)\Big\rvert \Big] \leq 2 \EE[\risk_k^{\sigma} ].
\end{equation} 
Now, by symmetry of the Rademacher variables  $\sigma_i$,
\begin{equation*}
    \EE_\sigma \Big[\sum_{i=1}^n \sigma_i\pinbl(Y_i,h^*_{\lambda,\tnk}(X_i)) \Big] = 0
\end{equation*}
thus
\begin{equation*}
    \risk_k^{\sigma} =  \frac{1}{k} \EE_\sigma \Big[\sup_{\normrkhs[h - h^*_{\lambda,\tnk}] \leq \rho(\lambda,k,\delta)} \sum_{i=1}^n \sigma_i \pinbl(Y_i,h(X_i))\big) \un_{\lVert X_i \rVert \geq \tnk } \Big]
\end{equation*}
Now, by the conditioning lemma from \cite{lhaut2022uniform} (see Lemma 2.1), we have the equality in distribution
\begin{equation*}
    \sum_{i=1}^n\sigma_i \pinbl(Y_i,h(X_i)) \un \{\lVert X_i \rVert \geq \tnk \} 
    \overset{d}{=} \sum_{i=1}^{\mathcal{K}} \sigma_i \pinbl(Y_i^k,h(X_i^k))
\end{equation*}
where $((X_1^k,Y_1^k),\dots,(X_n^k,Y_n^k))$ are independent variables with distribution $\law\left((X_i,Y_i) \given \lVert X_i \rVert \geq \tnk\right)$ and $\mathcal{K} \sim Bin(n,\frac{k}{n})$ is independent from the $X_i^k,Y_i^k,\sigma_i$. We obtain
\begin{equation} \label{thm_proof:conditioning_in_expectation}
    \begin{aligned}
    \EE [\risk_k^{\sigma} ] &= \frac{1}{k}\EE \Big[ \EE_\sigma \Big[\sup_{\normrkhs[h - h^*_{\lambda,\tnk}] \leq \rho(\lambda,k,\delta)} \sum_{i=1}^{\mathcal{K}} \sigma_i  \pinbl(Y_i^k,h(X_i^k)) \Big] \Big]\\
    &= \frac{1}{k}\EE \left[ \condK \EE \left[\EE_\sigma \left[ \frac{1}{\condK}\sup_{\normrkhs[h - h^*_{\lambda,\tnk}] \leq \rho(\lambda,k,\delta)} \sum_{i=1}^{\mathcal{K}} \sigma_i \pinbl(Y_i^k,h(X_i^k)) \given \mathcal{K}\right] \right]\right].
    \end{aligned}
\end{equation}
We now provide a bound for 
\begin{equation*}
    \EE_\sigma \left[ \frac{1}{\condK} \sup_{\normrkhs[h - h^*_{\lambda,\tnk}] \leq \rho(\lambda,k,\delta)} \sum_{i=1}^{\mathcal{K}} \sigma_i \pinbl(Y_i^k,h(X_i^k)) \given \mathcal{K}\right].
\end{equation*}
Now, notice that the pinball loss is distance based. More precisely, we have
\begin{equation*}
    \pinbl(Y_i^k,h(X_i^k)) = g_\tau(h(X_i^k) - Y_i^k)
\end{equation*}
where $g_\tau$ is defined by $g_\tau(x) = (1-\tau) \un_{x \geq 0} - \tau x \un_{x<0}$. $g_\tau$ is $\max(\tau,1-\tau)$ Lipschitz, hence, by the contraction lemma (see \cite{ledoux2013probability}, see also Theorem 5.7 in \cite{mohri2018foundations}),
\begin{align*}
    &\EE_\sigma \left[ \frac{1}{\condK} \sup_{\normrkhs[h - h^*_{\lambda,\tnk}] \leq \rho(\lambda,k,\delta)} \sum_{i=1}^{\mathcal{K}} \sigma_i \pinbl(Y_i^k,h(X_i^k)) \given \mathcal{K}\right]\\
    &\leq \max(\tau,1-\tau) \EE_\sigma \left[ \frac{1}{\condK} \sup_{\normrkhs[h - h^*_{\lambda,\tnk}] \leq \rho(\lambda,k,\delta)} \sum_{i=1}^{\mathcal{K}} \sigma_i (h(X_i^k) - Y_i^k) \given \mathcal{K}\right]\\
    &= \max(\tau,1-\tau) \EE_\sigma \left[ \frac{1}{\condK} \sup_{\normrkhs[h - h^*_{\lambda,\tnk}] \leq \rho(\lambda,k,\delta)} \sum_{i=1}^{\mathcal{K}} \sigma_i (h(X_i^k) - h^*_{\lambda,\tnk}(X_i^k))\given \mathcal{K}\right]\\
    &= \max(\tau,1-\tau) \EE_\sigma \left[\frac{1}{\condK}\sup_{h \in \ball_{\rkhs}(\rho(\lambda,k,\delta))} \sum_{i=1}^{\mathcal{K}} \sigma_i h(X_i^k) \given \mathcal{K} \right]
\end{align*}
where the second equality holds once again because of the symmetry of the Rademacher variables. Now, it holds from Proposition 6.12 in \cite{mohri2018foundations} that
\begin{equation*}
    \EE[\frac{1}{\condK}\sup_{h \in \ball_{\rkhs}(\rho(\lambda,k,\delta))} \sum_{i=1}^{\mathcal{K}} \sigma_i h(X_i^k) \given \mathcal{K}] \leq \frac{\rho(\lambda,k,\delta)}{\sqrt{\condK}}.
\end{equation*}
Thus, we obtain, from \ref{thm_proof:conditioning_in_expectation} and then applying Jensen's inequality,
\begin{align*}
    \EE[\risk_k^{\sigma}] &\leq \frac{\max(\tau,1-\tau)}{k} \rho(\lambda,k,\delta) \EE[\sqrt{\condK}] \\
    &\leq \frac{\max(\tau,1-\tau)}{k} \rho(\lambda,k,\delta) \sqrt{\EE[\condK]}\\
    &= \frac{\max(\tau,1-\tau)}{\sqrt{k}} \rho(\lambda,k,\delta).
\end{align*}
Finally, replacing the bound we obtain for the Rademacher average in \eqref{eq:symmetrization_bound_psi}, we get
\begin{equation}
    \psi_{n,k,\delta} \leq  \frac{2 \max(\tau,1-\tau)}{\sqrt{k}} \rho(\lambda,k,\delta).
\end{equation}
Combining this inequality with \eqref{Talagrand_bound_with_psi}, we obtain that, on an event with probability no less than $1- 4\delta$,

\begin{align*}
    d_2 &\leq \frac{\max(\tau,1-\tau) \rho(\lambda,k,\delta)}{\sqrt{k}} \left( 3 + \sqrt{\frac{2\log(1/\delta)} {k}}
        + \frac{16 \log(1/\delta)}{3k}\right)\\
        &= \frac{\max(\tau,1-\tau)^2}{\lambda k} C(k,\delta) \Big( 3 + \sqrt{\frac{2\log(1/\delta)} {k}}
        + \frac{16 \log(1/\delta)}{3k}\Big).
\end{align*}

\bibliographystyle{apalike}
\bibliography{biblio.bib}

\end{document}